\documentclass[11pt]{amsart}
\usepackage{hyperref}
\usepackage{halloweenmath}
\usepackage{amsmath,amsfonts,amssymb,amscd,amsthm,amsbsy,amsxtra,bbm,bm, epsf,calc,comment,appendix, xcolor}
\usepackage{color}
\usepackage{datetime}
\usepackage{latexsym}
\usepackage[english]{babel}
\usepackage{enumerate}
\usepackage{graphicx}
\usepackage{epsfig}
\usepackage{dsfont}
\usepackage{tikz}

%%%%%%%  PAGE STYLE/SIZING  %%%%%%%%%%%%
\addtolength{\oddsidemargin}{-.68in}
	\addtolength{\evensidemargin}{-.68in}
	\addtolength{\textwidth}{1.36in}

	\addtolength{\topmargin}{-.7in}
	\addtolength{\textheight}{0.8in}

\setcounter{section}{0}
\numberwithin{equation}{section}
\newtheorem{theorem}{Theorem}[section]
\newtheorem{lemma}[theorem]{Lemma}
\newtheorem{proposition}[theorem]{Proposition}
\newtheorem{corollary}[theorem]{Corollary}
\newtheorem{example}[theorem]{Example}
\newtheorem{assumption}[theorem]{Assumption}

\theoremstyle{remark}
\newtheorem{remark}[theorem]{Remark}

\theoremstyle{definition}

\usepackage{amsthm}% http://ctan.org/pkg/amsthm
\usepackage{hyperref}% http://ctan.org/pkg/hyperref
\hypersetup
{colorlinks = true,
linkcolor = red, 
anchorcolor = red, 
citecolor = blue, 
filecolor = blue,
urlcolor = blue}
\usepackage{hypcap}
\usepackage{cleveref}% http://ctan.org/pkg/cleveref
\usepackage{lipsum}% http://ctan.org/pkg/lipsum

\usepackage{enumerate}
\usepackage{enumitem}
%%%%% With these you can let the enumerate environment know what type of lists you prefer (eg. (a), (b), (c); or maybe [i], [ii], [iii], [iv], etc.)

\def\R{{ \mathbb{R}}}
\def\X{{ \mathcal{X}}}

\def\Z{\mathcal{Z}}
\def\N{\mathbb{N}}
\def\II{{\rm I\kern-0.5exI}}
\def\III{{\rm I\kern-0.5exI\kern-0.5exI}}

% == Basic commands == %

% Sets

\renewcommand{\c}{\mathbf{c}}
\newcommand{\RR}{\mathbb{R}}

\newcommand{\ZZ}{\mathbb{Z}}

\newcommand{\veps}{\varepsilon}

\definecolor{mygreen}{rgb}{0.1,0.75,0.2}

% Operators

\DeclareMathOperator*{\argmin}{argmin}

% Characteristic function of a set:
\DeclareSymbolFont{bbold}{U}{bbold}{m}{n}
\DeclareSymbolFontAlphabet{\mathbbold}{bbold}

% Norms

% Document-specific

\newcommand{\vp}{\varphi}

\newcommand{\spt}{\textup{spt}}

\title[MOT and Adversarial multiclass classification]{The multimarginal optimal transport formulation of adversarial multiclass classification}

%%%%%%%%%%%%%%%%%%%%%%%
\author{Nicol\'as {Garc\'ia Trillos}}
\address{Department of Statistics, University of Wisconsin-Madison, 1300 University Avenue, Madison, Wisconsin 53706, USA.}
    \email{garciatrillo@wisc.edu}
%%%%%%%%%%%%%%%%%%%%%%%
\author{Matt Jacobs}
\address{Department of Mathematics, 
Purdue University, 
Mathematical Sciences Bldg, 150 N University St, West Lafayette, IN 47907}
\email{jacob225@purdue.edu}
%%%%%%%%%%%%%%%%%%%%%%%%%%%%%
\author{Jakwang Kim}
\address{Department of Statistics, University of Wisconsin-Madison, 1300 University Avenue, Madison, Wisconsin 53706, USA.}
\email{kim836@wisc.edu}

\date{\today}

%\subjclass[2010]{ }

\keywords{Adversarial learning, Multiclass classification, Optimal transport, Multimarginal optimal transport, Wasserstein barycenter, Generalized barycenter problem}

\begin{document}

\thanks{{\bf Acknowledgements:}
All authors contributed equally and their names are listed in alphabetic order. This paper is a preprint version of a paper that will appear in JMLR. Part of this work was completed while the authors were visiting the Simons Institute to participate in the program ``Geometric Methods in Optimization and Sampling" during the Fall of 2021. The authors would like to thank the institute for hospitality and support. The authors would also like to thank Camilo A. Garc\'ia Trillos, Ryan Murray, and Meyer Scetbon for enlightening conversations on the topics discussed in this work. NGT was supported by NSF-DMS grant 2005797, and together with JK would also like to thank the IFDS at UW-Madison and NSF through TRIPODS grant 2023239 for their support.}

\begin{abstract}

We study a family of adversarial multiclass classification problems and provide equivalent reformulations in terms of: 1) a family of generalized barycenter problems introduced in the paper and 2) a family of multimarginal optimal transport problems where the number of marginals is equal to the number of classes in the original classification problem. These new theoretical results reveal a rich geometric structure of adversarial learning problems in multiclass classification and extend recent results restricted to the binary classification setting. A direct computational implication of our results is that by solving either the barycenter problem and its dual, or the MOT problem and its dual, we can recover the optimal robust classification rule and the optimal adversarial strategy for the original adversarial problem. Examples with synthetic and real data illustrate our results.

\end{abstract}

\maketitle

\section{Introduction}

In this paper we study, from analytical and geometric perspectives, the problem of adversarial learning in multiclass classification. By multiclass classification we mean the task of assigning classes $\hat i$ in a set of $K$ available classes to all inputs $\hat x$ in some feature space $\X$ based on the observation of training pairs $z=(x,i)$. The adversarial component of the problem refers to the desire of producing classification rules that are \textit{robust} to data perturbations. Mathematically speaking, this means studying optimization problems of the form:
\begin{equation}
  \label{Robust problem:Intro}
  \inf_{f \in \mathcal{F}} \sup_{\widetilde{\mu} \in \mathcal{P}(\Z) }  \left\{ R(f, \widetilde{\mu}) - C(\mu, \widetilde{\mu}) \right\}.
\end{equation}
Here, $\mathcal{F}$ denotes the set of \textit{all} probabilistic multiclass classifiers —see section \ref{sec:Preliminaries}; $\mu$ denotes the observed data distribution, which in general is some probability measure on the  space $\Z = \X \times \{1, \dots, K \}$, but which for simplicity can be thought of as an empirical measure associated to a finite training data set; $C$ represents a notion of ``distance'' between data distributions; $R(f, \widetilde{\mu})$ is a risk functional relative to a data distribution $\widetilde{\mu}$ (thought of as a perturbation of $\mu$) and a choice of loss function, which in this paper will be restricted to be the $0$-$1$ loss. Problem \eqref{Robust problem:Intro} can be interpreted as a game between a \textit{learner} and an \textit{adversary}: the learner's goal is to find a classifier with small risk, while the adversary tries to find a data perturbation $\widetilde{\mu}$ that makes the risk for the learner large. The adversary has an implicit budget to perform their actions: the adversary can not choose a $\widetilde{\mu}$ that is too far away (relative to $C$) from the original data distribution $\mu$.

For a large family of functionals $C$ in \eqref{Robust problem:Intro} we show that the adversarial problem \eqref{Robust problem:Intro} is equivalent to a \textit{multimarginal optimal transport problem (MOT)} of the form:
\begin{equation}\label{Robust problem:Alternative_form}
   \inf_{\pi \in \Pi_K(\mu)} \int \c (z_1, \dots, z_K) d\pi(z_1, \dots, z_K),
\end{equation}
where $\c$ is a cost function discussed in detail throughout the paper and $\Pi_{K}(\mu)$ is a space of couplings specified in section \ref{sec:MOT}. As part of this equivalence, we explicitly describe how to construct solutions to the original problem \eqref{Robust problem:Intro} from solutions to the problem \eqref{Robust problem:Alternative_form} and its dual, offering in this way new computational strategies for solving problem \eqref{Robust problem:Intro}. Since most algorithms for OT are primal-dual (i.e., they simultaneously search for solutions to both the primal OT problem and its dual), it is actually possible to construct a saddle solution $(f^*, \mu^*)$ for \eqref{Robust problem:Intro} by running one such OT algorithm. The equivalence between \eqref{Robust problem:Intro} and \eqref{Robust problem:Alternative_form} that we study here is an extension to the multi-class case of a series of recent results connecting adversarial learning in binary classification with optimal transport: \cite{Bhagoji2019LowerBO, Nakkiran2019AdversarialRM, Pydi2021AdversarialRV, pydi2021the, Trillos2020AdversarialCN}.

In order to establish the equivalence between \eqref{Robust problem:Intro} and \eqref{Robust problem:Alternative_form}, we develop another interesting equivalent reformulation of \eqref{Robust problem:Intro} that reveals a rich geometric structure of the original adversarial problem. This reformulation takes the form of a \textit{generalized barycenter problem}
\begin{equation*}
    \inf_{\lambda, \widetilde{\mu}_1, \ldots, \widetilde{\mu}_K} \lambda(\mathcal{X})+\sum_{i \in [K]} C(\mu_i, \widetilde{\mu}_i) \quad \textup{s.t.}\; \lambda \geq \widetilde{\mu}_i, i\in [K],
\end{equation*}
which is a novel variant of the Wasserstein barycenter problems 
introduced in \cite{MR2801182,Carlier2010MatchingFT}.
In the classical Wasserstein barycenter problem, given $K$ \textit{probability} measures $\varrho_1, \ldots, \varrho_K$ defined over a Polish space $\mathcal{X}$ and a cost $c:\mathcal{X} \times \mathcal{X}\to [0,\infty]$, one tries to find a probability measure $\varrho$ such that the summed cost of transporting each of the $\varrho_i$ onto $\varrho$ is as small as possible. In our generalized problem, we try to find a nonnegative measure $\lambda$ (no longer necessarily a probability measure)  such that the total mass of $\lambda$ plus the summed cost of transporting each $\mu_i$ (not necessarily having the same total mass) onto \emph{some part} of $\lambda$ is as small as possible.  Here transporting a $\mu_i$ onto some part of $\lambda$ means we want to find a measure $\widetilde{\mu}_i\leq \lambda$ and transport $\mu_i$ to $\widetilde{\mu}_i$ in the classical optimal transport sense. This problem will be studied in detail in section \ref{sec:GenBar}.  We prove that these generalized barycenter problems can be written as appropriate MOT problems, a result that is analogous to ones in \cite{MR2801182,Carlier2010MatchingFT} for standard Wasserstein barycenter problems.

From the equivalence with the generalized barycenter problem we will be able to deduce that optimal adversarial attacks can always be obtained as suitable barycenters of $K$ or less points in the original training data set. Also, from this reformulation we will be able to recognize the structure of the cost function $\c$ in \eqref{Robust problem:Alternative_form}: for the adversary to obtain their optimal strategy, they can actually \textit{localize} their problem to sets of $K$ or fewer data points —see section \ref{sec:MOT}.     Other theoretical, methodological, and computational implications of these reformulations will be pursued in future work. See section \ref{sec:Conclusions} for a discussion on future directions for research.

In contrast to many of the existing applications of OT to ML, it is worth emphasizing that in this work OT arises naturally in connection with a learning problem, rather than as a particular way to address a certain machine learning task. For the growing literature in multimarginal optimal transportation this paper offers new examples of cost functions worthy of study. MOT is a rich topic that has been developed over the years from theoretical and applied perspectives. After the first mathematical analysis of general MOT problems in \cite{Gangbo1998OptimalMF}, there have been numerous subsequent papers establishing geometric and analytic results (e.g., \cite{Kim2013MultimarginalOT, MR3423275, MR3482267, MR3716857}) for MOT problems. MOT problems have also been used extensively in applications. For example, they appear in the so-called density functional theory in physics \cite{seidl2007strictly, buttazzo2012optimal, MR3020313, mendl2013kantorovich, MR3314838}, and in economics \cite{MR2110613, MR2564439,  Carlier2010MatchingFT}. In the machine learning community, researchers have recently explored many interesting applications, including generative adversarial networks(GANs) \cite{choi2018stargan, cao2019multi} and Wasserstein Barycenters \cite{MR2801182, cuturi2014fast, benamou2015iterative, MR3423268, srivastava2018scalable, delon2020wasserstein}, where MOTs are used. Recent works like \cite{di2020optimal, haasler2021multimarginal} develop a connection between the Schr\"{o}dinger bridge problem and MOT. MOT problems have been extended to the unbalanced setting —see  \cite{beier2021unbalanced}.

\subsection{Outline of paper}
The rest of the paper is organized as follows. In section \ref{sec:Preliminaries}, we introduce most mathematical objects and notation used throughout the rest of the paper. We also introduce the generalized Wasserstein barycenter problem, which can be interpreted as dual of the original adversarial problem \eqref{Robust problem:Intro}, and define in detail the MOT problem \eqref{Robust problem:Alternative_form}. In section \ref{sec:GenBar}, we study the aforementioned generalized Wasserstein barycenter problem and prove its equivalence with 1) a stratified barycenter problem and 2) a first version of an MOT problem. In section \ref{sec:ProofMainTheorem} we discuss the equivalence between \eqref{Robust problem:Intro} and \eqref{Robust problem:Alternative_form} through the duality results in earlier sections. In section \ref{sec:Examples}, we present a collection of examples and numerical experiments whose goal is to illustrate the theory developed throughout the paper and provide further insights into the geometric structure of adversarial learning in multiclass classification. Finally, we wrap-up the paper in section \ref{sec:Conclusions} by presenting some conclusions and discussing some future directions for research.

\section{Preliminaries}
\label{sec:Preliminaries}

Throughout the paper $(\mathcal{X}, d)$ will be a Polish space, $[K]:= \{1, \dots, K\}$ with $K \geq 2$, and $\mathcal{Z}:= \mathcal{X} \times [K]$. We regard $\mathcal{X}$ as the feature space of our learning problem and $[K]$ as the set of classes or labels.

Let $\mu$ be a finite positive Borel measure (not necessarily a probability measure) over $\Z$. Given $i\in [K]$, we use $\mu_i$ to represent the positive measure over $\X$ defined as
\begin{equation}
\mu_i(A) := \mu\left( A \times \{ i\} \right),
\label{eqn:decomposition}
\end{equation}
for all measurable subsets $A$ of $\X$. In the sequel, we use $\mu$ to represent a fixed data distribution, which we regard as an observed data distribution or training data distribution, and use $\widetilde{\mu}$ to represent any other arbitrary finite positive measure over $\Z$. Through this paper we use $\mathcal{M}_+(\mathcal{X})$ and $\mathcal{M}_+(\mathcal{Z})$ to denote the set of finite positive (Borel) measures over $\mathcal{X}$ and $\mathcal{Z}$, respectively.

Through this paper, the cost function in \eqref{Robust problem:Intro} will take the form:
 \[ 
C(\mu, \widetilde{\mu} ) := \min_{\pi \in \Gamma (\mu, \widetilde{\mu})} \int c_{\Z}(z, \tilde z) d\pi(z, \tilde z),  
\]
for some cost function $c_{\Z}: \Z \times \Z \rightarrow [0,\infty]$. Here and in the remainder of the paper the set $\Gamma(\cdot, \cdot)$ represents the set of couplings between two positive measures over the same space; for example, $\Gamma (\mu, \widetilde{\mu})$ denotes the set of positive measures over $\Z \times \Z$ with first marginal equal to $\mu$ and second marginal equal to $\widetilde{\mu}$. 

\begin{assumption}
\label{assump:CostStructure}
The function $c_\Z$ will be assumed to have the following structure:
\[ c_\Z (z, \tilde z) = \begin{cases} c(x, \tilde x) & \quad \text{ if } i = \tilde i \\ \infty  & \quad \text{ if } i \not = \tilde i,    \end{cases}   \]
for some lower semi-continuous function $c : \X \times \X \rightarrow [0, \infty]$. 

The function $c$ will be further assumed to satisfy $c(x,x)=0$ for all $x \in \mathcal{X}$ and the following compactness and coercivity properties:
\begin{itemize}
    \item if $\{x_n\}_{n \in \N}$ is a bounded sequence in $(\X,d)$ and $\{ x_n' \}_{n \in \N}$ is a sequence in $\X$ satisfying $\sup_{n \in \N} c(x_n', x_n) <\infty$, then $\{(x_n', x_n) \}_{n \in \N}$ is precompact in $\X \times \X$ (with the induced product metric). 
\end{itemize}
\end{assumption}

The structure of $c_\Z$ described in Assumption \ref{assump:CostStructure} is standard in the literature of adversarial learning and can be motivated by the fact that in many applications of interest it is natural to think that the ``true" label associated to a perturbation $\tilde x$ of a data point $x$ coincides with the true label of the original $x$.  Naturally, this is simply a modeling choice, and other cost structures of interest can be studied elsewhere. The lower semicontinuity and compactness assumptions on $c$ are technical requirements that we use in the remainder. All cost functions of interest satisfy these properties —see the examples below.

If we decompose $\mu$ and $\widetilde{\mu}$ into measures $\mu_i, \widetilde{\mu}_i$ as in \eqref{eqn:decomposition}, it is possible to write $C(\mu, \widetilde{\mu})$ as
\[ C(\mu, \widetilde{\mu})= \sum_{i \in [K]} C(\mu_i, \widetilde{\mu}_i), \]
abusing notation slightly and interpreting $C(\mu_i, \widetilde{\mu}_i)$ as

\begin{equation}
C(\mu_i, \widetilde{\mu}_i) =  \min_{\pi \in \Gamma (\mu_i, \widetilde{\mu}_i)} \int c (x, \tilde x) d\pi(x, \tilde x).
\label{eqn:OTProblemMu_i}
\end{equation}

\begin{remark}
Let us emphasize that we define $C( \mu_i, \widetilde{\mu}_i)=+\infty$ whenever the set of couplings $\Gamma(\widetilde{\mu}_i, \mu_i)$ is empty, which is the case if $\mu_i$ and $\widetilde{\mu}_i$ have different total mass.
\end{remark}

We introduce two notions that will be used throughout our analysis. Given a lower semi-continuous function $f:\mathcal{X}\to\RR$, we define
\begin{equation}\label{eq:c_transform}
f^c(x):=\inf_{x'\in \mathcal{X}} \{ f(x') + c(x',x) \},
\end{equation}
and given an upper semi-continuous function $g:\mathcal{X}\to\RR$, we define
\begin{equation}\label{eq:conjugate_c_transform}
g^{\bar{c}}(x'):=\sup_{x \in \mathcal{X}} \{ g(x) - c(x',x) \}.
\end{equation}

\begin{example} \label{ex : infty_OT cost}
Let $\veps > 0 $ and let $c(x, \tilde x)$ be given by
\[ 
c(x, \tilde x) = c_\veps(x, \tilde x) = 
\begin{cases} 
0 &\text{ if } d(x, \tilde x) \leq \veps \\ 
\infty & \text{ if } d(x, \tilde x) >\veps    
\end{cases}. 
\]
The parameter $\veps$ can be interpreted as the \textit{adversarial budget}: the larger the value of $\veps$, the wider the space of actions available to the adversary. The cost $c$ satisfies \textbf{Assumption} \ref{assump:CostStructure} provided that closed balls with finite radius in $(\X, d)$ are compact. 

Notice that in this case, the $c$-transform $f^c$ of a given function $f$ takes the form:
\[ 
f^c(x) = \inf_{x'\: : \: d(x,x')\leq \veps} f(x'). 
\]

In this setting, the adversarial problem \eqref{Robust problem:Intro} can be written as
\[ 
\inf_{f \in \mathcal{F}} \sup_{\widetilde{\mu} \: : \: W_\infty(\mu, \widetilde{\mu}) \leq \veps} R(f , \widetilde{\mu}).   
\]
where $W_\infty(\mu, \widetilde{\mu})$ is the $\infty$-OT distance between $\mu$ and $\widetilde{\mu}$ relative to the distance function:
\[ 
\delta(z, \tilde z) := \begin{cases} d(x, \tilde x) & \quad  \text{ if } y=\tilde y, \\ \infty  & \quad  \text{otherwise}. \end{cases} 
\]

\label{example:epsilonAT}
\end{example}

\begin{remark}
In the literature of machine learning there are many different versions of adversarial problems for supervised tasks, but two versions are particularly popular: data-perturbing adversarial learning (e.g., see\cite{Pydi2021AdversarialRV}) and distributional perturbing adversarial learning (e.g., see \cite{MR3959085, MR4015639}). For a rigorous analysis, distributional perturbing adversarial learning is more adequate since data-perturbing adversarial learning lacks measurability in some cases. Furthermore, putting some technical details aside, one can prove that distributional perturbing adversarial learning encompasses data-perturbing adversarial learning: see \cite{Pydi2021AdversarialRV}.

The main focus in this paper is based on the distributional setting, where given a data distribution $\mu$, an adversary can select a new distribution $\widetilde{\mu}$ in a neighborhood of the original distribution $\mu$ determined by $C$. \cite{pydi2021the} summarizes other adversarial models and discusses connections between them.
\end{remark}

\begin{example} Let $p > 0 $ and let $c(x, \tilde x)$ be given by
\[ c(x, x') = c^p(x,x') := \frac{1}{\tau} (d(x,x'))^p , \]
for some constant $\tau>0$. For this choice of cost $c$, it is possible to show, through a formal argument whose details we omit, that problem \eqref{Robust problem:Intro} can be written as
\[ \inf_{f \in \mathcal{F}} \sup_{\widetilde{\mu} \: : \: 
W_p(\mu, \widetilde{\mu}) \leq \veps} R(f , \widetilde{\mu}),   \]
for some $\veps>0$ and for $W_p(\mu, \widetilde{\mu})$ the $p$-OT distance between $\mu$ and $\widetilde{\mu}$ relative to the distance function $\delta$ from \textbf{Example} \ref{example:epsilonAT}. The relation between $\tau$ and $\veps$ is not explicit, but, qualitatively, small values of $\tau$ should correspond to small values of $\veps$.

Notice that in this case the $c$-transform $f^c$ of a given function $f$ takes the form:
\[ f^c(x) = \inf_{x' \in \X} f(x') + \frac{1}{\tau}d(x, x')^p.  \]
If $f$ is bounded below by a constant, it follows that $f^c$ is always continuous (in the $d$ metric) regardless of the continuity properties of the original $f$. 
\end{example}

The solution space $\mathcal{F}$ in \eqref{Robust problem:Intro} is the full set of weak partitions, or probabilistic classifiers, defined by
\begin{equation*}
\mathcal{F}:= \left\{ f: \mathcal{X} \to \Delta_{[K]} : f \text{ Borel measurable } \right\},
\end{equation*}
where
\begin{equation*}
\Delta_{[K]} := \left\{ (u_i)_{i \in [K]} : 0 \leq u_i \leq 1, \sum_{i \in [K]} u_i =1 \right\},
\end{equation*}
i.e., $\Delta_K$ is the set of probability distributions over $[K]$. In other words, at each $x \in \mathcal{X}$, $f(x)$ is a probability distribution over $[K]$ representing the likelihood, according to the specific classifier $f$ chosen by the learner, that a given $x$ belongs to any of the available classes. Probabilistic classifiers are widely used in applications as they allow for the use of standard optimization techniques when training models. We want to highlight that the $f$s in $\mathcal{F}$ are only assumed to be Borel measurable. This means that the learner in problem \eqref{Robust problem:Intro} can be considered to be \textit{agnostic} to any specific model for the classifiers and in that sense \eqref{Robust problem:Intro} can be interpreted as a robust generalization of the notion of Bayes classifier studied in statistical learning.  

For a given $u \in  \Delta_{[K]} $ and a given $i \in [K]$, we define the loss:
\begin{equation*}
\ell(u, i) := 1 - u_i.
\end{equation*}
Notice that $\ell(e_j , i) $ is equal to $1$ if $i \not = j$ and $0$ if $i=j$. $\ell$ thus extends the $0$-$1$ loss to weak classifiers, and from now on we will refer to it simply as the $0$-$1$ loss. For a given pair $(f, \widetilde{\mu})$ we define the risk:
\begin{equation*}\label{adversarial_risk}
    R(f, \widetilde{\mu}):= \mathbb{E}_{(\widetilde{X}, \widetilde{Y}) \sim \widetilde{\mu}}[\ell(f(\widetilde{X}), \widetilde{Y})] =\sum_{i \in [K]} \int_{\mathcal{X}} (1 - f_i(\widetilde{x}))  d\widetilde{\mu}_i(\widetilde{x}) ,
\end{equation*}
which can be regarded as a bilinear functional $R(\cdot, \cdot): \mathcal{F} \times \mathcal{M}_+(\Z) \longrightarrow \mathbb{R}_+$.  For convenience, we introduce the so-called \textit{classification power} for a pair $(f, \widetilde{\mu}) \in  \mathcal{F} \times \mathcal{M}_+(\Z)$, which is defined by
\begin{equation}\label{classifying_power}
B(f, \widetilde{\mu}) := \sum_{i \in [K]}  \int_{\mathcal{X}} f_i(\widetilde{x})  d\widetilde{\mu}_i(\widetilde{x}).
\end{equation}
With these new definitions, problem \eqref{Robust problem:Intro} is immediately seen to be equivalent to
\begin{equation}
 \sup_{f \in \mathcal{F}} \inf_{\widetilde{\mu} \in \mathcal{M}_+(\Z) }  \left\{ B(f, \widetilde{\mu}) + C(\mu, \widetilde{\mu}) \right\}.
 \label{def:RobustClassifPower}
\end{equation}
Moreover, if we denote by $\widetilde{B}^*_\mu$ the optimal value of \eqref{def:RobustClassifPower}, and by $R^*_\mu$ the optimal value of \eqref{Robust problem:Intro}, we have the identity:
\[   
R^*_\mu = \mu(\Z)   -   \widetilde{B}^*_\mu.  
\] 
We write $\mu(\Z)$ explicitly, although for the most part $\mu(\Z)$ can be thought of as being equal to one.

The dual of \eqref{def:RobustClassifPower} is obtained by swapping the $\sup$ and the $\inf$:
\begin{equation}
  \inf_{\widetilde{\mu} \in \mathcal{M}_+(\Z) }   \sup_{f \in \mathcal{F}}  \left\{ B(f, \widetilde{\mu}) + C(\mu, \widetilde{\mu}) \right\}. 
  \label{eqn:DualAdversarial}
  \end{equation}
Notice that the value of \eqref{eqn:DualAdversarial} is always greater than or equal to the value of \eqref{def:RobustClassifPower}. Instead of attempting to invoke an abstract minimax theorem implying the equality of these two quantities at this stage, we prefer to defer this discussion to later sections where in fact we will prove that, under \textbf{Assumption} \ref{assump:CostStructure}, there is no duality gap in this problem. In what follows we focus on the dual problem \eqref{eqn:DualAdversarial} and only return to problem \eqref{def:RobustClassifPower}, which is equivalent to the original adversarial problem \eqref{Robust problem:Intro}, in section \ref{sec:MOTProofs}. Notice, however, that the statement of \textbf{Theorem} \ref{thm:Main} mentions the adversarial problem explicitly. 

For fixed $\widetilde{\mu}$, notice that 
\begin{align*}
 \sup_{f \in \mathcal{F}}  \left\{ B(f, \widetilde{\mu}) + C(\mu, \widetilde{\mu}) \right\}  &=   \sup_{f \in \mathcal{F}}  \left\{   \sum_{i \in [K]}  \int_{\mathcal{X}} f_i(\widetilde{x})  d\widetilde{\mu}_i(\widetilde{x}) + C(\mu, \widetilde{\mu}) \right\}  
 \\&=  \sup_{f \in \mathcal{F}}  \left\{   \sum_{i \in [K]}  \int_{\mathcal{X}} f_i(\widetilde{x})  d\widetilde{\mu}_i(\widetilde{x})  \right\}   + C(\mu, \widetilde{\mu}).
\end{align*}
Introducing a new variable $\lambda$, a positive measure over $\X$, we can rewrite  the latter $\sup$ as:
\begin{equation*}
    \inf_{\lambda} \lambda(\mathcal{X}) \quad \textup{s.t.}\; \int_X g(x) d(\lambda-\widetilde{\mu}_i)(x)\geq 0 \;\; \textup{for all} \; g\geq0, i \in [K];
\end{equation*}     
the constraint in $\lambda$ can be simply written as $\lambda \geq\widetilde{\mu}_i$ for all $i \in [K]$. Combining the above with the structure of the cost $C(\mu, \widetilde{\mu})$, we conclude that problem \eqref{eqn:DualAdversarial} is equivalent to the generalized barycenter problem mentioned in the introduction:
\begin{equation}
\label{eq:generalized_barycenter}
    B_\mu^* := \inf_{\lambda, \widetilde{\mu}_1, \ldots, \widetilde{\mu}_K} \left\{ \lambda(\mathcal{X})+\sum_{i \in [K]} C(\mu_i, \widetilde{\mu}_i) : \text{ $\lambda \geq\widetilde{\mu}_i$ for all $i \in [K]$}\right\},
\end{equation}
where we use the notation $B_{\mu}^*$ for future reference; see Figure \ref{plot: generalized_barycenter} for a pictorial explanation.

\begin{figure}[h]
	\includegraphics[scale=0.35]{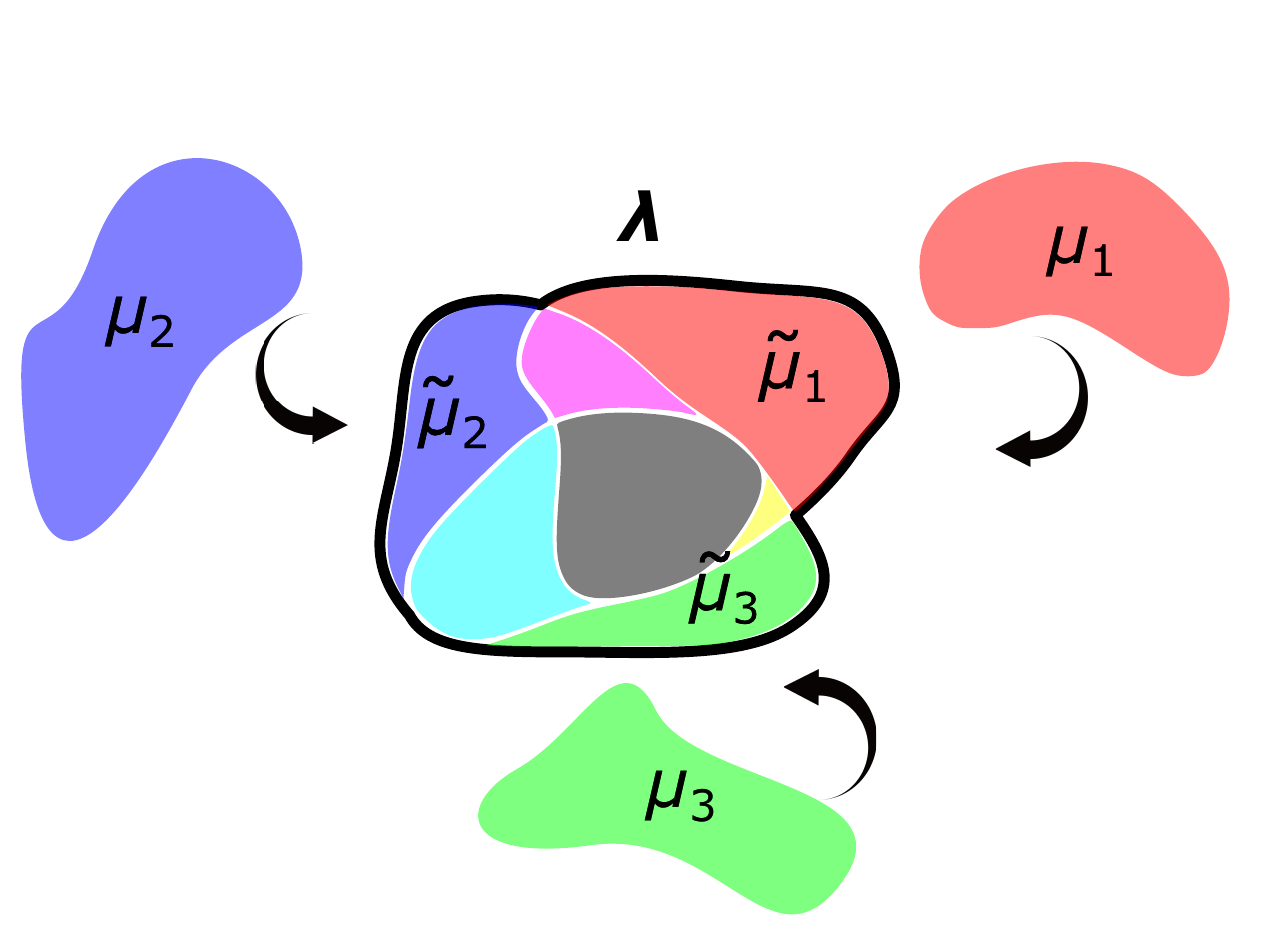}
	\centering
	\caption{Picture for \eqref{eq:generalized_barycenter}. $\mu_i$'s are first moved to $\widetilde{\mu}_i$'s and $\lambda$ is chosen to cover all $\widetilde{\mu}_i$'s: it is the smallest positive measure which is larger than all $\widetilde{\mu}_i$'s.}
	\label{plot: generalized_barycenter}
\end{figure}

\begin{remark}
\label{rem:Homogeneity}
It is straightforward to see from \eqref{eqn:DualAdversarial} that $B_\mu^*$ is $1$-homogeneous in $\mu$. That is, if $a> 0$, then $B^*_{a\mu} = a B^*_{\mu}$.
\end{remark}

\subsection{The MOT problem}
\label{sec:MOT}

\subsubsection{General MOT problems}
\label{sec:GeneralMOT}
Before providing the details of our MOT problem \eqref{Robust problem:Alternative_form}, it is worth introducing the generic MOT problem first. Let $\mathcal{S}_1, \dots, \mathcal{S}_K$ be fixed spaces and let $\c : \mathcal{S}_1 \times \dots \times \mathcal{S}_K \to \mathbb{R} \cup \{+\infty, - \infty \}$ be a cost function. For each $1 \leq k \leq K$, let $\nu_k \in \mathcal{P}(\mathcal{S}_k)$ be a Borel probability measure. The MOT problem associated to the cost function $\c$ and the measures $\nu_1, \dots, \nu_K$ is the following (possibly infinite dimensional) linear optimization problem with $K$-marginal constraints:
\begin{equation*}
	\inf_{\pi \in \Pi(\nu_1, \dots, \nu_K)} \int_{\mathcal{S}_1 \times \dots  \times \mathcal{S}_K} \c(\xi_1, \dots, \xi_K) d \pi(\xi_1, \dots, \xi_K),
\end{equation*}
where
\begin{equation*}
	\Pi(\nu_1, \dots, \nu_K) := \{ \pi \in \mathcal{P}(\mathcal{S}_1 \times \dots \times \mathcal{S}_K), \text{ s.t., } \text{ for every } i, \text{ $i$-th marginal of } \pi = \nu_i \}.
\end{equation*}
MOTs are generalizations of the standard (two marginals) optimal transport (OT) problems and their duals take the form:
\begin{equation}\label{eqn:DualMOTGeneral}
     \sup_{\phi \in \Phi} \left\{ \sum_{j=1}^K \int_{\mathcal{S}_j} \phi_j(\xi_j) d\nu_j(\xi_j) \right\},
\end{equation}
where $\Phi$ is the set of all $\phi= (\phi_1, \dots, \phi_K) \in \prod_{j=1}^K L^1 (\nu_j)$ such that 
\[ \sum_{j=1}^K \phi_j(\xi_j) \leq \c(\xi_1, \dots, \xi_K),\quad \forall (\xi_1, \dots, \xi_K) \in \mathcal{S}_1 \times \dots \times \mathcal{S}_K.\]

One of the most popular examples of MOT problems is connected to the Wasserstein Barycenter problem over $\mathcal{P}(\mathcal{X})$; see \cite{MR2110613, MR2564439, MR2801182}. Let $c : \mathcal{X} \times \mathcal{X} \to \mathbb{R} \cup \{+\infty, - \infty \}$ be a fixed pairwise cost function. In the Wasserstein barycenter problem the goal is to find a solution $\nu^*$ to the problem
\begin{equation*}
    \inf_{\nu'}  \sum_{i \in [K]} C(\nu', \nu_i) \quad \text{where } C(\nu, \nu_i) : = \inf_{\pi \in \Pi(\nu, \nu_i)} \int_{\mathcal{X} \times \mathcal{X}} c(x',x) d \pi(x',x).
\end{equation*}
Such $\nu^*$ can be interpreted as an ``average" or barycenter of the input measures $\nu_1, \dots,\nu_K$ relative to the cost $C$. It can then be showed that the above Wasserstein barycenter problem is equivalent to solving the following MOT problem
\begin{equation*}
    \inf_{\pi \in \Pi(\nu_1, \dots, \nu_K)} \int_{\mathcal{X}^K} \c(x_1, \dots, x_K) d\pi(x_1, \dots, x_K),
\end{equation*}
where
\begin{equation*}
    \c(x_1, \dots, x_K) :=  \inf_{x' \in \mathcal{X}} \sum_{i \in [K]} c(x',x_i) .
\end{equation*}
Indeed, let $\pi^*$ be a minimizer of the above MOT problem. Defining $\nu^* = T_{\#}\pi$, where $T(x_1, \dots, x_K):= \argmin_{x'} \sum_{i \in [K]} c(x',x_i)$, i.e., defining $\nu^*$ as the pushforward measure of $\pi^*$ with respect to the barycenter mapping $T$, one can recover a solution to the original barycenter problem. Conversely, one can use a Wasserstein barycenter $\nu^*$ and couplings $\pi_i$ realizing the costs $C(\nu^*, \nu_i)$ to build a solution to the MOT problem; see more details in \cite{MR2801182}.

\subsubsection{From adversarial robustness to MOT}
Now we are ready to state problem \eqref{Robust problem:Alternative_form} precisely. For this, we will need to modify the space $\mathcal{Z}$ and in particular add an extra element to it that will be denoted by the symbol $\mathghost$. The marginals of the couplings in the desired MOT problem will be probability measures over the set $\mathcal{Z}_*:=\mathcal{Z} \cup \{ \mathghost \}$. More precisely, letting $P_{i}$ represent the projection onto the $i$-th coordinate, we consider the set: 
\begin{equation}\label{eqn:CouplingsMOT}
    \Pi_K(\mu):= \left\{ \pi \in \mathcal{P}(  \mathcal{Z}_*^K ) \: : \: P_{i \sharp} \pi = \frac{1}{2 \mu(\mathcal{Z})} \mu( \cdot \cap \mathcal{Z}) + \frac{1}{2} \delta_{\mathghost} \text{ for all } i \in [K]  \right\}.
\end{equation}
Notice that in this set all $K$ marginals are the same. Dividing by the factor $\frac{1}{2 \mu(\mathcal{Z})}$, the set $\Pi_K(\mu)$ is made to be consistent with the literature on multimarginal optimal transport, where sets of couplings are typically assumed to be probability measures.

Let us now discuss the cost function for the desired MOT problem. For a given tuple $(z_1, \dots, z_K)$ in $\mathcal{Z}_*^K$, often denoted by $\vec{z}$ in the sequel for convenience, we define
\begin{equation}\label{eq:CostFunction}
    \c(z_1, \dots, z_K):= B^*_{\widehat{\mu}_{\vec z}},
\end{equation}
where $\widehat{\mu}_{\vec z}$ is the positive measure (not necessarily a probability measure) defined as:
\begin{equation*}
    \widehat{\mu}_{\vec z} :=  \frac{1}{K} \sum_{l \text{ s.t. } z_l \not = \mathghost }^K \delta_{z_l}.
\end{equation*}
Recall that $B^*_{\widehat{\mu}_{\vec{z}}}$ is equal to \eqref{eq:generalized_barycenter} (alternatively, equal to \eqref{eqn:DualAdversarial}) when $\mu$ is equal to ${\widehat{\mu}_{\vec{z}}}$. In this sense, $\c(z_1, \dots, z_K)$ of \eqref{eq:CostFunction} is the value of the generalized barycenter problem given $\widehat{\mu}_{\vec z}$ as the data distribution, or \textit{local} generalized barycenter problem.

\begin{remark}
Notice that $\widehat{\mu}_{\vec z}$ is a probability measure if and only if no element in the tuple $\vec{z}$ is $\mathghost$.
\end{remark}

Following the literature of MOT, the dual of our MOT problem can be written as 
\begin{equation}\label{eqn:DualMOT}
     \sup_{\phi \in \Phi} \left\{ \sum_{j=1}^K \int_{\mathcal{X} \times [K]} \phi_j(z_j) \frac{1}{2 \mu(\mathcal{Z})}d\mu(z_j) + \frac{1}{2} \sum_{j=1}^K \phi_j(\mathghost) \right\},
\end{equation}
where
\begin{equation}\label{eq:constraint_set}
    \Phi := \left\{ \phi= (\phi_1, \dots, \phi_K) \in \prod_{j=1}^K L^1 \big(\frac{1}{2 \mu(\mathcal{Z})}\mu + \frac{1}{2}  \delta_{\mathghost} \big) : \sum_{j=1}^K \phi_j(z_j) \leq B^*_{\widehat{\mu}_{\vec{z}}},\quad \forall \vec{z} \in \Z_*^K \right\}.
\end{equation}
We will later show that under \textbf{Assumption} \ref{assump:CostStructure} there is no duality gap between the MOT problem and its dual \eqref{eqn:DualMOT} —see \textbf{Corollary} \ref{prop:MOTNoGap}.

One of the main results of the paper is the following.

\begin{theorem}
\label{thm:Main}
Suppose that \textbf{Assumption} \ref{assump:CostStructure} holds. Let $\mu$ be a finite positive measure over $\Z$. Then \eqref{eqn:DualAdversarial} is equivalent to the MOT problem \eqref{Robust problem:Alternative_form} with set of couplings $\Pi_K(\mu)$ defined as in \eqref{eqn:CouplingsMOT}, and cost function $\c$ defined as in \eqref{eq:CostFunction}. Specifically,
\[ \frac{1}{2\mu(\Z)}B^*_{\mu} = \min_{\pi \in \Pi_K(\mu)} \int \c (z_1, \dots, z_K) d\pi (z_1, \dots, z_K).\]
Furthermore, $\eqref{def:RobustClassifPower}=\eqref{eqn:DualAdversarial}$. In addition, from a solution pair $(\pi^*, \phi^*)$ for the MOT problem and its dual one can obtain a solution pair $(f^*, \widetilde{\mu}^*)$ for \eqref{eqn:DualAdversarial} and its dual, i.e. problem \eqref{def:RobustClassifPower}. The pair $(f^*, \widetilde{\mu}^*)$ is also a saddle point for the original adversarial problem \eqref{Robust problem:Intro}.
\end{theorem}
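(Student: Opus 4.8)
The plan is to establish Theorem \ref{thm:Main} by chaining together the three reformulations of the adversarial problem that were set up in the preliminaries: adversarial problem $\leftrightarrow$ generalized barycenter problem \eqref{eq:generalized_barycenter}, generalized barycenter problem $\leftrightarrow$ MOT problem \eqref{Robust problem:Alternative_form}, and then closing the duality loop. First I would invoke the results of Section \ref{sec:GenBar}, where the generalized barycenter problem $B^*_\mu$ is shown equivalent to a stratified barycenter problem and then to an MOT problem; the key structural fact coming out of that section is that an optimal $\lambda$ can be built by superimposing local barycenters supported on at most $K$ of the data atoms, which is precisely what makes the cost $\c(\vec z) = B^*_{\widehat\mu_{\vec z}}$ the right ``localized'' object. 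The identity $\frac{1}{2\mu(\Z)}B^*_\mu = \min_{\pi \in \Pi_K(\mu)} \int \c \, d\pi$ should then follow by a gluing/disintegration argument: given an optimal $\pi$, one reads off from each fiber $\vec z$ the local solution of the generalized barycenter problem for $\widehat\mu_{\vec z}$, integrates these against $\pi$, and uses $1$-homogeneity of $B^*$ in $\mu$ (Remark \ref{rem:Homogeneity}) together with the marginal constraint in \eqref{eqn:CouplingsMOT} to reassemble a feasible $(\lambda, \widetilde\mu_1,\dots,\widetilde\mu_K)$ for $B^*_\mu$; conversely, from an optimal generalized barycenter one decomposes $\lambda$ into pieces each attached to at most $K$ atoms of $\mu$ and encodes this decomposition as a coupling $\pi \in \Pi_K(\mu)$. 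The factors $\frac{1}{2\mu(\Z)}$ and $\frac12 \delta_{\mathghost}$ are bookkeeping: the $\mathghost$ slot absorbs the atoms of $\mu$ not used by a given local barycenter, and the normalization makes the marginals probability measures.

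Next I would address the ``no duality gap'' claims. The equality $\eqref{def:RobustClassifPower} = \eqref{eqn:DualAdversarial}$ I would obtain not from an abstract minimax theorem but as a consequence of the chain of equivalences just established plus the MOT duality of Corollary \ref{prop:MOTNoGap}: since the MOT problem \eqref{Robust problem:Alternative_form} equals its dual \eqref{eqn:DualMOT} under Assumption \ref{assump:CostStructure}, and the MOT primal equals $\frac{1}{2\mu(\Z)}B^*_\mu = \frac{1}{2\mu(\Z)}\cdot(\text{value of }\eqref{eqn:DualAdversarial})$, it suffices to check that the MOT dual \eqref{eqn:DualMOT} matches $\frac{1}{2\mu(\Z)}$ times the value of \eqref{def:RobustClassifPower}. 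For this I would unwind the dual constraint $\sum_j \phi_j(z_j) \le B^*_{\widehat\mu_{\vec z}}$: since $B^*_{\widehat\mu_{\vec z}}$ is itself a sup over classifiers $f$ (via \eqref{classifying_power}) of a local classification power, a feasible $\phi$ should be reconstructible from a single global classifier $f \in \mathcal F$ — roughly, $\phi_j(x,i)$ plays the role of $f_i(x)$ evaluated at the $j$-th marginal, and $\phi_j(\mathghost)$ carries the ``null'' contribution. Matching the objective \eqref{eqn:DualMOT} with $B(f,\widetilde\mu)$ then gives $\eqref{def:RobustClassifPower} \ge \eqref{eqn:DualAdversarial}$, and since the reverse inequality is automatic, all four quantities coincide.

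For the recovery of a saddle pair $(f^*, \widetilde\mu^*)$: from the MOT dual optimizer $\phi^*$ I would extract $f^*$ by the correspondence above (reading $f^*_i(x)$ off from $\phi^*_j(x,i)$, using that all $K$ marginals of $\Pi_K(\mu)$ are equal so the choice of $j$ is immaterial after symmetrization); from the MOT primal optimizer $\pi^*$ I would build $\widetilde\mu^*$ by pushing forward under the local-barycenter maps supplied by Section \ref{sec:GenBar}, i.e. $\widetilde\mu^*_i$ is obtained by sending, along each fiber $\vec z$, the atoms labelled $i$ to their optimal local barycenter location and integrating against $\pi^*$. One then checks: (i) $(f^*,\widetilde\mu^*)$ is feasible for \eqref{eqn:DualAdversarial}; (ii) complementary slackness for the MOT pair — $\pi^*$ concentrates on $\{\sum_j \phi^*_j(z_j) = \c(\vec z)\}$ — translates into the pointwise optimality conditions that make $(f^*,\widetilde\mu^*)$ a saddle point, equivalently $B(f^*,\widetilde\mu^*)+C(\mu,\widetilde\mu^*)$ equals the common optimal value from both sides. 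Finally, translating back from classification power $B$ to risk $R$ via $R^*_\mu = \mu(\Z) - \widetilde B^*_\mu$ gives that $(f^*,\widetilde\mu^*)$ is a saddle point of \eqref{Robust problem:Intro}.

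The main obstacle I expect is the gluing step in the primal equivalence: making rigorous the passage between a coupling $\pi \in \Pi_K(\mu)$ and a decomposition of an optimal $\lambda$ into local barycenters. One has to handle measure-theoretic subtleties (measurable selection of the local-barycenter solutions as a function of the fiber $\vec z$, integrability of $\c(\vec z) = B^*_{\widehat\mu_{\vec z}}$, and well-definedness of the pushforwards), ensure the $\mathghost$-accounting exactly reproduces the total-mass term $\lambda(\mathcal X)$ after the $\frac{1}{2\mu(\Z)}$ rescaling, and verify that the ``at most $K$ atoms per local barycenter'' bound from Section \ref{sec:GenBar} is genuinely achievable so that $K$ marginals suffice. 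The lower-semicontinuity and coercivity in Assumption \ref{assump:CostStructure} will be needed to guarantee existence of the various optimizers and the measurability of the selections; I would lean on those wherever compactness of minimizing sequences is required.
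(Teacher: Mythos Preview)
Your upper bound (glue local barycenter solutions along the fibers of a given $\pi$ to produce a feasible $\gamma$ for $\frac{1}{2\mu(\Z)}\mu$) and your description of how to read off $(f^*,\widetilde\mu^*)$ from $(\pi^*,\phi^*)$ are both in line with the paper. The real gap is the lower bound. You propose to do it on the primal side, ``encoding'' the stratified decomposition $\{\lambda_A,\mu_{i,A}\}$ as a single coupling in $\Pi_K(\mu)$, but you have not confronted the fact that the $K$ marginal slots of $\Pi_K(\mu)$ are \emph{not} class-indexed: every marginal must equal the full $\frac{1}{2\mu(\Z)}\mu+\frac12\delta_{\mathghost}$, not just one $\mu_i$. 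The natural lift ``put class $i$ in slot $i$, fill with $\mathghost$ elsewhere'' therefore has the wrong marginals, and the obvious repair---symmetrize each $\pi_A$ over permutations of $[K]$ and then pad with $\delta_{(\mathghost,\dots,\mathghost)}$---requires a \emph{negative} amount of padding whenever $K\lambda(\mathcal X)>2\mu(\Z)$, which already occurs for $K\ge 3$ with well-separated classes. A correct primal encoding must pack several disjoint local groups into the \emph{same} tuple $\vec z$, and working out how to do that is exactly the non-trivial combinatorics you are trying to avoid.

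The paper sidesteps this by running the lower bound through the duals. From any $\phi\in\Phi$ feasible for \eqref{eqn:DualMOT} it sets $\psi_i(x):=\sum_j\phi_j(x,i)+\sum_j\phi_j(\mathghost)$ and proves (\textbf{Proposition}~\ref{Prop:LowerBound}, prepared by \textbf{Lemma}~\ref{lem : bound_dual_potential}) that the $\psi_i$ are feasible for \eqref{eq:mot_decomposed_dual}, i.e.\ $\sum_{i\in A}\psi_i(x_i)\le 1+c_A$ for every $A\in S_K$. That inequality is the heart of the theorem: it is obtained by arranging the $K|A|$ terms $\phi_j(x_i,i)$ into a $K\times K$ array, merging rows according to the arithmetic of $K$ and $|A|$, and carefully counting that enough $\phi_j(\mathghost)$'s are available to fill the empty cells so that each completed row can be bounded via the constraint $\sum_j\phi_j(z_j)\le \c(\vec z)$. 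This yields $\eqref{eqn:DualMOT}\le\frac{1}{2\mu(\Z)}\eqref{eq:mot_decomposed_dual}$, which together with \textbf{Proposition}~\ref{prop:barycenter_duality} and \textbf{Corollary}~\ref{prop:MOTNoGap} closes the loop. Note that your dual step goes the opposite way---build $\phi$ from $f$---which is the easy direction and, combined with your other ingredients, only reproduces the automatic inequality $\eqref{def:RobustClassifPower}\le\eqref{eqn:DualAdversarial}$; it does not give $\eqref{def:RobustClassifPower}\ge\eqref{eqn:DualAdversarial}$ as you claim.
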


One immediate consequence of \textbf{Theorem} \ref{thm:Main} is that with the identity
\begin{equation*}
    R^*_\mu = \mu(\Z) - \widetilde{B}^*_\mu,
\end{equation*}
one can compute $R^*_\mu$, the optimal adversarial risk, by finding the optimal value of the equivalent MOT problem. To find the latter, one could attempt to use one of the off-the-shelf algorithms in computational optimal transport. Some algorithms to solve generic MOTs that have been developed recently include the ones proposed in see \cite{benamou2015iterative, benamou2019generalized, lin2022complexity, tupitsa2020multimarginal, haasler2021multimarginal, altschuler2021wasserstein, carlier2021linear}. Our numerical results for a subsample of MNIST and CIFAR 10, shown in Figure \ref{plot: cifar_mnist}, are obtained using the algorithm discussed in \cite{lin2022complexity}, also known as MOT Sinkhorn algorithm; see subsection \ref{subsec : numerical expeirment} for more details. We want to warn the reader, however, that off-the-shelf MOT algorithms may suffer an excessive computational burden when $K$ goes beyond $4$. For this reason, it is important to develop algorithms that exploit the structure of our MOT problem, which, as we will discuss below, has the structure of a generalized barycenter problem. An investigation on more specific algorithms is left for future work.

The proof of \textbf{Theorem} \ref{thm:Main} is presented throughout section \ref{sec:ProofMainTheorem}; the expression for $(f^*, \widetilde{\mu}^*)$ in terms of $(\phi^*, \pi^*)$ is presented in \textbf{Corollary} \ref{cor:OptimalPair}. Given the definition of the cost function $\c$, \textbf{Theorem} \ref{thm:Main} states that the adversarial problem \textit{localizes} to data sets consisting of $K$ or less equally weighted points. More precisely, the problem for the adversary reduces to first determining their actions when facing arbitrary distributions supported on $K$ or fewer data points, and then finding an optimal grouping for the data in order to assemble their global strategy. The ghost element, $\mathghost$, indicates when fewer than $K$ points are being grouped by the adversary. We highlight that it is not always (globally) optimal for the adversary to group together points from all the $K$ different classes whenever it is possible. 
% The introduction of the ghost element illustrates this non-trivial optimization scheme and from the technical perspective it helps us prove all the equivalences in this paper.

We emphasize that from the solution to the MOT and its dual, one can directly obtain an optimal adversarial attack and an optimal classification rule for the original adversarial problem. Note that problem \eqref{Robust problem:Alternative_form} is a problem solved by the adversary: ideally, the adversary wants to group together points $(z_1, \dots,z_K)$ for which there is a low classification power $B^*_{\widehat{\mu}_{\vec z}}$ (or alternatively large robust risk). On the other hand, the dual of \eqref{Robust problem:Alternative_form} can be interpreted as a maximization problem solved by the learner. We formalize this novel connection in subsection \ref{sec:MOTProofs}: see \textbf{Corollary} \ref{cor:OptimalPair}.

In order to prove \textbf{Theorem} \ref{thm:Main}, we will first obtain a series of equivalent reformulations of problem \eqref{eq:generalized_barycenter} which will reveal a rich geometric structure of the adversarial problem and will facilitate the connection with the desired MOT problem. These equivalent formulations are of interest in their own right.

\section{The generalized barycenter problem}
\label{sec:GenBar}

We begin this section by proving that the generalized barycenter problem always has at least one solution.  In the following subsections we will then discuss a series of equivalent problems to the generalized barycenter problem, their duals, and some geometric properties of their solutions. 

\begin{proposition}\label{prop:existence}
Suppose that $c$ is a lower semicontinuous cost satisfying the property that for any compact set $E\subset \mathcal{X}$ there exists a compact set $F\subset \mathcal{X}$ such that for all $x\in E, x'\in F, x''\in \mathcal{X} \setminus F$ we have $c(x,x')\leq c(x, x'')$.
Given finite positive measures $\mu_1, \ldots, \mu_K$ and $c$ as above,
there exists at least one solution to problem \eqref{eq:generalized_barycenter}.
\end{proposition}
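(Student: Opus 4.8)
The plan is to run the direct method of the calculus of variations. First I would record two reductions. If $B_\mu^*=+\infty$ there is nothing to prove, so assume $B_\mu^*<\infty$ (this holds e.g.\ when $c(x,x)=0$, as in \textbf{Assumption} \ref{assump:CostStructure}, by taking $\widetilde{\mu}_i=\mu_i$). Along a minimizing sequence $(\lambda^n,\widetilde{\mu}_1^n,\dots,\widetilde{\mu}_K^n)$ the objective $\lambda^n(\mathcal{X})+\sum_i C(\mu_i,\widetilde{\mu}_i^n)$ is bounded, so each $C(\mu_i,\widetilde{\mu}_i^n)<\infty$, which by the convention that $C(\mu_i,\cdot)=+\infty$ when the total masses disagree forces $\widetilde{\mu}_i^n(\mathcal{X})=\mu_i(\mathcal{X})=:m_i$. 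Moreover, for fixed $(\widetilde{\mu}_i)$ the optimal $\lambda$ is the least positive measure dominating all the $\widetilde{\mu}_i$ (the lattice supremum $\bigvee_i\widetilde{\mu}_i$, which satisfies $\bigvee_i\widetilde{\mu}_i\le\sum_i\widetilde{\mu}_i$), so we may assume $\lambda^n=\bigvee_i\widetilde{\mu}_i^n$, whence $\lambda^n(\mathcal{X})\le\sum_i m_i$. Thus all relevant masses are uniformly bounded.

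The heart of the argument is to produce a \emph{tight} minimizing sequence. Fix $\delta>0$, choose a compact $E\subseteq\mathcal{X}$ with $\mu_i(\mathcal{X}\setminus E)<\delta$ for every $i$ (possible by inner regularity of the finitely many finite measures $\mu_i$), and let $F\subseteq\mathcal{X}$ be the compact set the coercivity hypothesis provides for $E$, which we may take to contain $E$. Given optimal plans $\pi_i^n\in\Gamma(\mu_i,\widetilde{\mu}_i^n)$ (of finite cost, hence concentrated on $\{c<\infty\}$), replace each by the plan that is unchanged over $\{x''\in F\}$ and pushes the mass sitting at $(x,x'')$ with $x''\notin F$ to $(x,\psi(x))$, where $\psi(x)\in F$ is a measurable minimizer of $c(x,\cdot)$ over $F$ — or simply to $(x,x)$ under the normalization $c(x,x)=0$. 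For $x\in E$ the hypothesis gives $c(x,\psi(x))\le c(x,x'')$, so the new plan has no larger cost, still has first marginal $\mu_i$, and its second marginal $\rho_i^n$ obeys $\rho_i^n(\mathcal{X}\setminus F)\le\mu_i(\mathcal{X}\setminus E)<\delta$. Replacing $\widetilde{\mu}_i^n$ by $\rho_i^n$ and $\lambda^n$ by $\bigvee_i\rho_i^n$, then letting $\delta=\delta_k\downarrow0$ with the auxiliary compacts chosen compatibly and diagonalizing, yields a minimizing sequence — still denoted $(\lambda^n,\widetilde{\mu}_i^n)$ — for which $\{\widetilde{\mu}_i^n\}_n$ is tight for each $i$; consequently $\{\lambda^n=\bigvee_i\widetilde{\mu}_i^n\}_n$ is tight (being dominated by $\sum_i\widetilde{\mu}_i^n$) and so is each family of optimal plans $\{\pi_i^n\}_n$ (tight marginals). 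Making this diagonalization precise — in particular reconciling the redirection with the $\lambda$-variable and with the fact that the compact $F$ from the hypothesis may grow as $E$ grows — is the main obstacle, and is exactly what the peculiar coercivity hypothesis on $c$ is designed to handle. An alternative route is to compactify $\mathcal{X}$ into a compact metric space $\hat{\mathcal{X}}$, extend $c$ to a lower semicontinuous $\bar c$ on $\hat{\mathcal{X}}\times\hat{\mathcal{X}}$, solve the relaxed problem over $\mathcal{M}_+(\hat{\mathcal{X}})$ — where existence is immediate from weak-$*$ compactness — and then invoke the hypothesis to show the relaxed minimizer charges no mass on $\hat{\mathcal{X}}\setminus\mathcal{X}$, which is again the crux.

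With a tight minimizing sequence in hand the rest is routine. By Prokhorov's theorem, passing to a common subsequence, $\widetilde{\mu}_i^n\rightharpoonup\widetilde{\mu}_i^*$, $\lambda^n\rightharpoonup\lambda^*$ and $\pi_i^n\rightharpoonup\pi_i^*$ narrowly. Testing against nonnegative bounded continuous functions shows $\lambda^*\ge\widetilde{\mu}_i^*$; testing against the constant $1$ (legitimate by tightness) gives $\widetilde{\mu}_i^*(\mathcal{X})=m_i$ and $\lambda^*(\mathcal{X})=\lim_n\lambda^n(\mathcal{X})$; continuity of the marginal projections under narrow convergence gives $\pi_i^*\in\Gamma(\mu_i,\widetilde{\mu}_i^*)$. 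Finally $\lambda\mapsto\lambda(\mathcal{X})$ is continuous along the sequence, while lower semicontinuity of $\pi\mapsto\int c\,d\pi$ for the nonnegative lower semicontinuous cost $c$ yields $C(\mu_i,\widetilde{\mu}_i^*)\le\int c\,d\pi_i^*\le\liminf_n\int c\,d\pi_i^n=\liminf_n C(\mu_i,\widetilde{\mu}_i^n)$. Summing these, $\lambda^*(\mathcal{X})+\sum_i C(\mu_i,\widetilde{\mu}_i^*)\le\liminf_n\big(\lambda^n(\mathcal{X})+\sum_i C(\mu_i,\widetilde{\mu}_i^n)\big)=B_\mu^*$, and since $(\lambda^*,\widetilde{\mu}_1^*,\dots,\widetilde{\mu}_K^*)$ is admissible it is a minimizer of \eqref{eq:generalized_barycenter}.
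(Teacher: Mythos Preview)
Your proposal is correct and follows essentially the same direct-method argument as the paper: bound total masses, use the coercivity hypothesis to modify the minimizing sequence so that the second marginals (hence $\lambda^n$) are tight, then pass to the limit via Prokhorov and lower semicontinuity of $c$. The paper's version is marginally more streamlined in that it works directly with the tuple $(\lambda,\pi_1,\dots,\pi_K)$, eliminating the $\widetilde{\mu}_i$ variables and the lattice supremum, and argues tightness of $\lambda^n$ itself rather than of the $\widetilde{\mu}_i^n$; but the substance (and the level of informality in the redirection step) is the same.
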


\begin{remark}
If $c$ is a cost that satisfies \textbf{Assumption} \ref{assump:CostStructure}, then $c$ satisfies the hypothesis of \textbf{Proposition} \ref{prop:existence}. 
\end{remark}

\begin{remark}
Nearly identical arguments can be used to prove that the various reformulations of \eqref{eq:generalized_barycenter} that we will consider throughout this section have minimizers. For this reason, in what follows, we will simply assume the existence of minimizers without explicitly proving their existence.
\end{remark}

\begin{proof}
Using transportation plans to compute the cost $C(\mu_i, \tilde{\mu}_i)$ in \eqref{eq:generalized_barycenter}, we can rewrite the problem in the following form
\begin{align*}
    &\inf_{\lambda, \, \pi_1,\ldots, \pi_K} \left\{\lambda(\mathcal{X})+\sum_{i \in [K]} \int_{\mathcal{X}\times\mathcal{X}} c(x, x')d\pi_i(x, x')\right\}\\
    &\textup{s.t. } \pi_i(\mathcal{X}\times E)\leq \lambda(E), \pi_i(E\times \mathcal{X})=\mu_i(E) \text{ for all } i \in [K], \quad \forall E \subseteq \X \text{ Borel}.
\end{align*}
Note that a feasible solution to this problem exists since we may choose $\lambda, \pi_1, \ldots, \pi_K$ such that $\lambda:=\sum_{i \in [K]}\mu_i$ and for all $f\in C_c(\mathcal{X}\times \mathcal{X})$ $\int_{\mathcal{X}\times\mathcal{X}}f(x,x')d\pi_i(x,x'):=\int_{\mathcal{X}} f(x,x)d\mu_i(x)$. Also note that with these choices, the problem attains the value $\sum_{i \in [K]} \mu_i(\mathcal{X})$.

Let $\lambda^n, \pi_1^n, \ldots, \pi_K^n$ be a sequence of feasible solutions such that 
\begin{align*}
    t&:=\inf_{\lambda, \pi_1,\ldots, \pi_K} \lambda(\mathcal{X})+\sum_{i \in [K]} \int_{\mathcal{X}\times\mathcal{X}} c(x, x') d\pi_i(x, x')\\
    &= \lim_{n\to\infty}  \lambda^n(\mathcal{X})+\sum_{i \in [K]} \int_{\mathcal{X}\times\mathcal{X}} c(x, x') d\pi_i^n(x, x').
\end{align*}
From our work above and the nonnegativity of the transport cost, $\lambda^n(\mathcal{X})$ is uniformly bounded by $\sum_{i \in [K]} \mu_i(\mathcal{X})$. Furthermore, we may assume that for any Borel set $E$
\[
\sum_{i \in [K]} \int_{\mathcal{X}\times E} d\pi_i^n(x, x') \geq \lambda^n(E),
\]
otherwise we could delete mass from $\lambda^n$ and attain a smaller value.
Given some $\epsilon>0$, let $E_{\epsilon}\subset \mathcal{X}$ be a compact set such that $\sum_{i \in [K]} \mu_i(\mathcal{X}\setminus E_{\epsilon})\leq \epsilon$.
Let $F_{\epsilon}$ be a compact set such that for all $x\in E_{\epsilon}, x'\in F_{\epsilon} $ and $x''\in \mathcal{X}\setminus F_{\epsilon}$ we have $c(x,x')\leq c(x,x'')$. If $\lambda^n$ gives more than $\epsilon$ to $\mathcal{X}\setminus F_{\epsilon}$ then some of this mass must be transported to $E_{\epsilon}$.  Since the transportation cost would be cheaper if the excess mass was placed inside of $F_{\epsilon}$ instead of $\mathcal{X}\setminus F_{\epsilon}$, it follows that $\lambda^n(\mathcal{X}\setminus F_{\epsilon})\leq \epsilon$.  Therefore, the $\lambda^n$ are a tight family.

The tightness of $\lambda^n$ and $\mu_1,\ldots, \mu_K$ implies that $\pi_1^n,\ldots, \pi_K^n$ are a tight family.  Therefore, we can extract a subsequence that converges weakly to a limit $\lambda^*, \pi_1^*,\ldots, \pi_K^*$. From the lower semicontinuity of the cost, it follows that $\{\lambda^*, \pi_1^*,\ldots, \pi_K^* \}$ is a minimizer. 
\end{proof}

\subsection{A first MOT reformulation of \eqref{eq:lambda_decomposed_formulation} and geometric consequences}

In the rest of what follows, we shall let $S_K$ denote the power set of $[K]$ except for the empty set and for every $i\in [K]$ we let $S_{K}(i)=\{A\in S_K: i\in A\}$. We can reduce \eqref{eq:generalized_barycenter} to a more concrete problem by partitioning $\lambda$ and each of $\mu_i$'s properly, eliminating the variables $\widetilde{\mu}_i$'s from the optimization. We start with the following observation.

%Given a feasible solution $\lambda, \widetilde{\mu}_1, \ldots, \widetilde{\mu}_K$ to \eqref{eq:generalized_barycenter} and a point $x \in \mathcal{X}$, it is interesting to consider the set
%\[
%A(x):=\{i\in [K]: x\in \spt(\widetilde{\mu}_i)\}.
%\]
%If $x\notin \spt(\lambda)$, then $A(x)$ must be empty.  Conversely, if $x\in \spt(\lambda)$, then we may assume that $A(x)$ is not empty, otherwise we could delete $x$ from the support of $\lambda$ and improve the solution.  Hence, to each point $x \in \spt(\lambda)$ we can associate a nonempty subset of $[K]$ that describes which of the $\mu_i$ send mass to the point $x$.  Given a set $A \in S_K$ we define $D(A):=\{x\in \spt(\lambda): A(x)=A\}$.
%This allows us to give a natural partition of $\lambda$ into the sum $\lambda=\sum_{A\in S_K} \lambda_A$ where we define
%\[
%\lambda_A(E):=\lambda(E\cap D(A)),
%\]
%for every measurable $E\subseteq \mathcal{X}$. See (a) of Figure \ref{plot: combined_figure}.

\begin{lemma}
\label{lem:DecomposistionA}
Let $u_1, \dots, u_K \in [0,1]$ be such that $\max_{i=1, \dots, K} u_i =1$. Then there exists a collection of non-negative scalars $\{ r_A \}_{A \in S_K}$ such that the following two conditions hold:
\begin{enumerate}
    \item $1=\sum_{A \in S_K} r_A $.
    \item $u_i = \sum_{A \in S_K(i)} r_A$ for all $i=1,\dots,K$.
\end{enumerate}
\end{lemma}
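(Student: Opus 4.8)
The plan is to construct the scalars $\{r_A\}_{A\in S_K}$ explicitly through a ``layer-cake'' decomposition of the vector $(u_1,\dots,u_K)$ into indicator vectors of its super-level sets. For $t\in[0,1)$ I would set
\[
A(t):=\{\, i\in[K]\ :\ u_i>t \,\}.
\]
First I would record two elementary observations. Since $\max_i u_i=1$, for every $t\in[0,1)$ there is an index $i$ with $u_i=1>t$, so $A(t)$ is nonempty, i.e.\ $A(t)\in S_K$. Moreover $t\mapsto A(t)$ is non-increasing with respect to set inclusion and takes only finitely many values; hence for each $A\in S_K$ the set $\{t\in[0,1):A(t)=A\}$ is an interval (possibly empty) --- if $A(t_1)=A(t_2)=A$ and $t_1\le t\le t_2$ then $A=A(t_2)\subseteq A(t)\subseteq A(t_1)=A$ --- and in particular is Lebesgue measurable.

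Next I would define
\[
r_A:=\mathcal{L}^1\big(\{\,t\in[0,1):A(t)=A\,\}\big)\ \geq 0,\qquad A\in S_K,
\]
where $\mathcal{L}^1$ denotes one-dimensional Lebesgue measure, and verify the two required identities. For (1), the intervals $\{t:A(t)=A\}$, $A\in S_K$, are pairwise disjoint and --- because $A(t)\in S_K$ for every $t\in[0,1)$ --- cover all of $[0,1)$; therefore $\sum_{A\in S_K}r_A=\mathcal{L}^1([0,1))=1$. For (2), fix $i\in[K]$ and note that $i\in A(t)\Leftrightarrow u_i>t$, so
\[
\sum_{A\in S_K(i)}r_A=\mathcal{L}^1\big(\{\,t\in[0,1):i\in A(t)\,\}\big)=\mathcal{L}^1\big([0,u_i)\big)=u_i,
\]
which is just the layer-cake identity $\int_0^1\mathbf{1}_{\{u_i>t\}}\,dt=u_i$. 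This produces the desired collection.

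I do not expect a genuine obstacle here: the only two places that require any attention are the nonemptiness of $A(t)$ for $t<1$ --- which is exactly where the hypothesis $\max_i u_i=1$ is used, and is the reason the empty set may be excluded from $S_K$ without loss --- and the measurability of the level sets, which follows at once from the monotonicity and finite range of $t\mapsto A(t)$. An alternative, and somewhat longer, route would be induction on $K$: peel off an index attaining the maximum value $1$, absorb the remaining coordinates appropriately, and invoke the inductive hypothesis. The layer-cake construction is cleaner, however, and has the additional merit of showing that the sets $A$ with $r_A>0$ can always be taken to form a chain under inclusion, a structural feature that should be convenient in the subsequent reformulations of the generalized barycenter problem.
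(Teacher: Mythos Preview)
Your proof is correct and is essentially the same construction as the paper's: the paper sorts the $u_i$ in increasing order and sets $r_{\{j,j+1,\dots,K\}}:=u_j-u_{j-1}$ (a telescoping sum), which is exactly your layer-cake assignment $r_A=\mathcal L^1(\{t:A(t)=A\})$ once one notices that the super-level sets $A(t)$ are precisely the intervals $\{j,\dots,K\}$ after sorting. Your measure-theoretic packaging avoids the explicit reordering and makes the chain structure of $\{A:r_A>0\}$ transparent, but the underlying decomposition and the resulting scalars are identical.
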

\begin{proof}
Without loss of generality we can assume that the $u_i$ are arranged in increasing order. That is,
\[ 
0 \leq u_1 \leq u_2 \leq \dots, \leq u_K=1. 
\]
Let $i'$ be the first $i$ such that $u_{i}>0$. We set
\begin{align*}
    r_{\{ i', \dots, K \}} &:= u_{i'}\\
    r_{\{ i'+1, \dots, K \}} &:=  u_{i' +1} -  u_{i'}\\
    r_{\{ i'+2, \dots, K \}} &:=  u_{i' +2} -  u_{i'+1}\\
    &\quad \vdots \\
    r_{\{ K\} } &:= 1- u_{K-1}. 
\end{align*}
and $r_A=0$ for all other sets. It is straightforward to check that the collection $\{ r_A \}_{A \in S_K}$ defined in this way satisfies the required conditions.
\end{proof}

% Using the previous lemma, we show show that problem \eqref{eq:generalized_barycenter} can be rewritten in terms of a different problem where we eliminate the $\widetilde{\mu}_i$ and reformulate the optimization in terms of a partition $\{\lambda_A\}_{A \in S_K}$ and $\{\mu_{i,A}\}_{A \in S_k(i)}$ for the measures $\lambda, \mu_1, \dots, \mu_K$.

\begin{proposition}
\label{prop:EquivalenceWithPartitions}
Problem \eqref{eq:generalized_barycenter} is equivalent to
\begin{equation}\label{eq:lambda_decomposed_formulation}
\begin{aligned}
    &\inf_{ \{ \lambda_A, \mu_{i,A} : i \in [K], A \in S_K\}} \sum_{A\in S_K}  \left\{ \lambda_A(\mathcal{X})+\sum_{i\in A} C(\lambda_A, \mu_{i,A}) \right\}\\
    &\textup{s.t. } \sum_{A\in S_K(i)} \mu_{i,A}=\mu_i  \textup{ for all } i\in [K].
\end{aligned}
\end{equation}
\end{proposition}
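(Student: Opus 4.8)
The plan is to prove equivalence by establishing inequalities in both directions between the optimal values of \eqref{eq:generalized_barycenter} and \eqref{eq:lambda_decomposed_formulation}, and to show that optimizers of one problem can be converted into optimizers (or near-optimizers) of the other. The key device is \textbf{Lemma} \ref{lem:DecomposistionA}, which lets us split pointwise densities, combined with a disintegration/measure-decomposition argument to turn that pointwise statement into a decomposition of measures.

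First I would show that \eqref{eq:lambda_decomposed_formulation} $\leq$ \eqref{eq:generalized_barycenter}, i.e.\ that any feasible configuration for the generalized barycenter problem gives rise to a feasible configuration for the decomposed problem with no larger cost. Take $\lambda, \widetilde{\mu}_1, \ldots, \widetilde{\mu}_K$ feasible for \eqref{eq:generalized_barycenter}, so $\widetilde{\mu}_i \leq \lambda$ for all $i$. Since $\widetilde{\mu}_i \ll \lambda$ we may write $\widetilde{\mu}_i = u_i \lambda$ with densities $u_i : \mathcal{X} \to [0,1]$ ($\lambda$-a.e.); discarding the part of $\lambda$ where $\max_i u_i < 1$ (which only decreases $\lambda(\mathcal{X})$ and keeps the constraints $\widetilde{\mu}_i \le \lambda$), we may assume $\max_i u_i = 1$ $\lambda$-a.e. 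Applying \textbf{Lemma} \ref{lem:DecomposistionA} pointwise — one checks the resulting coefficients $r_A(x)$ are Borel measurable, since they are obtained by elementary operations (sorting, differences) on the $u_i(x)$ — we obtain $\lambda$-measurable functions $r_A \geq 0$ with $\sum_{A \in S_K} r_A = 1$ and $u_i = \sum_{A \in S_K(i)} r_A$. Set $\lambda_A := r_A \lambda$. Then $\lambda_A(\mathcal{X})$ sums to $\lambda(\mathcal{X})$, and $\widetilde{\mu}_i = \sum_{A \in S_K(i)} \lambda_A$. Next, using an optimal coupling $\pi_i \in \Gamma(\mu_i, \widetilde{\mu}_i)$ realizing $C(\mu_i, \widetilde{\mu}_i)$, disintegrate $\pi_i$ with respect to its second marginal $\widetilde{\mu}_i$ and split mass according to the ratios $r_A / u_i$ on $\{u_i > 0\}$; this produces sub-couplings defining $\mu_{i,A}$ with $\mu_{i,A} \leq$ (image of) $\lambda_A$ in the appropriate sense, $\sum_{A \in S_K(i)} \mu_{i,A} = \mu_i$, and $\sum_{A \in S_K(i)} C(\mu_i \text{-part}, \mu_{i,A}) \leq C(\mu_i, \widetilde{\mu}_i)$ by superadditivity of transport cost under decomposition of the target. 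This yields a feasible point of \eqref{eq:lambda_decomposed_formulation} with cost $\leq \lambda(\mathcal{X}) + \sum_i C(\mu_i, \widetilde{\mu}_i)$.

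For the reverse inequality \eqref{eq:generalized_barycenter} $\leq$ \eqref{eq:lambda_decomposed_formulation}, take feasible $\{\lambda_A, \mu_{i,A}\}$ for \eqref{eq:lambda_decomposed_formulation}. Define $\lambda := \sum_{A \in S_K} \lambda_A$ and $\widetilde{\mu}_i := \sum_{A \in S_K(i)} (\text{the image of } \mu_{i,A} \text{ after transport into } \lambda_A)$ — more precisely, transport each $\mu_{i,A}$ onto some $\widetilde{\mu}_{i,A} \leq \lambda_A$ realizing $C(\lambda_A, \mu_{i,A})$ (note the convention in \eqref{eq:generalized_barycenter} that $C(\lambda_A, \mu_{i,A})$ here should be read as the generalized transport cost onto \emph{part} of $\lambda_A$; I would clarify this reading at the outset). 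Then $\widetilde{\mu}_i = \sum_{A \in S_K(i)} \widetilde{\mu}_{i,A} \leq \sum_{A \in S_K(i)} \lambda_A \leq \sum_{A \in S_K} \lambda_A = \lambda$, so $(\lambda, \widetilde{\mu}_1, \ldots, \widetilde{\mu}_K)$ is feasible for \eqref{eq:generalized_barycenter}, and by subadditivity of transport cost (concatenating the sub-transports) $\sum_i C(\mu_i, \widetilde{\mu}_i) \leq \sum_i \sum_{A \in S_K(i)} C(\mu_{i,A}, \widetilde{\mu}_{i,A}) = \sum_A \sum_{i \in A} C(\lambda_A, \mu_{i,A})$, while $\lambda(\mathcal{X}) = \sum_A \lambda_A(\mathcal{X})$. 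Hence the value of \eqref{eq:generalized_barycenter} is bounded by that of \eqref{eq:lambda_decomposed_formulation}.

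The main obstacle I anticipate is the bookkeeping in the measure-decomposition step: turning the pointwise allocation from \textbf{Lemma} \ref{lem:DecomposistionA} into genuine Borel measures $\lambda_A$, and then splitting the optimal couplings $\pi_i$ consistently with these allocations while keeping track of which $\mu_{i,A}$ sits under which $\lambda_A$. This requires a careful use of disintegration of measures on Polish spaces and measurable-selection arguments to guarantee the $r_A$ are Borel; the transport-cost (super/sub)additivity under gluing and splitting of couplings is standard but must be invoked cleanly. Existence of the minimizers referenced implicitly is granted by the remark following \textbf{Proposition} \ref{prop:existence}, so I would not belabor it. Once the decomposition machinery is in place, both inequalities follow, giving equality of optimal values and a recipe converting optimizers in each direction.
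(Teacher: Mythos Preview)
Your overall strategy is the same as the paper's: use \textbf{Lemma}~\ref{lem:DecomposistionA} pointwise plus disintegration to go from \eqref{eq:generalized_barycenter} to \eqref{eq:lambda_decomposed_formulation}, and glue pieces in the reverse direction. The one genuine confusion to fix is your reading of $C(\lambda_A,\mu_{i,A})$ in \eqref{eq:lambda_decomposed_formulation} as a \emph{partial} transport cost. It is not: $C$ is always the balanced OT cost of \eqref{eqn:OTProblemMu_i}, equal to $+\infty$ when the two arguments have different total mass (see the Remark right after \eqref{eqn:OTProblemMu_i}). So any finite-value configuration in \eqref{eq:lambda_decomposed_formulation} automatically satisfies $\lambda_A(\mathcal{X})=\mu_{i,A}(\mathcal{X})$ for every $i\in A$.

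Once you drop the partial-transport reading, both steps simplify to exactly what the paper does. In the first direction, after setting $\lambda_A:=r_A\lambda$, the right target measure for class $i$ and set $A\ni i$ is simply $\tilde\mu_{i,A}:=\lambda_A$ (not ``$\mu_{i,A}\le$ image of $\lambda_A$'', which is not the relevant relation). Disintegrating an optimal $\pi_i\in\Gamma(\mu_i,\tilde\mu_i)$ over its second marginal and restricting to $\tilde\mu_{i,A}=\lambda_A$ gives a sub-coupling $\pi_{i,A}$ whose second marginal is exactly $\lambda_A$; define $\mu_{i,A}$ as its first marginal. No ``superadditivity'' is needed: the pieces $\pi_{i,A}$ partition $\pi_i$, so $\sum_{A\in S_K(i)} C(\lambda_A,\mu_{i,A})\le \sum_{A\in S_K(i)}\int c\,d\pi_{i,A}=\int c\,d\pi_i=C(\mu_i,\tilde\mu_i)$. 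In the second direction, there is no ``transport onto some $\tilde\mu_{i,A}\le\lambda_A$'': an optimal plan $\pi_{i,A}\in\Gamma(\mu_{i,A},\lambda_A)$ has $\lambda_A$ itself as second marginal, so set $\tilde\mu_i:=\sum_{A\in S_K(i)}\lambda_A$ and $\pi_i:=\sum_{A\in S_K(i)}\pi_{i,A}$ directly. With this correction your proposal matches the paper's proof.
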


\begin{proof} We split the proof into two parts.

\textbf{Step 1:}
Suppose that $\lambda , \tilde{\mu}_1, \dots, \tilde \mu_K$ is feasible for problem \eqref{eq:generalized_barycenter}. In particular, $\lambda \geq \tilde \mu _i$ for all $i$. Let us denote by $\frac{d \tilde \mu_i}{d \lambda}$ the Radon-Nikodym derivative of $\tilde \mu_i$ w.r.t. $\lambda$. Notice that $\frac{d \tilde \mu_i}{d\lambda} \leq 1$ because $\lambda$ dominates $\tilde \mu_i$. Moreover, without the loss of generality we can assume that for every $x \in \spt(\lambda)$ we have $\max_{i=1, \dots, K} \frac{d \tilde \mu_i}{d \lambda}(x) =1$, for otherwise we could modify $\lambda$ and potentially reduce the energy in \eqref{eq:generalized_barycenter} while maintaining the constraints. 

For each $x \in \spt(\lambda)$ we  apply Lemma \ref{lem:DecomposistionA} with $u_i(x) := \frac{d\tilde \mu_i}{d\lambda}(x)$ to obtain a collection of scalars $\{ r_A(x) \}_{A \in S_K}$ satisfying:
\begin{enumerate}
    \item[(i)] $1=\sum_{A \in S_K} r_A(x) $.
    \item[(ii)] $ u_i(x) = \sum_{A \in S_K(i)} r_A(x)$ for all $i=1,\dots,k$.
\end{enumerate}
Notice that the functions $r_A(\cdot)$ can be constructed in a measurable way as it follows from the proof of Lemma \ref{lem:DecomposistionA}.
For each $A\in S_K$ we define the measure $\lambda_A$ as
\[ \frac{d \lambda_A}{d\lambda}(x):= r_A(x),  \]
and for $A$ and $i \in A$ we define 
\[ \tilde \mu_{i,A}:= \lambda_A. \]
See Figure \ref{plot: combined_figure} (a) for an illustration of the $\lambda_A$'s. From the above definitions and the properties of the functions $r_A$ we deduce 
\[ \sum_{A \in S_K} d\lambda_A(x) = \sum_{A \in S_K}  r_A(x) d\lambda(x)  = d\lambda(x)   \] 
and
\[ \sum_{A \in S_K(i)} d\tilde \mu_{i,A}(x) = \sum_{A \in S_K(i)}  r_A(x) d\lambda(x)  =  \frac{d \tilde \mu_i}{d\lambda }(x) d\lambda(x) =  d \tilde \mu_i(x). \]

Now, let $\pi_i \in \Gamma(\mu_i, \tilde \mu_{i})$ be a coupling realizing the cost $C(\mu_i, \tilde \mu_i)$, i.e., a minimizer of \eqref{eqn:OTProblemMu_i}, and use the disintegration theorem to write it as
\[ d\pi_i(x, \tilde x) = d\pi_i^*(x |\tilde x) d\tilde \mu_i( \tilde x),    \]
where $d\pi_i^*(\cdot |\tilde x)$ is the conditional of $x$ given $\tilde x$ according to the joint distribution $\pi_i^*$. For each $A \in S_K$ and $i \in A$ we define the measure $\pi_{i,A}$ according to
\[  d\pi_{i,A} (x, \tilde x) := d\pi_i^*(x|\tilde x) d\tilde \mu_{i,A}(\tilde x). \]
Finally, we set $\mu_{i,A}$ to be the first marginal of $\pi_{i,A}$. 

It is now straightforward to show that $\{ \lambda_A, \mu_{i,A} \}$ is feasible for \eqref{eq:lambda_decomposed_formulation}. Moreover, 
\[ \lambda(\X) + \sum_{i=1}^k C(\mu_i, \tilde \mu_i) \geq \sum_{A\in S_K}  \left\{ \lambda_A(\mathcal{X})+\sum_{i\in A} C(\lambda_A, \mu_{i,A}) \right\}. \]

\textbf{Step 2:}
Conversely, suppose that $\{\lambda_A\}_{A}$, $\{  \mu_{i,A}\}_A$ is feasible for \eqref{eq:lambda_decomposed_formulation}. Set $\lambda:=  \sum_{A \in S_K} \lambda_A$ and for every $i$ let $\tilde \mu_i := \sum_{A \in S_{k}(i)} \lambda _A$. Clearly we have $\lambda \geq \tilde \mu_{i}$ for all $i$. Moreover, let $\pi_{i,A} \in \Gamma(\mu_{i,A}, \lambda_A)$ realizing the cost $C(\lambda_A,\mu_{i,A})$. See (b) of Figure \ref{plot: combined_figure} to understand how $\mu_{i,A}$ is transported to $\lambda_A$. Finally, for each $i$ we set
\[ \pi_i:= \sum_{A \in S_K(i)} \pi_{i,A}.   \]
With these constructions it is now straightforward to show that
\[  \sum_{A\in S_K}  \left\{ \lambda_A(\mathcal{X})+\sum_{i\in A} C(\lambda_A, \mu_{i,A}) \right\}\geq \lambda(\X) + \sum_{i=1}^k C(\mu_i, \tilde \mu_i).
\]
\end{proof}

\begin{figure}[h]
    \centering
	\includegraphics[scale=0.35]{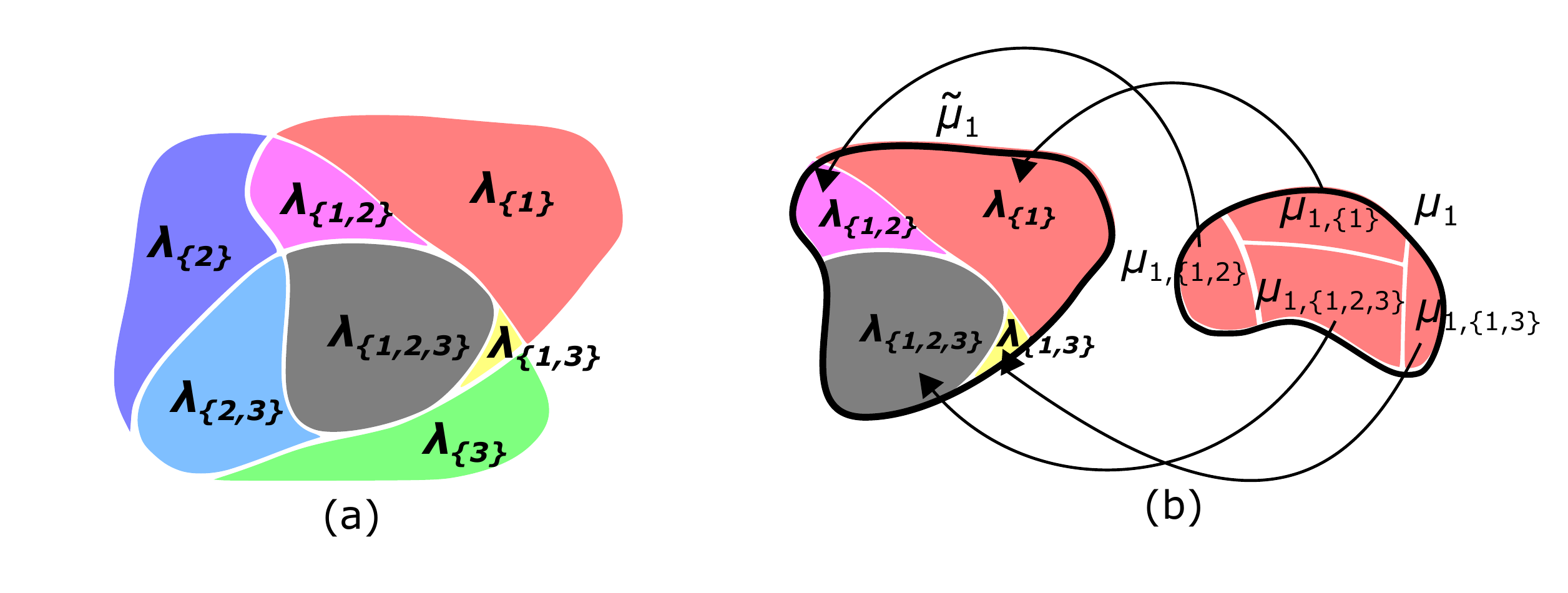}
	\centering
	\caption{(a) : Illustration of a partition of $\lambda$. (b) : Illustration of the transport from $\mu_{1, A}$'s to $\lambda_A$'s.}
	\label{plot: combined_figure}
\end{figure}

%Given an optimal plan $\pi_{i,A} \in \Gamma(\widetilde{\mu}_i, \mu_i)$ (optimal for \eqref{eqn:OTProblemMu_i}) and $A\in S_K(i)$ we define
%\[
%\mu_{i,A}(E):=\int_{D(A)\times E} d\pi_i(x',x)
%\]
%for every measurable $E\subseteq \mathcal{X}$. Again, this %allows us to partition $\mu_i=\sum_{A\in S_K(i)}\mu_{i,A}$
%since the sets $D(A)$ are disjoint and their union recovers the support of $\lambda$.

\begin{figure}[h]
    \centering
	\includegraphics[scale=0.35]{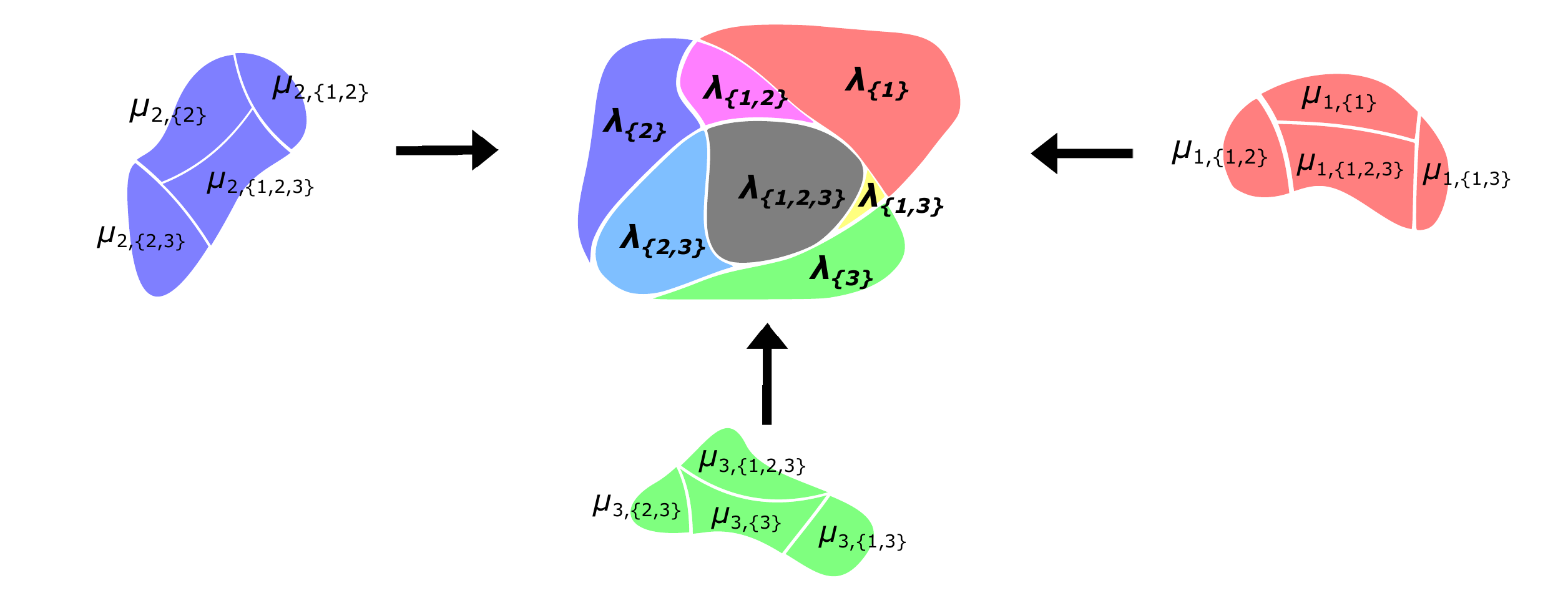}
	\centering
	\caption{Picture for \eqref{eq:lambda_decomposed_formulation}. Each of $\mu_{i,A}$'s is transported to $\lambda_A$ for all $i \in A$.}
	\label{plot: mu_lambda_2}
\end{figure}

\begin{remark}
Figure \ref{plot: mu_lambda_2} illustrates the partitions for $\lambda$ and the $\mu_i$'s. To keep notation from getting too complicated, in the sequel we shall assume that $\mu_{i,A}$ is defined for all $i\in [K]$ and $A\subseteq S_K$, however, note that if $i\notin A$, then $\mu_{i,A}$ plays no role in the optimization \eqref{eq:lambda_decomposed_formulation}.
\end{remark}

Suppose that for some $A\in S_K$ we fix a choice of $\mu_{i,A}$ for all $i\in A$.  With the $\mu_{i,A}$ fixed,  we can determine the corresponding optimal $\lambda_A^*=\lambda_A^*(\mu_{1,A},\ldots, \mu_{K,A})$ by solving the classic Wasserstein barycenter problem. Indeed, the optimal choice must be an element of
\begin{equation}\label{eq:classical_A}
\argmin_{\lambda_A} \sum_{i\in A} C(\lambda_A, \mu_{i,A}).
\end{equation}
Note that here we do not need to consider the mass of $\lambda_A$, since the value of the optimization problem will be $+\infty$ if $\lambda_A$ does not have the same mass as all of the $\mu_{i,A}$ (or if the $\mu_{i,A}$ themselves do not all have the same mass).

It is well known that problem \eqref{eq:classical_A} can be reformulated as a multimarginal optimal transport problem \cite{MR2801182}; see also our subsection \ref{sec:GeneralMOT}. To that end, given $A\subseteq [K]$,
define $c_A: \mathcal{X}^K \to \RR$ 
\begin{equation}\label{eq:cost_A}
c_A(x_1, \ldots, x_K):=\inf_{x'\in \mathcal{X}} \sum_{i\in A} c(x',x_i),
\end{equation}
and $T_A:\mathcal{X}^K\to \mathcal{X}$
\begin{equation}\label{eq:barycenter_A}
T_A(x_1, \ldots, x_K):=\argmin_{x'\in \mathcal{X}} \sum_{i\in A} c(x',x_i).
\end{equation}
\begin{remark}
\label{rem:Barycenter}
If $\argmin_{x'\in \mathcal{X}} \sum_{i\in A} c(x',x_i)$ is not unique, we can consider using an additional selection procedure. For example, when $\X=\R^d$ we can still recover a unique mapping by choosing $T_A$ to be the element of $\argmin_{x'\in \mathcal{X}} \sum_{i\in A} c(x',x_i)$ that is closest (in the Euclidean distance) to the Euclidean barycenter $\frac{1}{|A|}\sum_{i\in A} x_i$.
\end{remark}

With the definition of $c_A$, we can rewrite \eqref{eq:classical_A} as the multimarginal optimal transport problem
\begin{equation}\label{eq:multi_A}
    \inf_{\pi_A} \int_{\mathcal{X}^K} c_A(x_1, \ldots, x_K)d\pi_A(x_1, \ldots, x_K) \quad \textup{s.t. } \mathcal{P}_{i\,\#}\pi_A=\mu_{i,A}  \textup{ for all } i\in A,
\end{equation}
where $\mathcal{P}_i$ is the projection map $(x_1, \ldots, x_K)\mapsto x_i$.
Again, even though $\pi_A$ is defined over $\mathcal{X}^K$,  only the coordinates $i$ where $i\in A$ play a role in the optimization problem. Indeed, $c_A$ is independent of the other coordinates and we only have marginal constraints for $i\in A$.

Using \eqref{eq:multi_A} we can now eliminate $\lambda_A$ and all of the $\mu_{i,A}$'s from problem \eqref{eq:lambda_decomposed_formulation} and reformulate the optimization as the multimarginal problem
\begin{equation}\label{eq:multimarginal_decomposed}
\begin{aligned}
    &\inf_{ \{ \pi_A : A \in S_K\} } \sum_{A\in S_K} \int_{\mathcal{X}^K} \big(c_A(x_1,\ldots, x_K)+1\big) d\pi_A(x_1, \dots, x_K)\\
    &\textup{s.t.} \sum_{A\in S_K(i)}\mathcal{P}_{i\,\#}\pi_A=\mu_i  \textup{ for all } i\in [K].
\end{aligned}
\end{equation}

The next two propositions formally prove the equivalence between \eqref{eq:lambda_decomposed_formulation} and \eqref{eq:multimarginal_decomposed}. They will also allow us to establish some important geometric properties of optimal generalized barycenters.

\begin{proposition}\label{prop:multimarginal_to_lambda}
Let $c$ be a cost satisfying \textbf{Assumption} \ref{assump:CostStructure}.
Given measures $\mu_1, \ldots, \mu_K$, let $\{\pi_A\}_{A\in S_K}$ be a feasible solution to \eqref{eq:multimarginal_decomposed}. For each $(x_1,\ldots, x_K)\in \mathcal{X}^K$ and $A\in S_K$, let $f_A(x_1,\ldots, x_K)$ be a choice of element in $T_A(x_1,\ldots x_K)$, where we recall the definition of $T_A(x_1,\ldots, x_K)$ from \eqref{eq:barycenter_A}. 

If for each $A\in S_K$ and $i\in A$ we set $\tilde{\lambda}_A=f_{A\,\#}\pi_A$ and $\tilde{\mu}_{i,A}=\mathcal{P}_{i\,\#}\pi_A$, then $\{\tilde{\lambda}_A, \tilde{\mu}_{i,A} : A\in S_K, i\in A \}$ is a feasible solution to \eqref{eq:lambda_decomposed_formulation} and
\[
\sum_{A\in S_K} \tilde{\lambda}_A(\mathcal{X})+\sum_{i\in A}C(\tilde{\lambda}_A,\tilde{\mu}_{i,A})\leq \sum_{A\in S_K} \int_{\mathcal{X}^K} \big(c_A(x_1,\ldots, x_K)+1\big)d\pi_A(x_1,\ldots, x_K).
\]
\end{proposition}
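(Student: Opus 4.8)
The plan is to read off feasibility from the marginal constraints, verify the cost inequality term by term over $A\in S_K$, and reduce everything to the defining property of the barycenter map $T_A$; the only genuine technical point is the measurable-selection step needed to make sense of $\tilde{\lambda}_A=f_{A\,\#}\pi_A$.

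First I would dispose of feasibility. By definition $\tilde{\mu}_{i,A}=\mathcal{P}_{i\,\#}\pi_A$ for $i\in A$, so the marginal constraint $\sum_{A\in S_K(i)}\mathcal{P}_{i\,\#}\pi_A=\mu_i$ built into \eqref{eq:multimarginal_decomposed} is literally the constraint $\sum_{A\in S_K(i)}\tilde{\mu}_{i,A}=\mu_i$ of \eqref{eq:lambda_decomposed_formulation}. Next I would address the selection $f_A$. Without loss of generality one may assume $\int_{\mathcal{X}^K}c_A\,d\pi_A<\infty$ for every $A$ (otherwise the right-hand side of the claimed inequality is $+\infty$ and there is nothing to prove), so $c_A(x_1,\dots,x_K)<\infty$ for $\pi_A$-a.e.\ $(x_1,\dots,x_K)$. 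At such a point, taking a minimizing sequence $x'_n$ in \eqref{eq:cost_A} we have $\sup_n c(x'_n,x_i)<\infty$ for each $i\in A$; since $\{x_i\}$ is bounded, the compactness property in \textbf{Assumption}~\ref{assump:CostStructure} makes $\{x'_n\}$ precompact, and lower semicontinuity of $c$ then shows the infimum is attained, i.e.\ $T_A(x_1,\dots,x_K)\neq\emptyset$ (and compact). A standard measurable-selection theorem (Kuratowski--Ryll-Nardzewski), or the explicit rule of \textbf{Remark}~\ref{rem:Barycenter} when $\mathcal{X}=\R^d$, yields a Borel map $f_A$ with $f_A(x_1,\dots,x_K)\in T_A(x_1,\dots,x_K)$ off a $\pi_A$-null set (set $f_A$ arbitrarily on that null set). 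Then $\tilde{\lambda}_A:=f_{A\,\#}\pi_A$ is a well-defined finite positive measure on $\mathcal{X}$ with $\tilde{\lambda}_A(\mathcal{X})=\pi_A(\mathcal{X}^K)$.

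For the cost comparison, fix $A\in S_K$ and $i\in A$ and glue through the barycenter: the pushforward $\gamma_{i,A}:=(f_A,\mathcal{P}_i)_{\#}\pi_A$ has first marginal $f_{A\,\#}\pi_A=\tilde{\lambda}_A$ and second marginal $\mathcal{P}_{i\,\#}\pi_A=\tilde{\mu}_{i,A}$, hence $\gamma_{i,A}\in\Gamma(\tilde{\lambda}_A,\tilde{\mu}_{i,A})$ and, recalling \eqref{eqn:OTProblemMu_i},
\[
C(\tilde{\lambda}_A,\tilde{\mu}_{i,A})\leq\int c(x',x)\,d\gamma_{i,A}(x',x)=\int_{\mathcal{X}^K}c\big(f_A(x_1,\dots,x_K),x_i\big)\,d\pi_A(x_1,\dots,x_K).
\]
Summing over $i\in A$ and using that $f_A(x_1,\dots,x_K)\in\argmin_{x'}\sum_{i\in A}c(x',x_i)$, so that $\sum_{i\in A}c(f_A(x_1,\dots,x_K),x_i)=c_A(x_1,\dots,x_K)$ by \eqref{eq:cost_A} and \eqref{eq:barycenter_A}, gives $\sum_{i\in A}C(\tilde{\lambda}_A,\tilde{\mu}_{i,A})\leq\int_{\mathcal{X}^K}c_A\,d\pi_A$. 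Adding $\tilde{\lambda}_A(\mathcal{X})=\int_{\mathcal{X}^K}1\,d\pi_A$ and summing over $A\in S_K$ produces the asserted inequality.

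I expect the measurable-selection/attainment step to be the main (and essentially the only) obstacle: one must confirm that $T_A$ has nonempty values $\pi_A$-almost everywhere and admits a Borel selection, which is exactly where \textbf{Assumption}~\ref{assump:CostStructure} is used. The remaining arguments are routine bookkeeping with pushforwards and marginal identities.
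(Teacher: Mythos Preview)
Your proof is correct and in fact slightly more direct than the paper's. The feasibility argument and the handling of the mass identity $\tilde{\lambda}_A(\mathcal{X})=\pi_A(\mathcal{X}^K)$ coincide with the paper's. The difference lies in how the cost inequality is obtained: the paper bounds $C(\tilde{\lambda}_A,\tilde{\mu}_{i,A})$ via Kantorovich duality, testing against arbitrary potentials $\vp_{i,A},\psi_{i,A}$ with $\vp_{i,A}(x)-\psi_{i,A}(x')\leq c(x,x')$, pushing the $\psi$-integral through $f_A$, and then taking a supremum over potentials; you instead build the explicit primal coupling $\gamma_{i,A}=(f_A,\mathcal{P}_i)_{\#}\pi_A\in\Gamma(\tilde{\lambda}_A,\tilde{\mu}_{i,A})$ and invoke the infimum definition of $C$ directly. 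Your route avoids the duality detour and is marginally shorter; the paper's route has the advantage of making the argument uniform with the dual-based reasoning used elsewhere in Section~\ref{sec:GenBar}. You also treat the measurable-selection and attainment issues for $f_A$ more explicitly than the paper, which simply posits ``a choice of element in $T_A$'' without further comment; your use of \textbf{Assumption}~\ref{assump:CostStructure} to guarantee nonemptiness of $T_A$ on the relevant set is exactly right.
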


\begin{proof}
Since $\sum_{A\in S_K(i)}\mathcal{P}_{i\,\#}\pi_A=\mu_{i}$, it is automatic that $\sum_{A\in S_K(i)}\tilde{\mu}_{i,A}=\mu_i$. Since pushforwards do not affect the total mass of a measure, so we also have $\tilde{\mu}_{i,A}(\mathcal{X})=\tilde{\lambda}_A(\mathcal{X})$ for all $A\in S_K$ and $i\in A$. Hence, $\{\tilde{\lambda}_A, \tilde{\mu}_{i,A}\}_{A\in S_K, i\in A}$ is a feasible solution to \eqref{eq:lambda_decomposed_formulation}.

For each $A\in S_K$ and $i\in A$, choose $\vp_{i,A}, \psi_{i,A}\in C_b(\mathcal{X})$ that satisfy, for all $x,x'\in \mathcal{X}$,
\[
\vp_{i,A}(x)-\psi_{i,A}(x')\leq c(x,x').
\]
We can then compute
\begin{align*}
    &\int_{\mathcal{X}} \vp_{i,A}(x_i)d\tilde{\mu}_{i,A}(x_i)-\int_{\mathcal{X}}\psi_{i,A}(x')d\tilde{\lambda}_A(x')\\
    =& \int_{\mathcal{X}} \vp_{i,A}(x_i)d\tilde{\mu}_{i,A}(x_i)-\int_{\mathcal{X}^K} \psi_{i,A}(f_A(x_1,\ldots, x_K))d\pi_{A}(x_1,\ldots, x_K)\\
   \leq& \int_{\mathcal{X}} \vp_{i,A}(x_i)d\tilde{\mu}_{i,A}(x_i)+\int_{\mathcal{X}^K} \Big(c\big(x_i, f_A(x_1,\ldots, x_K)\big)-\vp_{i,A}(x_i)\Big)d\pi_{A}(x_1,\ldots, x_K)\\
    =&\int_{\mathcal{X}^K} c\big(x_i, f_A(x_1,\ldots, x_K)\big)d\pi_{A}(x_1,\ldots, x_K).
\end{align*}
Thus, 
\begin{align*}
    &\sum_{i\in A}\int_{\mathcal{X}} \vp_{i,A}(x_i)d\tilde{\mu}_{i,A}(x_i)-\int_{\mathcal{X}}\psi_{i,A}(x')d\tilde{\lambda}_A(x')\\
    \leq& \int_{\mathcal{X}^K} \sum_{i\in A}c\big(x_i, f_A(x_1,\ldots, x_K)\big)d\pi_{A}(x_1,\ldots, x_K)\\
    =&\int_{\mathcal{X}^K} c_A(x_1,\ldots, x_K)d\pi_{A}(x_1,\ldots, x_K),
\end{align*}
where we have used the definition of $f_A, T_A$, and $c_A$ to obtain the last equality. Hence, 
\begin{align*}
    &\sum_{A\in S_K} \tilde{\lambda}_A(\mathcal{X})+\sum_{i\in A}\int_{\mathcal{X}} \vp_{i,A}(x_i)d\tilde{\mu}_{i,A}(x_i)-\int_{\mathcal{X}}\psi_{i,A}(x')d\tilde{\lambda}_A(x')\\
    \leq& \sum_{A\in S_K}\int_{\mathcal{X}^K}  \big(c_A(x_1,\ldots, x_K)+1\big)d\pi_{A}(x_1,\ldots, x_K).
\end{align*}
Taking the supremum over all admissible choices of $\vp_{i,A}, \psi_{i,A}$ and exploiting the dual formulation of optimal transport,
\begin{equation*}
    \sum_{A\in S_K}\tilde{\lambda}_A(\mathcal{X})+\sum_{i\in A} C(\tilde{\lambda}_A, \tilde{\mu}_{i,A})\leq \sum_{A\in S_K}\int_{\mathcal{X}^K}  \big(c_A(x_1,\ldots, x_K)+1\big)d\pi_{A}(x_1,\ldots, x_K),
\end{equation*}
which is the desired result we want.
\end{proof}

In the next proposition we will show that any feasible solution of problem \eqref{eq:lambda_decomposed_formulation} induces a feasible solution of \eqref{eq:multimarginal_decomposed} with lesser or equal value. This will prove the equivalence between problems \eqref{eq:lambda_decomposed_formulation} and \eqref{eq:multimarginal_decomposed} and will provide a powerful geometric characterization of optimal generalized barycenters.

\begin{proposition}\label{prop:lambda_to_multimarginal}
Let $c$ be a cost satisfying \textbf{Assumption} \ref{assump:CostStructure}.
Given measures $\mu_1,\ldots, \mu_K$,  let $\mu_{i,A}, \lambda_A$ be feasible solutions to problem \eqref{eq:lambda_decomposed_formulation}. Let $\gamma_{i,A}\in \mathcal{M}(\mathcal{X}\times \mathcal{X})$ be an optimal plan for the transport of $\mu_{i,A}$ to $\lambda_A$ with respect to the cost $c$.  Let $\gamma_A\in \mathcal{M}(\mathcal{X}^{K+1})$ such that for all $i\in A$ and $g\in C_b(\mathcal{X}\times \mathcal{X})$
\[
\int_{\mathcal{X}^{K+1}} g(x_i, x')d\gamma_A(x_1,\ldots, x_K, x')=\int_{\mathcal{X}^{K+1}} g(x_i, x')d\gamma_{i,A}(x_i,x').
\]
If we define $\tilde{\pi}_A$ on $\mathcal{X}^K$ such that for any $h\in C_b(\mathcal{X}^K)$ we have 
\begin{equation*}
    \int_{\mathcal{X}^K} h(x_1,\ldots, x_K)d\pi_A(x_1,\ldots, x_K)=\int_{\mathcal{X}^{K+1}} h(x_1,\ldots, x_K)d\gamma_A(x_1,\ldots, x_K,x'),
\end{equation*}
then $\tilde{\pi}_A$ is a feasible solution to \eqref{eq:multimarginal_decomposed} and 
\[
\sum_{A\in S_K} \int_{\mathcal{X}^K} \big(c_A(x_1,\ldots, x_K)+1\big)d\tilde{\pi}_A(x_1,\ldots, x_K)\leq \sum_{A\in S_K} \lambda_A(\mathcal{X})+\sum_{i\in A}C(\lambda_A,\mu_{i,A}).
\]
Therefore, $\eqref{eq:lambda_decomposed_formulation}=\eqref{eq:multimarginal_decomposed}$.
\end{proposition}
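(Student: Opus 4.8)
The plan is to build the competitor $\gamma_A$ of the statement by the classical gluing construction, verify feasibility and the cost inequality by direct computation, and then combine the resulting inequality with \textbf{Proposition} \ref{prop:multimarginal_to_lambda} to conclude $\eqref{eq:lambda_decomposed_formulation}=\eqref{eq:multimarginal_decomposed}$. First I would dispose of the degenerate cases: if $\{\mu_{i,A},\lambda_A\}$ has infinite cost there is nothing to prove, so we may assume the cost is finite, which forces $\mu_{i,A}(\mathcal{X})=\lambda_A(\mathcal{X})=:m_A$ for every $A\in S_K$ and $i\in A$ (and the $\mu_{i,A}$, $i\in A$, to share a common mass), exactly as remarked after \eqref{eq:classical_A}; when $m_A=0$ set $\gamma_A=0$. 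For each remaining $A$ fix an optimal plan $\gamma_{i,A}$ realizing $C(\lambda_A,\mu_{i,A})$, with the $\lambda_A$-slot in the role of the barycentric variable $x'$ of \eqref{eq:cost_A}; such plans exist by the direct method used in the proof of \textbf{Proposition} \ref{prop:existence}, applied in the two-marginal case.

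Next I would carry out the gluing. Since $\mathcal{X}$ is Polish, disintegrate each $\gamma_{i,A}$ against its $\lambda_A$-marginal to obtain a Borel family $y\mapsto \gamma_{i,A}(\cdot\mid y)$ of probability measures on $\mathcal{X}$, and define $\gamma_A\in\mathcal{M}(\mathcal{X}^{K+1})$, with the $K+1$ coordinates labelled $(x_1,\dots,x_K,x')$, by
\[
\gamma_A:=\int_{\mathcal{X}} \Big(\bigotimes_{i\in A}\gamma_{i,A}(\cdot\mid y)\Big)\otimes\Big(\bigotimes_{j\notin A}\delta_{x_0}\Big)\otimes\delta_{y}\;\;\lambda_A(dy),
\]
where $x_0\in\mathcal{X}$ is an arbitrary fixed point placed in the slots $j\notin A$ (the choice is immaterial, since $c_A$ ignores those slots and there is no marginal constraint on them). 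A one-line check shows $\int g(x_i,x')\,d\gamma_A=\int g\,d\gamma_{i,A}$ for every $i\in A$ and $g\in C_b(\mathcal{X}\times\mathcal{X})$, i.e.\ $\gamma_A$ is as required; let $\tilde{\pi}_A$ be the marginal of $\gamma_A$ on the coordinates $(x_1,\dots,x_K)$.

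The remaining steps are bookkeeping. For $i\in A$, $\mathcal{P}_{i\,\#}\tilde{\pi}_A$ is the first marginal of $\gamma_{i,A}$, namely $\mu_{i,A}$, whence $\sum_{A\in S_K(i)}\mathcal{P}_{i\,\#}\tilde{\pi}_A=\sum_{A\in S_K(i)}\mu_{i,A}=\mu_i$ and $\{\tilde{\pi}_A\}$ is feasible for \eqref{eq:multimarginal_decomposed}. Also $\tilde{\pi}_A(\mathcal{X}^K)=\gamma_A(\mathcal{X}^{K+1})=\gamma_{i,A}(\mathcal{X}\times\mathcal{X})=m_A=\lambda_A(\mathcal{X})$, so the additive constant $1$ contributes exactly $\lambda_A(\mathcal{X})$. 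Finally, since $c_A(x_1,\dots,x_K)=\inf_{x'}\sum_{i\in A}c(x',x_i)\le\sum_{i\in A}c(x',x_i)$ pointwise at the $x'$-coordinate carried by $\gamma_A$,
\[
\int_{\mathcal{X}^K}c_A\,d\tilde{\pi}_A=\int_{\mathcal{X}^{K+1}}c_A\,d\gamma_A\le\sum_{i\in A}\int_{\mathcal{X}^{K+1}}c(x',x_i)\,d\gamma_A=\sum_{i\in A}\int_{\mathcal{X}\times\mathcal{X}}c(x',x_i)\,d\gamma_{i,A}=\sum_{i\in A}C(\lambda_A,\mu_{i,A}).
\]
Summing over $A\in S_K$ gives the displayed inequality of the proposition; since \textbf{Proposition} \ref{prop:multimarginal_to_lambda} yields the reverse inequality between the two infima, $\eqref{eq:lambda_decomposed_formulation}=\eqref{eq:multimarginal_decomposed}$.

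The hard part will be the gluing step itself: one must invoke the disintegration theorem for the finite (non-probability) measures $\gamma_{i,A}$ on the Polish space $\mathcal{X}$, make the gluing along the shared marginal $\lambda_A$ precise when $\lambda_A$ need not be a probability measure and may vanish, and confirm Borel measurability of $y\mapsto\gamma_{i,A}(\cdot\mid y)$ so that $\gamma_A$ is a genuine Borel measure on $\mathcal{X}^{K+1}$. Once $\gamma_A$ is in hand, the marginal identities, the mass count for the additive constant, and the pointwise bound by $c_A$ are all immediate.
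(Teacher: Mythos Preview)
Your proposal is correct and follows essentially the same route as the paper: build $\gamma_A$ by gluing the plans $\gamma_{i,A}$ along their common $\lambda_A$-marginal, project to $\tilde{\pi}_A$, use the pointwise bound $c_A(x_1,\dots,x_K)\le\sum_{i\in A}c(x',x_i)$ at the extra coordinate carried by $\gamma_A$, and reduce each summand back to $C(\lambda_A,\mu_{i,A})$ via the two-marginal property. Your write-up is in fact more explicit than the paper's on two points the authors leave implicit---the actual gluing construction via disintegration (the paper simply asserts ``the marginal constraints are compatible, thus each $\gamma_A$ is well-defined'') and the verification that $\{\tilde{\pi}_A\}$ is feasible for \eqref{eq:multimarginal_decomposed}---so there is nothing to correct.
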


\begin{proof}
We begin by noting that the marginal constraints on $\gamma_A$ are compatible in the sense that for any $g \in C_b(\mathcal{X})$ and $i\in A$ we have 
\[
\int_{\mathcal{X}} g(x')d\gamma_{i,A}(x_i,x')=\int_{\mathcal{X}}g(x')d\lambda_A(x').
\]
Thus, each $\gamma_A$ is well-defined.

Using the definition of $d\tilde{\pi}_A$ and then $c_A$, it follows that
\begin{align*}
    &\sum_{A\in S_K} \int_{\mathcal{X}^K} \big(c_A(x_1,\ldots, x_K)+1
\big)d\tilde{\pi}_A(x_1,\ldots, x_K)\\
    =&\sum_{A\in S_K} \int_{\mathcal{X}^{K+1}} \big(c_A(x_1,\ldots, x_K)+1
\big)d\gamma_A(x_1,\ldots, x_K,x')\\
    \leq& \sum_{A\in S_K} \int_{\mathcal{X}^{K+1}} \big(1+\sum_{i\in A} c(x_i,x')
\big)d\gamma_A(x_1,\ldots, x_K,x')\\
    =& \sum_{A\in S_K} \int_{\mathcal{X}^{K+1}} \big(1+\sum_{i\in A} c(x_i,x')
\big)d\gamma_{i,A}(x_i,x')\\
    =&\sum_{A\in S_K} \lambda_A(\mathcal{X})+C(\mu_{i,A},\lambda_A)
\end{align*}
where the final equality follows from the fact that $\gamma_{i,A}$ is an optimal plan for the transport of $\mu_{i,A}$ to $\lambda_A$.
\end{proof}

In addition to proving the equivalence between problems \eqref{eq:lambda_decomposed_formulation} and \eqref{eq:multimarginal_decomposed}, \textbf{Proposition} \ref{prop:multimarginal_to_lambda} and \textbf{Proposition} \ref{prop:lambda_to_multimarginal} have the following very important geometric consequences.

\begin{corollary}\label{cor:barycenter_geometry}
Let $c$ be a cost satisfying \textbf{Assumption} \ref{assump:CostStructure}. Given measures $\mu_1,\ldots, \mu_K$, let $\lambda$ be an optimal generalized barycenter and let $\{\lambda_A\}_{A\in S_K}$ be a decomposition of $\lambda$ and $\{\mu_{i,A}\}_{A\in S_K(i)}$ a decomposition of each $\mu_i$ that are optimal for \eqref{eq:lambda_decomposed_formulation}. Recalling \eqref{eq:barycenter_A}, let $T_A(x_1,\ldots,x_K):=\argmin_{x\in\mathcal{X}} \sum_{i\in A}c(x,x_i)$.  If we define $T_A:=\{T_A(x_1,\ldots, x_K): x_1 \in \spt (\mu_1),\ldots, x_K \in \spt(\mu_K)\}$ and $T=\cup_{A \subseteq [K]} T_A$, then $\lambda_A(\mathcal{X})=\lambda_A(T_A)$, $\lambda(\mathcal{X})=\lambda(T)$ and the optimal measures $\widetilde{\mu}_i$ in \eqref{eq:generalized_barycenter} can be assumed to satisfy $\widetilde{\mu}_i(\X ) = \widetilde{\mu}_i(T) $ as well.  

In particular, if $f_A(x_1,\ldots, x_K)$ is a choice of element from $T_A(x_1,\ldots, x_K)$ for each $A\in S_K$ and $(x_1,\ldots, x_K)\in \mathcal{X}^K$, then there exists an optimal barycenter $\lambda_f$ such that $\lambda_f(\mathcal{X})=\lambda_f(F)$ where $F=\bigcup_{A\in S_K}\bigcup_{(x_1,\ldots x_K)\in \spt(\mu_1)\times \cdots \times \spt(\mu_K)} f_A(x_1,\ldots, x_K)$.
\end{corollary}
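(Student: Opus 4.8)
The plan is to derive everything from the equivalences already established in this section — \textbf{Proposition}~\ref{prop:EquivalenceWithPartitions} between \eqref{eq:generalized_barycenter} and \eqref{eq:lambda_decomposed_formulation}, and \textbf{Proposition}~\ref{prop:multimarginal_to_lambda}--\textbf{Proposition}~\ref{prop:lambda_to_multimarginal} between \eqref{eq:lambda_decomposed_formulation} and \eqref{eq:multimarginal_decomposed} — together with the explicit constructions in their proofs. The key reduction is the first assertion, that each $\lambda_A$ in the given optimal decomposition is concentrated on $T_A$; granting this, the remaining statements are immediate. Indeed $\lambda=\sum_{A\in S_K}\lambda_A$, so $\lambda(\mathcal{X})=\sum_A\lambda_A(\mathcal{X})=\sum_A\lambda_A(T_A)=\lambda(T)$, and, taking $\widetilde\mu_i:=\sum_{A\in S_K(i)}\lambda_A$ (an optimal choice for \eqref{eq:generalized_barycenter} by Step~2 of the proof of \textbf{Proposition}~\ref{prop:EquivalenceWithPartitions}; in fact Step~1 shows every optimal $\widetilde\mu_i$ is of this form), each $\widetilde\mu_i$ is a finite sum of measures concentrated on the $T_A\subseteq T$, hence $\widetilde\mu_i(\mathcal{X})=\widetilde\mu_i(T)$.

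To prove the first assertion, fix $A\in S_K$. With the $\mu_{i,A}$, $i\in A$, held fixed, the $A$-summand in \eqref{eq:lambda_decomposed_formulation} is $\lambda_A(\mathcal{X})+\sum_{i\in A}C(\lambda_A,\mu_{i,A})$; since the value is finite at the optimum, $\lambda_A$ is forced to share the common mass of the $\mu_{i,A}$, so $\lambda_A$ must be a minimizer of the classical barycenter problem \eqref{eq:classical_A}. I would then pick optimal plans $\gamma_{i,A}\in\Gamma(\mu_{i,A},\lambda_A)$ and glue them along their common second marginal $\lambda_A$ to get $\sigma\in\mathcal M_+(\mathcal{X}^K\times\mathcal{X})$ whose $(x_i,x')$-marginal equals $\gamma_{i,A}$ for every $i\in A$ (the coordinates $i\notin A$ being immaterial); note the last-coordinate marginal of $\sigma$ is $\lambda_A$ and $\sigma$-a.e.\ one has $x_i\in\spt(\mu_{i,A})$ for $i\in A$. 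Letting $f_A$ be a measurable selection of the barycenter map \eqref{eq:barycenter_A} and $\lambda_A':=(f_A)_\#\sigma$ (with $f_A$ regarded as a function of $(x_1,\dots,x_K)$ only on $\mathcal{X}^K\times\mathcal{X}$), one has $\lambda_A'(\mathcal{X})=\lambda_A(\mathcal{X})$ and, coupling $x_i$ to $f_A(x_1,\dots,x_K)$,
\[
\sum_{i\in A}C(\lambda_A',\mu_{i,A})\ \le\ \int c_A(x_1,\dots,x_K)\,d\sigma\ \le\ \int\sum_{i\in A}c(x_i,x')\,d\sigma\ =\ \sum_{i\in A}C(\lambda_A,\mu_{i,A}).
\]
By optimality of $\lambda_A$ all these are equalities, so $\int\big(\sum_{i\in A}c(x_i,x')-c_A(x_1,\dots,x_K)\big)d\sigma=0$ with nonnegative integrand; hence $\sigma$-a.e.\ we have $x'\in\argmin_{x''}\sum_{i\in A}c(x'',x_i)=T_A(x_1,\dots,x_K)$ with each $x_i\in\spt(\mu_{i,A})\subseteq\spt(\mu_i)$. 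Projecting onto the last coordinate yields $\lambda_A(\mathcal{X})=\lambda_A(T_A)$ (the same computation also shows $\lambda_A'$ is concentrated on the image of $f_A$, already contained in $T_A$).

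For the ``in particular'' statement, I would start from an optimal family for \eqref{eq:lambda_decomposed_formulation}, push it through \textbf{Proposition}~\ref{prop:lambda_to_multimarginal} to an optimal family $\{\pi_A\}_{A\in S_K}$ for \eqref{eq:multimarginal_decomposed} whose marginals $\mathcal P_{i\,\#}\pi_A$ ($i\in A$) are concentrated on $\spt(\mu_i)$ (and, since the coordinates $i\notin A$ are irrelevant, arranged to lie in $\spt(\mu_i)$ as well), and then apply \textbf{Proposition}~\ref{prop:multimarginal_to_lambda} with the \emph{prescribed} selection $f_A$: this returns $\widetilde\lambda_A=(f_A)_\#\pi_A$ and $\widetilde\mu_{i,A}=\mathcal P_{i\,\#}\pi_A$, feasible for \eqref{eq:lambda_decomposed_formulation} with value no larger than that of $\{\pi_A\}$, hence optimal. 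Since each $\widetilde\lambda_A$ is then concentrated on $\{f_A(x_1,\dots,x_K):x_i\in\spt(\mu_i)\}$, the measure $\lambda_f:=\sum_{A\in S_K}\widetilde\lambda_A$, which is an optimal generalized barycenter by \textbf{Proposition}~\ref{prop:EquivalenceWithPartitions}, satisfies $\lambda_f(\mathcal{X})=\lambda_f(F)$.

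Setting aside the routine transport bookkeeping, I expect the main obstacle to be the measure-theoretic regularity underpinning the argument. One must know that $\argmin_{x'}\sum_{i\in A}c(x',x_i)$ is nonempty for $\sigma$-a.e.\ (resp.\ $\pi_A$-a.e.) tuple — this is exactly where the lower semicontinuity and the compactness/coercivity hypotheses of \textbf{Assumption}~\ref{assump:CostStructure} are needed, together with the fact that $c_A(x_1,\dots,x_K)$ is finite a.e.\ at an optimal configuration — that a universally measurable selection $f_A$ of this argmin exists, and that the sets $T_A$, $T$, $F$, being images of Borel sets under such selections, are analytic and hence measurable for the finite Borel (Radon) measures involved, so that expressions such as $\lambda_A(T_A)$ are meaningful. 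The gluing of the $\gamma_{i,A}$ and the disintegrations used are standard on Polish spaces.
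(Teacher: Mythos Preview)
Your proposal is correct and follows essentially the same route as the paper: both arguments hinge on the observation that optimality forces the pointwise inequality $c_A(x_1,\dots,x_K)\le\sum_{i\in A}c(x_i,x')$ to be an equality a.e.\ along the glued coupling, which is precisely the statement that $x'\in T_A(x_1,\dots,x_K)$. The paper packages this by cycling the optimal $\{\lambda_A,\mu_{i,A}\}$ through \textbf{Proposition}~\ref{prop:lambda_to_multimarginal} and then \textbf{Proposition}~\ref{prop:multimarginal_to_lambda} and noting that $\lambda_A(\mathcal{X})>\lambda_A(T_A)$ would make the first inequality strict, whereas you unpack the same computation directly for each fixed $A$ (first reducing to optimality of $\lambda_A$ in the sub-barycenter \eqref{eq:classical_A}); your treatment of the ``in particular'' clause and your remarks on the measurability of $T_A$, $F$ and the selection $f_A$ are apt and in fact more careful than the paper's own proof.
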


\begin{remark}
In the case where we have a tuple $(x_1,\ldots, x_K)\in \spt(\mu_1)\times\cdots\times \spt(\mu_K)$ such that $\sum_{i\in A} c(x,x_i)=+\infty$ for all $x\in \mathcal{X}$,  we set $T_A(x_1,\ldots, x_K)=\emptyset$.
\end{remark}

\begin{proof}
From \textbf{Proposition} \ref{prop:lambda_to_multimarginal}, we can use $\{\lambda_A\}_{A\in S_K}$ and $\{\mu_{i,A}\}_{A\in S_K, i\in A}$ to construct measures $\{\tilde{\pi}_A\}_{A\in S_K}$ with
\begin{equation}\label{eq:lambda_to_multimarginal}
\sum_{A\in S_K} \int_{\mathcal{X}^K} \big(c_A(x_1,\ldots, x_K)+1\big)d\tilde{\pi}_A(x_1,\ldots, x_K)\leq \sum_{A\in S_K} \lambda_A(\mathcal{X})+\sum_{i\in A}C(\lambda_A,\mu_{i,A}).
\end{equation}
From \textbf{Proposition} \ref{prop:multimarginal_to_lambda}, we can then use $\tilde{\pi}_A$ to construct decompositions $\{\tilde{\lambda}_A\}_{A\in S_K}$ and $\{\tilde{\mu}_{i,A}\}_{A\in S_K, i\in A}$ such that
\begin{equation}\label{eq:multimarginal_to_lambda}
\sum_{A\in S_K} \tilde{\lambda}_A(\mathcal{X})+\sum_{i\in A}C(\tilde{\lambda}_A,\tilde{\mu}_{i,A})
\leq \sum_{A\in S_K} \int_{\mathcal{X}^K} \big(c_A(x_1,\ldots, x_K)+1\big)d\tilde{\pi}_A(x_1,\ldots, x_K).
\end{equation}
Examining the proof of \textbf{Proposition} \ref{prop:lambda_to_multimarginal}, it follows that the inequality in \eqref{eq:lambda_to_multimarginal} is strict if $\lambda_A(\mathcal{X})>\lambda_A(T_A)$. In that case, combining \eqref{eq:lambda_to_multimarginal} and \eqref{eq:multimarginal_to_lambda} would contradict the optimality of $\lambda$. Therefore, $\lambda_A(T_A)=\lambda_A(\mathcal{X})$. The final statements follow from the constraints satisfied by the $\tilde{\mu}_i$ and the construction in \textbf{Proposition} \ref{prop:multimarginal_to_lambda}.
\end{proof}

When $\mu_1,\ldots, \mu_K$ are supported on a finite set of points, \textbf{Corollary} \ref{cor:barycenter_geometry} has the following consequence.

\begin{corollary}\label{cor:finite_means_finite}
If $\mu_1,\ldots, \mu_K$ are measures that are supported on a finite set of points and $c$ is a cost satisfying \textbf{Assumption} \ref{assump:CostStructure}, then there exists a solution $\lambda$ to the optimal generalized barycenter problem \eqref{eq:generalized_barycenter} that is supported on a finite set of points. 

In particular, if each $\mu_i$ is supported on a set of $n_i$ points, then there exists an optimal barycenter that is supported on at most $\sum_{A\in S_K}\prod_{i\in A} n_i\leq 2^K\prod_{i=1}^K n_i$ points.
\end{corollary}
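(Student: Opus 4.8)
The plan is to deduce this directly from the geometric characterization in \textbf{Corollary} \ref{cor:barycenter_geometry}, which already does the heavy lifting. First I would fix, for every $A\in S_K$ and every tuple $(x_1,\ldots,x_K)\in\mathcal{X}^K$, a single choice $f_A(x_1,\ldots,x_K)$ of element of $T_A(x_1,\ldots,x_K)$ from \eqref{eq:barycenter_A} (when $T_A(x_1,\ldots,x_K)=\emptyset$, i.e. when $\sum_{i\in A}c(\cdot,x_i)\equiv+\infty$, we simply leave $f_A$ undefined there, which only helps). The key observation is that $f_A(x_1,\ldots,x_K)$ depends only on the coordinates $x_i$ with $i\in A$, since $T_A(x_1,\ldots,x_K)=\argmin_{x'}\sum_{i\in A}c(x',x_i)$ does not involve the coordinates outside $A$. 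I would also note in passing that this $\argmin$ is attained whenever it is finite: a minimizing sequence $x_m'$ satisfies $\sup_m c(x_m',x_i)<\infty$ for each $i\in A$, so by the compactness part of \textbf{Assumption} \ref{assump:CostStructure} it is precompact, and lower semicontinuity of $c$ closes the argument.

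Next I would apply the final statement of \textbf{Corollary} \ref{cor:barycenter_geometry} to this family of selections $\{f_A\}_{A\in S_K}$: there exists an optimal generalized barycenter $\lambda_f$ with $\lambda_f(\mathcal{X})=\lambda_f(F)$, where $F=\bigcup_{A\in S_K}\bigcup_{(x_1,\ldots,x_K)\in \spt(\mu_1)\times\cdots\times\spt(\mu_K)} f_A(x_1,\ldots,x_K)$. Since $\lambda_f$ is a finite positive measure that places all of its mass on $F$, and $F$ is a finite set (to be verified in the last step), $\lambda_f$ is a finite nonnegative combination of Dirac masses concentrated on $F$; in particular it is supported on a finite set, which proves the first assertion.

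Finally I would count. Because $\mu_i$ is supported on $n_i$ points, the set $\spt(\mu_1)\times\cdots\times\spt(\mu_K)$ has $\prod_{i=1}^K n_i$ elements; but since $f_A$ factors through the projection onto the coordinates indexed by $A$, the image $\bigcup_{(x_1,\ldots,x_K)}f_A(x_1,\ldots,x_K)$ has at most $\prod_{i\in A}n_i$ distinct elements. Summing over $A\in S_K$ gives $|F|\leq\sum_{A\in S_K}\prod_{i\in A}n_i$, hence $\lambda_f$ is supported on at most that many points. For the coarser bound, each $n_i\geq 1$ forces $\prod_{i\in A}n_i\leq\prod_{i=1}^K n_i$ for every $A$, and $|S_K|=2^K-1<2^K$, so $\sum_{A\in S_K}\prod_{i\in A}n_i\leq 2^K\prod_{i=1}^K n_i$.

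I do not anticipate a genuinely hard step here once \textbf{Corollary} \ref{cor:barycenter_geometry} is available; the proof is essentially a bookkeeping argument. The only points that require a little care are verifying attainment of the barycenter $\argmin$ from \textbf{Assumption} \ref{assump:CostStructure} and observing that $f_A$ depends only on the $A$-coordinates — the latter being precisely what yields the sharper $\sum_{A\in S_K}\prod_{i\in A}n_i$ count rather than the naive $|S_K|\prod_{i=1}^K n_i$.
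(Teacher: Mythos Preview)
Your proposal is correct and follows essentially the same route as the paper: invoke \textbf{Corollary}~\ref{cor:barycenter_geometry} with a fixed selection $f_A$ for each $A\in S_K$, then count the size of $F$ using that $f_A$ depends only on the $A$-coordinates. If anything, you are slightly more careful than the paper in justifying attainment of the $\argmin$ from \textbf{Assumption}~\ref{assump:CostStructure} and in spelling out why the count is $\sum_{A\in S_K}\prod_{i\in A}n_i$ rather than the naive bound.
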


\begin{remark}
Notice that the bound mentioned at the end of \textbf{Corollary} \ref{cor:finite_means_finite} is a worst case bound. In practice, especially when data sets have a favourable geometric structure, the optimal barycenter $\lambda$ may have a much sparser support. See section \ref{ex : toy_example}.
\end{remark}

\begin{proof}
For each $i\in [K]$ we can assume there exists a finite set $X_i\subset \mathcal{X}$ such that $\mu_i$ is supported on $X_i$.  For each $A\in S_K$, let $f_A:X_i^K\to \mathcal{X}$ be a function such that
\[
f_A(x_1,\ldots, x_K)\in T_A(x_1,\ldots, x_K)
\]
for all $(x_1,\ldots, x_K)\in X_i^K$,
where we recall the definition of $T_A$ from \eqref{eq:barycenter_A}.
We can now construct the set
\[
F=\bigcup_{A\in S_K}\bigcup_{(x_1,\ldots, x_K)\in \prod_{i=1}^K X_i} \{f_A(x_1,\ldots, x_K)\},
\]
which is necessarily finite.
Indeed, if we set $n_i=|X_i|$, then $F$ has at most $\sum_{A\in S_K} \prod_{i\in A} n_i$ elements. By \textbf{Corollary} \ref{cor:barycenter_geometry}, there exists an optimal barycenter supported on $F$ only.
\end{proof}

\subsection{A second MOT reformulation of \eqref{eq:lambda_decomposed_formulation}}

Note that in problem \eqref{eq:multimarginal_decomposed} we need to find a  distribution $\pi_A$ for each $A\in S_K$.  Hence, it is natural to wonder if we can reformulate problem \eqref{eq:multimarginal_decomposed} in such a way that we only need to find a single distribution $\gamma$.  Here one must be careful, as the previous formulations of the problem do not require the input distributions $\mu_1, \ldots, \mu_K$ to have the same mass. As a result, if we try to work over a space of probability distributions whose marginals are $\mu_1, \ldots, \mu_K$, then we cannot recover the full generality of \eqref{eq:multimarginal_decomposed}.  

To overcome this difficulty, we will define $\gamma$ over the slightly larger space $(\mathcal{X} \times [0,1])^K$.  The extra coordinate will help us track the mass associated to each label $i$. Define $\widetilde{c}:(\mathcal{X} \times [0,1])^K\to\RR$ by 
\begin{equation}
\label{eq:cost_full_multimarginal}
\begin{aligned}
    &\widetilde{c}((x_1, r_1), \ldots, (x_K, r_K))\\
    & :=\inf_{m:S_K\to \RR} \sum_{A\in S_K} m_A\big(c_A(x_1, \ldots, x_K)+1\big) \quad \textup{s.t.} \sum_{A\in S_K(i)} m_A=r_i. 
\end{aligned}
\end{equation}
For each $i\in [K]$, let $\tilde{\mathcal{P}}_i$ be the projection $((x_1, r_1), \ldots, (x_K, r_K))\mapsto  x_i$. In what follows, we use $(\vec x , \vec r)$ to denote the tuple $((x_1, r_1), \dots, (x_K, r_K))$. We then claim that problem \eqref{eq:multimarginal_decomposed} is equivalent to 
\begin{equation}\label{eq:full_multimarginal}
    \inf_{\gamma} \int_{(\mathcal{X} \times [0,1])^K} \widetilde{c}(\vec{x}, \vec{r}) d\gamma(\vec{x}, \vec{r}) \quad \textup{s.t. }  \tilde{\mathcal{P}}_{i\,\#}(r_i\gamma)=\mu_i  \textup{ for all }   i\in [K].
\end{equation}

\begin{proposition}
\label{prop:FirstMOT}
Problems \eqref{eq:multimarginal_decomposed} and \eqref{eq:full_multimarginal} are equivalent, and thus \eqref{eq:full_multimarginal} is also equivalent to \eqref{eqn:DualAdversarial}, \eqref{eq:generalized_barycenter} and \eqref{eq:lambda_decomposed_formulation}.
\end{proposition}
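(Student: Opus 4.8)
The plan is to establish the equality of the optimal values of \eqref{eq:multimarginal_decomposed} and \eqref{eq:full_multimarginal} by the standard two-sided comparison: from any feasible point of either problem I construct a feasible point of the other whose objective value is no larger. Granting this, the remaining assertions of the proposition follow immediately by chaining the equivalences already proved: \eqref{eq:multimarginal_decomposed}$\,=\,$\eqref{eq:lambda_decomposed_formulation} by \textbf{Proposition} \ref{prop:lambda_to_multimarginal} and \textbf{Proposition} \ref{prop:multimarginal_to_lambda}, then \eqref{eq:lambda_decomposed_formulation}$\,=\,$\eqref{eq:generalized_barycenter} by \textbf{Proposition} \ref{prop:EquivalenceWithPartitions}, and finally \eqref{eq:generalized_barycenter}$\,=\,$\eqref{eqn:DualAdversarial} by the derivation preceding \eqref{eq:generalized_barycenter}.

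\emph{From \eqref{eq:multimarginal_decomposed} to \eqref{eq:full_multimarginal}.} Given a feasible family $\{\pi_A\}_{A\in S_K}$, I lift each $\pi_A$ into $(\mathcal{X}\times[0,1])^K$, recording in the extra coordinates the indicator of membership in $A$. Precisely, for $A\in S_K$ set $\iota_A(i):=1$ if $i\in A$ and $\iota_A(i):=0$ otherwise, let $G_A:\mathcal{X}^K\to(\mathcal{X}\times[0,1])^K$ be $G_A(x_1,\dots,x_K):=\big((x_1,\iota_A(1)),\dots,(x_K,\iota_A(K))\big)$, and put $\gamma:=\sum_{A\in S_K}(G_A)_\#\pi_A$, a finite positive measure. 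Since the $i$-th $r$-coordinate of $G_A(\vec x)$ is $\iota_A(i)$ and $\tilde{\mathcal{P}}_i\circ G_A=\mathcal{P}_i$, one gets \[ \tilde{\mathcal{P}}_{i\,\#}(r_i\gamma)=\sum_{A\in S_K}\iota_A(i)\,\mathcal{P}_{i\,\#}\pi_A=\sum_{A\in S_K(i)}\mathcal{P}_{i\,\#}\pi_A=\mu_i, \] so $\gamma$ is feasible for \eqref{eq:full_multimarginal}. Moreover, using in \eqref{eq:cost_full_multimarginal} at the point $G_A(\vec x)$ the competitor $m_B:=1$ if $B=A$ and $m_B:=0$ otherwise (which meets $\sum_{B\in S_K(i)}m_B=\iota_A(i)$), one has $\widetilde{c}(G_A(\vec x))\le c_A(\vec x)+1$, whence \[ \int \widetilde{c}\,d\gamma=\sum_{A\in S_K}\int_{\mathcal{X}^K}\widetilde{c}(G_A(\vec x))\,d\pi_A(\vec x)\le\sum_{A\in S_K}\int_{\mathcal{X}^K}\big(c_A(\vec x)+1\big)\,d\pi_A(\vec x). \]

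\emph{From \eqref{eq:full_multimarginal} to \eqref{eq:multimarginal_decomposed}.} Given a feasible $\gamma$, I disaggregate it over $S_K$ using an optimal mass split in \eqref{eq:cost_full_multimarginal}. First note the feasible polytope $\{m\ge0:\sum_{B\in S_K(i)}m_B=r_i\ \text{for all }i\}$ is always nonempty (take $m_{\{i\}}=r_i$ and all other $m_B=0$), and since $c_{\{i\}}\equiv0$ (as $c(x,x)=0$) this also shows $\widetilde{c}\le\sum_i r_i<\infty$ everywhere. I will use a Borel map $(\vec x,\vec r)\mapsto m^*(\vec x,\vec r)=(m^*_A(\vec x,\vec r))_{A\in S_K}$ attaining the minimum in \eqref{eq:cost_full_multimarginal}; with it, letting $Q:(\mathcal{X}\times[0,1])^K\to\mathcal{X}^K$ be the projection onto the $\mathcal{X}$-coordinates, set $\pi_A:=Q_\#(m^*_A\,\gamma)$, a finite positive measure since $0\le m^*_A\le\sum_i r_i\le K$. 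From $\mathcal{P}_i\circ Q=\tilde{\mathcal{P}}_i$ one gets \[ \sum_{A\in S_K(i)}\mathcal{P}_{i\,\#}\pi_A=\tilde{\mathcal{P}}_{i\,\#}\Big(\Big(\textstyle\sum_{A\in S_K(i)}m^*_A\Big)\gamma\Big)=\tilde{\mathcal{P}}_{i\,\#}(r_i\gamma)=\mu_i, \] so $\{\pi_A\}$ is feasible for \eqref{eq:multimarginal_decomposed}, and \[ \sum_{A\in S_K}\int_{\mathcal{X}^K}\big(c_A+1\big)\,d\pi_A=\int\textstyle\sum_{A\in S_K}m^*_A\,\big(c_A+1\big)\,d\gamma=\int\widetilde{c}\,d\gamma. \]

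The routine part is the pushforward/marginal bookkeeping in the two constructions. The one genuinely delicate point I expect is the Borel measurability of $\widetilde{c}$ together with the existence of the Borel selection $m^*$. I would handle this by observing that \eqref{eq:cost_full_multimarginal} is a parametric linear program in the finitely many unknowns $(m_A)_{A\in S_K}$, whose feasible polytope depends only on $\vec r$ and whose cost vector $(c_A(\vec x)+1)_{A\in S_K}$ depends Borel-measurably on $\vec x$ (recall from \eqref{eq:cost_A} that $c_A$ is lower semicontinuous, hence Borel, under \textbf{Assumption} \ref{assump:CostStructure}, as follows from the coercivity there). Partitioning $[0,1]^K$ into the finitely many polyhedral cells on which the vertex set of this polytope has fixed combinatorial type, on each cell the vertices are piecewise-affine in $\vec r$; thus $\widetilde{c}$ is a pointwise minimum of finitely many Borel functions (hence Borel), and selecting on each cell the least-indexed vertex attaining the minimum yields the desired Borel $m^*$. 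Equivalently, one could invoke a standard measurable-selection theorem for parametric finite-dimensional optimization, or argue with Borel $\varepsilon$-optimal splits and let $\varepsilon\downarrow0$; all three routes show the optimal value of \eqref{eq:multimarginal_decomposed} is $\le$ that of \eqref{eq:full_multimarginal}, which combined with the first construction closes the proof.
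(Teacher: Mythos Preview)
Your proof is correct and follows essentially the same approach as the paper: both directions construct feasible points via the same maps (lifting $\pi_A$ to $(\mathcal{X}\times[0,1])^K$ using the indicator $\chi_A=\iota_A$, and disaggregating $\gamma$ via an optimal $m_A^*$ in the LP \eqref{eq:cost_full_multimarginal}), and the objective comparisons are identical. You are in fact more careful than the paper on one point: the paper simply writes ``let $m_A(\vec x,\vec r)\in\argmin\ldots$'' without addressing the Borel measurability of this selection, whereas you sketch a parametric-LP argument (or alternatives) to justify it.
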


\begin{proof}
Given a feasible solution $\pi_{\{1\}}, \ldots, \pi_{[K]}$ to problem \eqref{eq:multimarginal_decomposed}, define $\gamma$ such that for every continuous and bounded function $f:(\mathcal{X} \times [0,1])^K\to \RR$ we have
\begin{equation*}
    \int_{(\mathcal{X} \times [0,1])^K} f(\vec{x}, \vec{r}) d\gamma (\vec{x}, \vec{r}) = \sum_{A\in S_K} \int_{\mathcal{X}^K} f\big((x_1,\chi_A(1)), \ldots, (x_K,\chi_A(K))\big) d\pi_A(x_1, \dots, x_K).
\end{equation*}
where $\chi_A(i)=1$ if $i\in A$ and zero otherwise. We can then check that $\gamma$ is feasible for \eqref{eq:full_multimarginal}, since for any function $g:\mathcal{X}\to\RR$
\begin{align*}
    \int_{(\mathcal{X} \times [0,1])^K} r_ig(x_i) d\gamma \big(\vec{x}, \vec{r}) &=\sum_{A\in S_K(i)} \int_{\mathcal{X}^K} g(x_i) d\pi_A( x_1, \dots, x_K )\\
    &=\int_{\mathcal{X}} g(x_i) d\mu_i(x_i),
\end{align*}
where the final equality uses the fact that $\sum_{A \in S_K(i)}\mathcal{P}_{i\,\#}\pi_A=\mu_i$.

Next, we observe that for any $A\in S_K$ and a tuple of the form $((x_1,\chi_A(1)), \ldots, (x_K,\chi_A(K))\big)$ we have 
\[
\widetilde{c}((x_1,\chi_A(1)), \ldots, (x_K,\chi_A(K))\big)\leq c_A(x_1, \ldots, x_K)+1.
\]
Therefore, 
\[
\int_{(\mathcal{X} \times [0,1])^K} \widetilde{c}(\vec{x}, \vec{r}) d\gamma (\vec{x}, \vec{r}) \leq \sum_{A\in S_K} \int_{\mathcal{X}^K} \big(c_A(x_1, \dots, x_K) + 1 \big) d\pi_A (x_1, \dots, x_K).
\]

Conversely, suppose that $\gamma$ is a feasible solution to \eqref{eq:full_multimarginal}. Given a tuple $(\vec{x}, \vec{r})$, let
\[
m_A(\vec{x}, \vec{r}) \in \argmin_{m:S_K\to \RR} \sum_{A\in S_K} m_A\big(c_A(x_1, \ldots, x_K)+1\big) \quad \textup{s.t.} \sum_{A\in S_K(i)} m_A=r_i. 
\]
Given $A\in S_K$ define $\pi_A$ such that for any continuous and bounded function $h:\mathcal{X}^K\to\RR$ we have
\begin{equation*}
    \int_{\mathcal{X}^K}h(x_1, \dots, x_K) d\pi_A(x_1, \dots, x_K) =\int_{(\mathcal{X} \times [0,1])^K} h(x_1, \dots, x_K) m_A (\vec{x}, \vec{r}) d\gamma (\vec{x}, \vec{r}).
\end{equation*}
We can then check that for any continuous and bounded function $g:\mathcal{X}\to\RR$
\begin{align*}
    \sum_{A\in S_K(i)}\int_{\mathcal{X}^K}g(x_i) d\pi_A(x_1, \ldots, x_K)&=\int_{(\mathcal{X} \times [0,1])^K}r_i g(x_i) d\gamma (\vec{x}, \vec{r})\\
    &=\int_{\mathcal{X}}g(x_i)\mu_i(x_i),
\end{align*}
where we have used the fact that $\sum_{A\in S_K(i)} m_A (\vec{x}, \vec{r}) = r_i$ in the first equality.
Thus, our construction gives us a feasible solution to \eqref{eq:multimarginal_decomposed}. Evaluating the objective in \eqref{eq:multimarginal_decomposed} we see that 
\begin{align*}
    &\sum_{A\in S_K}\int_{\mathcal{X}^K} (c_A(x_1, \dots, x_K) +1) d\pi_A(x_1, \dots, x_K)\\
    =&\int_{(\mathcal{X} \times [0,1])^K}\sum_{A\in S_K} m_A (\vec{x}, \vec{r}) (c_A(x_1, \dots, x_K) + 1) d\gamma (\vec{x}, \vec{r})\\
    =&\int_{(\mathcal{X} \times [0,1])^K}\widetilde{c} (\vec{x}, \vec{r}) d\gamma (\vec{x}, \vec{r})
\end{align*}
where the final equality uses the definition of $\widetilde{c}$ and our choice of $m_A (\vec{x}, \vec{r})$. Thus, the two problems have the same optimal value and any feasible solution to one problem can be easily converted into a feasible solution to the other. 
\end{proof}

\subsection{Localization}

In this section we show that the cost function $\widetilde{c}$ in problem \eqref{eq:full_multimarginal} is equal to $B^*_{\widehat{\mu}}$ for a measure $\widehat{\mu}$ that depends on the arguments of $\widetilde{c}$. This result can be interpreted as a localization property for problem \eqref{eq:generalized_barycenter} (and hence for problem \eqref{eqn:DualAdversarial} as well). Compare with the discussion after \textbf{Theorem} \ref{thm:Main}.

\begin{lemma}
\label{lemma:Localization}
Let $\tilde x_1, \dots, \tilde x_k \in \mathcal{X}$, and let $0 \leq \tilde r_1, \dots, \tilde r_k \leq 1$. Then $\widetilde{c}( (\tilde x_1, \tilde r_1), \dots, (\tilde x_K, \tilde r_K))$ defined in \eqref{eq:cost_full_multimarginal} is equal to  $B^*_{\widehat{\mu}}$, where 
\[ \widehat{\mu} := \sum_{i \in [K]} \tilde r_i \delta_{(\tilde x_i,i)}.  \]
\end{lemma}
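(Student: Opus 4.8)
The statement I want to prove is Lemma~\ref{lemma:Localization}: that $\widetilde{c}((\tilde x_1,\tilde r_1),\dots,(\tilde x_K,\tilde r_K))$ equals $B^*_{\widehat\mu}$ for $\widehat\mu=\sum_{i\in[K]}\tilde r_i\delta_{(\tilde x_i,i)}$. The idea is to unwind both sides and see that they are literally the same finite-dimensional linear program. On one hand, by definition \eqref{eq:cost_full_multimarginal},
\[
\widetilde{c}((\tilde x_1,\tilde r_1),\dots,(\tilde x_K,\tilde r_K))=\inf_{m:S_K\to\RR_{\geq 0}}\Big\{\sum_{A\in S_K} m_A\big(c_A(\tilde x_1,\dots,\tilde x_K)+1\big):\ \sum_{A\in S_K(i)}m_A=\tilde r_i\ \forall i\Big\},
\]
where $c_A(\tilde x_1,\dots,\tilde x_K)=\inf_{x'}\sum_{i\in A}c(x',\tilde x_i)$. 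On the other hand, $B^*_{\widehat\mu}$ is the value of the generalized barycenter problem \eqref{eq:generalized_barycenter} with input data $\widehat\mu$, whose slices are $\widehat\mu_i=\tilde r_i\delta_{\tilde x_i}$.

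First I would apply the chain of equivalences already established in the paper: by Proposition~\ref{prop:EquivalenceWithPartitions}, Proposition~\ref{prop:lambda_to_multimarginal}, and Proposition~\ref{prop:FirstMOT} (equivalently just the equality $\eqref{eq:generalized_barycenter}=\eqref{eq:multimarginal_decomposed}$), the quantity $B^*_{\widehat\mu}$ equals the value of the multimarginal problem \eqref{eq:multimarginal_decomposed} with marginals $\mu_i=\widehat\mu_i=\tilde r_i\delta_{\tilde x_i}$. So it suffices to show
\[
\inf_{\{\pi_A\}_{A\in S_K}}\Big\{\sum_{A\in S_K}\int(c_A+1)\,d\pi_A:\ \sum_{A\in S_K(i)}\mathcal{P}_{i\#}\pi_A=\tilde r_i\delta_{\tilde x_i}\ \forall i\Big\}
\]
equals the scalar program for $\widetilde c$ above. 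Second, I would carry out this reduction by hand: because each marginal $\tilde r_i\delta_{\tilde x_i}$ is a point mass, and $c_A$ depends only on the coordinates $x_j$ with $j\in A$, every feasible $\pi_A$ can be replaced, without increasing the objective, by $\pi_A = m_A\,\delta_{(\tilde x_1,\dots,\tilde x_K)}$ for some scalar $m_A\geq 0$ (one projects $\pi_A$ onto its relevant marginals, which are all forced to sit at the $\tilde x_i$'s; the remaining coordinates can be collapsed to $\tilde x_i$ too since they do not enter $c_A$). Under this substitution the marginal constraint $\sum_{A\in S_K(i)}\mathcal{P}_{i\#}\pi_A=\tilde r_i\delta_{\tilde x_i}$ becomes exactly $\sum_{A\in S_K(i)}m_A=\tilde r_i$, and the objective becomes $\sum_A m_A\big(c_A(\tilde x_1,\dots,\tilde x_K)+1\big)$. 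Conversely any feasible $m$ gives a feasible $\{\pi_A\}$ with the same value by the same formula. Hence the two infima coincide, which is the claim.

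The only point requiring care — and the step I expect to be the main obstacle — is the reduction ``every feasible $\pi_A$ may be taken of the form $m_A\delta_{(\tilde x_1,\dots,\tilde x_K)}$.'' One must argue that collapsing the irrelevant coordinates (those $j\notin A$) to the point $\tilde x_j$ does not affect feasibility: the marginal constraints on coordinate $j$ come only from the sets $A'\in S_K(j)$, i.e. those containing $j$, and for those $A'$ the coordinate $j$ \emph{is} relevant, so its marginal is already pinned to $\tilde r_j\delta_{\tilde x_j}$; for the sets $A$ with $j\notin A$, coordinate $j$ is genuinely free and may be redefined at will, in particular set to $\tilde x_j$, without changing $\int(c_A+1)\,d\pi_A$ or any marginal constraint. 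I would also note (possibly as a remark) the degenerate case: if $c_A(\tilde x_1,\dots,\tilde x_K)=+\infty$ for some $A$ in the support, then the infimum on the $\widetilde c$ side forces $m_A=0$ whenever finite total mass is available, matching the barycenter side where such a group contributes $+\infty$; the conventions set up in the paper (e.g. the remark after Corollary~\ref{cor:barycenter_geometry}) handle this consistently. With that reduction in place, the remainder is the bookkeeping identification of the two linear programs, which is routine.
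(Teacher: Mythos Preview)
Your argument is correct and takes a genuinely different, more direct route than the paper. The paper invokes Proposition~\ref{prop:FirstMOT} to write $B^*_{\widehat\mu}$ as the value of the \emph{single-coupling} MOT \eqref{eq:full_multimarginal} over $(\mathcal{X}\times[0,1])^K$, then observes that any feasible $\gamma$ must (essentially) concentrate on $\{x_i=\tilde x_i\}$, applies the disintegration theorem to reduce to an integral over $[0,1]^K$, unfolds the inner infimum defining $\widetilde c$, swaps the order of the two infima, and finally averages the $m_A$'s against $\gamma_r$ to land on the scalar linear program. You instead stop one step earlier in the chain, at the \emph{decomposed} formulation \eqref{eq:multimarginal_decomposed}, and exploit directly that each relevant marginal of $\pi_A$ is a point mass: since $\sum_{A\in S_K(i)}\mathcal{P}_{i\#}\pi_A=\tilde r_i\delta_{\tilde x_i}$ forces each summand to be a multiple of $\delta_{\tilde x_i}$, and since $c_A$ ignores coordinates $j\notin A$, the objective already equals $\sum_A m_A(c_A(\tilde x)+1)$ with $m_A:=\pi_A(\mathcal{X}^K)$ --- no disintegration, no swapping. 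Your worry about ``collapsing irrelevant coordinates'' is in fact a non-issue once you notice that $\int(c_A+1)\,d\pi_A$ is \emph{exactly} $m_A(c_A(\tilde x)+1)$ (not merely $\leq$), because the relevant coordinates are already pinned. What the paper's approach buys is narrative continuity (the lemma sits right after Proposition~\ref{prop:FirstMOT}); what yours buys is a shorter, more elementary proof.
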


\begin{proof}
To prove this claim we first notice that by \textbf{Proposition} \ref{prop:FirstMOT} $B^*_{\widehat{\mu}}$ is equal to
\[  
\inf_{\gamma} \int_{(\mathcal{X} \times [0,1])^K}  \widetilde{c}(\vec{x}, \vec{r}) d\gamma (\vec{x}, \vec{r}), 
\]
where $\gamma$ is in the constraint set of problem \eqref{eq:full_multimarginal}. For a feasible $\gamma$, notice that $\gamma$ must concentrate on the set $\{(\vec x , \vec r) : x_i = \tilde x_i,  i \in [K]  \}$. Applying the disintegration theorem to $\gamma$, we can rewrite the objective function evaluated at $\gamma$ as
\[   \int_{[0,1]^K} \widetilde{c} ((\tilde x_1, r_1), \dots, (\tilde x_K , r_K) ) d \gamma_{r}(r_1, \dots, r_K), \]
where $\gamma_r$ is a positive measure over $[0,1]^K$ satisfying the constraints:
\begin{equation}
\label{eq:ConstraintsAux}
  \int_{[0,1]} r_i d\gamma_r(r_1, \dots, r_K) = \tilde r_i , \quad \forall i =1, \dots, K.  
\end{equation}
It is clear that the map associating a feasible $\gamma$ to a $\gamma_r$ satisfying \eqref{eq:ConstraintsAux} is onto, and thus, we can rewrite $B^*_{\widehat{\mu}}$ as
\begin{align*}
\begin{split}
 B^*_{\widehat{\mu}} &=  \inf_{\gamma_r }  \int_{[0,1]^K} \widetilde{c} ((\tilde x_1, r_1), \dots, (\tilde x_K, r_K)) d\gamma_r(r_1, \dots, r_K)
   \\ & = \inf_{\gamma_r}   \int_{[0,1]^K} \inf_{ \{ m_A \}_A \in G(r_1, \dots, r_K) } \left\{   \sum_{A \in S_K} m_A(1 + c_A(\tilde x_1, \dots, \tilde x_K))  \right\} d\gamma_r(r_1, \dots, r_K)  
  \\&= \inf_{\gamma_r} \inf_{\{ m_A \}_A \in G }  \int_{[0,1]^K}  \left\{   \sum_{A \in S_K} m_A(r_1, \dots, r_K) \cdot (1 + c_A(\tilde x_1, \dots, \tilde x_K))  \right\} d\gamma_r(r_1, \dots, r_K) 
   \\&=  \inf_{\{ m_A \}_A \in G }  \inf_{\gamma_r} \int_{[0,1]^K}  \left\{   \sum_{A \in S_K} m_A(r_1, \dots, r_K) \cdot (1 + c_A(\tilde x_1, \dots, \tilde x_K))  \right\} d\gamma_r(r_1, \dots, r_K).
  \end{split}
\end{align*}
In the above, the set $G(r_1, \dots, r_K)$ is the set of $\{ m_A\}_{A \in S_K}$ satisfying the constraints in \eqref{eq:cost_full_multimarginal} for the specific tuple $\big( (\tilde x_1, r_1), \dots, (\tilde x_K, r_K) \big)$, while $G$ is the set of $\{ m_A \}_{A}$ where each $m_A$ is a functions with inputs $r_1, \dots, r_K$ satisfying $\{ m_A(r_1, \dots, r_K) \}_A \in G(r_1, \dots, r_K) $.

We can now write the term
\begin{align*}
    &\int_{[0,1]^K}  \left\{   \sum_{A \in S_K} m_A(r_1, \dots, r_k) \cdot (1 + c_A(\tilde x_1, \dots, \tilde x_k))  \right\} d\gamma_r(r_1, \dots, r_K)\\
    =&\sum_{A \in S_K} m_{A, \gamma} (1+ c_A (\tilde x_1, \dots, \tilde x_k)),
\end{align*}
where we define
\[ m_{A, \gamma_r} := \int  m_A(r_1, \dots, r_k)  d\gamma_r(r_1, \dots, r_K).    \]
Notice that
\begin{align*}
\begin{split}
 \sum_{A \in S_K(i)} m_{A, \gamma_r} & = \sum_{A \in S_K(i)} \int_{[0,1]^K}  m_A(r_1, \dots, r_k)  d\gamma_r(r_1, \dots, r_K) 
 \\& = \int_{[0,1]^K}   \left(\sum_{A \in S_K(i)}m_A(r_1, \dots, r_k) \right)  d\gamma_r(r_1, \dots, r_K) 
 \\& = \int_{[0,1]^K} r_i d\gamma_r(r_1, \dots, r_K)
 \\& = \tilde r_i. 
 \end{split}
 \end{align*}
Conversely, notice that given a collection of functions $\tilde m_A$ satisfying the constraint in \eqref{eq:cost_full_multimarginal} for the tuple $(\tilde x_1, \tilde r_1), \dots,(\tilde x_K, \tilde r_K)$, it is straightforward to find $\gamma_r$ such that $\tilde m_A = m_{A, \gamma_r}$ for all $A$. It now follows that
\[
B^*_{\widehat{\mu}} = \inf_{\tilde m_A } \sum_{A} \tilde m_A(1+ c_A(\tilde x_1, \dots, \tilde x_k)) = \widetilde{c}((\tilde x_1, \tilde r_1), \dots,(\tilde x_K, \tilde r_K)), \]
as we wanted to prove.
\end{proof}

\subsection{Dual Problems}

In this section we discuss the dual problems of the different formulations of the generalized barycenter problem studied in section \ref{eq:lambda_decomposed_formulation}.

\begin{proposition}
\label{prop:Duals}
The dual problems to \eqref{eq:generalized_barycenter}, \eqref{eq:multimarginal_decomposed}, and \eqref{eq:full_multimarginal} can be written as
\begin{equation}\label{eq:barycenter_dual}
\begin{aligned}
    &\sup_{f_1, \ldots, f_K\in C_b(\mathcal{X})} \sum_{i \in [K]} \int_{\mathcal{X}} f_i^c(x_i) d\mu_i(x_i)\\ 
    &\textup{s.t. }  f_i(x)\geq 0, \; \sum_{i \in [K]} f_i(x)\leq 1, \textup{ for all } x \in \mathcal{X}, i\in [K],
\end{aligned}
\end{equation}
\begin{equation}\label{eq:mot_decomposed_dual}
\begin{aligned}
    &\sup_{g_1, \ldots, g_K\in C_b(\mathcal{X})} \sum_{i \in [K]} \int_{\mathcal{X}} g_i(x_i) d\mu_i(x_i)\\
    &\textup{s.t. }  \sum_{i\in A} g_i(x_i)\leq 1+c_A(x_1,\ldots, x_K)  \textup{ for all }  (x_1,\ldots, x_K)\in \mathcal{X}^K, A\in S_K,
\end{aligned}
\end{equation}
and
\begin{equation}\label{eq:mot_dual}
\begin{aligned}
    &\sup_{h_1, \ldots, h_K\in C_b(\mathcal{X})} \sum_{i \in [K]} \int_{\mathcal{X}} h_i(x_i) d\mu_i(x_i)\\
    &\textup{s.t. }  \sum_{i \in [K]} r_ih_i(x_i)\leq \widetilde{c} (\vec{x}, \vec{r})  \textup{ for all }  (\vec{x}, \vec{r}) \in (\mathcal{X} \times [0,1])^K, 
\end{aligned}
\end{equation}
respectively.

Let $f_1,\ldots, f_K$; $g_1,\ldots, g_K$;  $h_1,\ldots, h_K$ be feasible solutions to problems \eqref{eq:barycenter_dual}, \eqref{eq:mot_decomposed_dual}, and \eqref{eq:mot_dual} respectively. Problems \eqref{eq:mot_decomposed_dual} and \eqref{eq:mot_dual} have the same feasible set and hence are identical. Furthermore, $g_i':=f_i^c$ is a feasible solution to \eqref{eq:mot_decomposed_dual} and $f_i'=\max \{ g_i, 0 \}^{\bar{c}}$ is a feasible solution to \eqref{eq:barycenter_dual}, hence  the optimization of \eqref{eq:mot_decomposed_dual} can be restricted to nonnegative $g_i$ that satisfy $g_i=g_i^{\bar{c}c}$.  In particular, \eqref{eq:barycenter_dual}, \eqref{eq:mot_decomposed_dual}, and \eqref{eq:mot_dual} all have the same optimal value.
\end{proposition}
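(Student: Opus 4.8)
The plan is to treat the statement in three stages. \textbf{Stage 1 (identifying the duals).} Each of \eqref{eq:generalized_barycenter}, \eqref{eq:multimarginal_decomposed}, \eqref{eq:full_multimarginal} is a linear program over nonnegative measures with affine marginal (and, for \eqref{eq:generalized_barycenter}, domination) constraints, so I would produce the three duals by a routine Lagrangian computation. For \eqref{eq:generalized_barycenter}, writing $C(\mu_i,\widetilde\mu_i)$ through transport plans $\pi_i$ with first marginal $\mu_i$ and second marginal $\le\lambda$, I introduce a free potential for the marginal constraint and a nonnegative potential $\psi_i$ for the domination constraint; minimizing out $\lambda\ge 0$ forces $\sum_i\psi_i\le 1$, minimizing out $\pi_i\ge 0$ forces the free potential to be dominated by the $c$-transform of $\psi_i$, and since $\mu_i\ge 0$ equality is optimal, which after renaming $\psi_i$ to $f_i$ is exactly \eqref{eq:barycenter_dual}. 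The same computation applied to \eqref{eq:multimarginal_decomposed} and \eqref{eq:full_multimarginal}, using one free potential $g_i$ (resp.\ $h_i$) per marginal constraint, yields \eqref{eq:mot_decomposed_dual} and \eqref{eq:mot_dual}, which are the standard MOT duals.

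\textbf{Stage 2 (the problems \eqref{eq:mot_decomposed_dual} and \eqref{eq:mot_dual} coincide).} The objectives are literally the same, so it suffices to show the feasible sets coincide. If $(g_i)$ is feasible for \eqref{eq:mot_decomposed_dual}, then for any $(\vec{x},\vec{r})$ and any nonnegative $\{m_A\}$ with $\sum_{A\in S_K(i)}m_A=r_i$ I write $\sum_i r_i g_i(x_i)=\sum_{A\in S_K}m_A\sum_{i\in A}g_i(x_i)\le\sum_{A}m_A\big(1+c_A(\vec{x})\big)$; taking the infimum over $\{m_A\}$ gives $\sum_i r_i g_i(x_i)\le\widetilde{c}(\vec{x},\vec{r})$, the constraint of \eqref{eq:mot_dual}. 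Conversely, given $(h_i)$ feasible for \eqref{eq:mot_dual}, fix $A_0\in S_K$ and $\vec{x}$ and test the constraint at $r_i=\mathbf{1}[i\in A_0]$: the choice $m_{A_0}=1$ and $m_A=0$ otherwise is admissible in \eqref{eq:cost_full_multimarginal}, so $\widetilde{c}(\vec{x},\mathbf{1}_{A_0})\le 1+c_{A_0}(\vec{x})$, whence $\sum_{i\in A_0}h_i(x_i)\le 1+c_{A_0}(\vec{x})$; since $A_0,\vec{x}$ are arbitrary this is feasibility for \eqref{eq:mot_decomposed_dual}.

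\textbf{Stage 3 (relating \eqref{eq:barycenter_dual} and \eqref{eq:mot_decomposed_dual}).} If $(f_i)$ is feasible for \eqref{eq:barycenter_dual} then, for every $A$ and every $x'$, $\sum_{i\in A}f_i^c(x_i)\le\sum_{i\in A}\big(f_i(x')+c(x',x_i)\big)\le 1+\sum_{i\in A}c(x',x_i)$ by $f_i\ge 0$ and $\sum_i f_i\le 1$; taking the infimum over $x'$ gives $\sum_{i\in A}f_i^c(x_i)\le 1+c_A(\vec{x})$, so $g_i:=f_i^c$ is feasible for \eqref{eq:mot_decomposed_dual} with the identical objective value. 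Conversely, given $(g_i)$ feasible for \eqref{eq:mot_decomposed_dual}, put $g_i^+:=\max\{g_i,0\}$ and $f_i':=(g_i^+)^{\bar{c}}$; then $f_i'\ge g_i^+\ge 0$ (using $c(x',x')=0$), and to see $\sum_i f_i'\le 1$ I fix $x'$, pick near-optimizers $x_i$ in the suprema defining $f_i'(x')$, let $A$ be the indices with strictly positive contribution (so $g_i^+(x_i)=g_i(x_i)>c(x',x_i)\ge 0$ for $i\in A$, and the others contribute $\le 0$), and conclude $\sum_i f_i'(x')\le\sum_{i\in A}\big(g_i(x_i)-c(x',x_i)\big)\le 1+c_A(\vec{x})-\sum_{i\in A}c(x',x_i)\le 1$, using feasibility of $(g_i)$ for this particular $A$ together with $c_A(\vec{x})\le\sum_{i\in A}c(x',x_i)$. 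Thus $(f_i')$ is feasible for \eqref{eq:barycenter_dual}, and its objective equals $\sum_i\int(g_i^+)^{\bar{c}c}\,d\mu_i\ge\sum_i\int g_i^+\,d\mu_i\ge\sum_i\int g_i\,d\mu_i$ by $h^{\bar{c}c}\ge h$ and $\mu_i\ge 0$. These two value-nondecreasing maps give that \eqref{eq:barycenter_dual} and \eqref{eq:mot_decomposed_dual} have the same value, and with Stage 2 all three duals share the same value. Finally, replacing a feasible $(g_i)$ by $\widehat{g}_i:=(f_i')^c=(g_i^+)^{\bar{c}c}$ stays feasible for \eqref{eq:mot_decomposed_dual} (it is the $c$-transform of the \eqref{eq:barycenter_dual}-feasible $f_i'$), does not decrease the objective, is nonnegative, and satisfies $\widehat{g}_i=\widehat{g}_i^{\bar{c}c}$ by the transform identity $\psi^{\bar{c}c\bar{c}}=\psi^{\bar{c}}$; this is the asserted restriction.

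\textbf{Main obstacle.} I expect the delicate point to be verifying $\sum_i\max\{g_i,0\}^{\bar{c}}\le 1$ in Stage 3: this needs the correct identification of the ``active'' set $A$ at each point $x'$ and the use of exactly the $A$-constraint of \eqref{eq:mot_decomposed_dual} (not the $[K]$-constraint). A secondary nuisance, which I would dispatch with the usual semicontinuity/density arguments (or by observing that the dual optimal values are unchanged under mild enlargements of the admissible function classes), is checking that the $c$- and $\bar{c}$-transforms keep us inside the class of functions over which \eqref{eq:barycenter_dual}, \eqref{eq:mot_decomposed_dual}, \eqref{eq:mot_dual} are posed.
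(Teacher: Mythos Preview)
Your proposal is correct and essentially mirrors the paper's proof. Stage~1 is waved through in the paper as ``standard,'' Stage~2 is carried out exactly as you describe (testing $\widetilde{c}$ at indicator vectors $r_i=\mathbf{1}[i\in A_0]$ in one direction, and rewriting $\sum_i r_i g_i=\sum_A m_A\sum_{i\in A} g_i$ in the other), and Stage~3 uses the same $c$/$\bar c$-transform maps in both directions.

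The only structural difference worth noting is in the delicate step you flagged. The paper splits it in two: first it shows that if $(g_i)$ is feasible for \eqref{eq:mot_decomposed_dual} then so is $(\max\{g_i,0\})$ (this is where the active-set trick appears: for fixed $A$ and $\vec x$ one sets $A'=\{i\in A:g_i(x_i)>0\}$ and uses the $A'$-constraint together with $c_{A'}\le c_A$), and only then verifies $\sum_i f_i'\le 1$ by applying the full $[K]$-constraint to the already-feasible $(\max\{g_i,0\})$. You instead fold the active-set selection directly into the verification of $\sum_i f_i'\le 1$, invoking the $A$-constraint of the original $(g_i)$. Both routes rely on the same monotonicity $c_{A'}\le c_A$ for $A'\subseteq A$; yours is slightly more compressed but not genuinely different. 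The side remark in your ``main obstacle'' that one must \emph{not} use the $[K]$-constraint is therefore a touch overstated: the paper does use it, but only after the intermediate feasibility step absorbs the active-set argument.
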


\begin{proof}
The derivation of the dual problems is standard.

To see the equivalence between problems \eqref{eq:mot_decomposed_dual} and \eqref{eq:mot_dual}, fix some $h_1, \ldots, h_K$ that are feasible for \eqref{eq:mot_dual} and choose some $B\in S_K$ and $(x_1,\ldots, x_K)\in \mathcal{X}^K$ such that $c_B(x_1,\ldots, x_K)<\infty.$  Choose
\[
m^*\in \argmin_{m:S_K\to\RR} \sum_{A\in S_K} m_A(1+c_A(x_1,\ldots, x_K))\quad \textup{s.t.} \sum_{A\in S_K(i)} m_A=\chi_B(i),
\]
where $\chi_B(i)=1$ if $i\in B$ and zero otherwise. Note that the choice $m_A=1$ if $A=B$ and $m_A=0$ otherwise is feasible for the above optimization.
Therefore, the optimality of $m^*$ implies that
\begin{align*}
    1+c_B(x_1,\ldots, x_K) &\geq \sum_{A\in S_K} m_A^*(1+c_A(x_1,\ldots, x_K))\\
    &=\widetilde{c}((x_1,\chi_B(1)), \ldots, (x_k,\chi_B(k))\big)\\
    &\geq \sum_{i \in [K]} r_ih_i(x_i)\\
    &=\sum_{i\in B} h_i(x_i).
\end{align*}
Thus, we see that the $h_i$ are feasible for \eqref{eq:mot_decomposed_dual} since $B$ and $(x_1, \ldots, x_K)$ were arbitrary.

Conversely, fix some $g_1, \ldots, g_K$ that are feasible for \eqref{eq:mot_decomposed_dual} and some $(\vec{x}, \vec{r}) \in (\mathcal{X} \times [0,1])^K$. Choose
\[
n^*\in \argmin_{m:S_K\to\RR} \sum_{A\in S_K} m_A(1+c_A(x_1,\ldots, x_K))\quad \textup{s.t.} \sum_{A\in S_K(i)} m_A=r_i,
\]
and observe that
\begin{align*}
    \sum_{i \in [K]} r_i g_i(x_i) &=\sum_{i \in [K]}g_i(x_i)\sum_{A\in S_K(i)}n^*_A\\
    &=\sum_{A\in S_K} n_A^*\sum_{i\in A} g_i(x_i)\\
    &\leq \sum_{A\in S_K} n_A^*(1+c_A(x_1,\ldots, x_K))\\
    &=\widetilde{c}((x_1,r_1), \ldots, (x_K, r_K)),
\end{align*}
where we used the feasibility of the $g_i$. Thus, the $g_i$ are feasible for \eqref{eq:mot_dual}.  Since both problems are optimizing the same functional over the same constraint set, we see that \eqref{eq:mot_decomposed_dual} and \eqref{eq:mot_dual} are identical.

Now suppose that $f_1,\ldots, f_K$ and $g_1, \ldots, g_K$ are feasible solutions to problems \eqref{eq:barycenter_dual} and \eqref{eq:mot_decomposed_dual} respectively and define $g_i'=f_i^c$ and $f_i'=\max \{ g_i, 0 \}^{\bar{c}}$. 
Given $A\in S_K$, $x_1,\ldots, x_K\in \mathcal{X}^K$, and $r>0$ we can choose $x_{r}$ such that
\[
\sum_{i\in A} c(x_{r}, x_i)\leq r+c_A(x_1,\ldots, x_K).
\]
Then we see that 
\[
\sum_{i\in A} g_i'(x_i)\leq \sum_{i\in A} f(x_{r})+c(x_{r},x_i)\leq r+1+c_A(x_1, \ldots, x_K). 
\]
Letting $r\to 0$, we see that the $g_i'$ are feasible for \eqref{eq:mot_decomposed_dual}.  Hence, the optimal value of (\ref{eq:mot_decomposed_dual}) cannot lie strictly below the optimal value of (\ref{eq:barycenter_dual}).

It remains to verify the feasibility of the $f_i'$.  We begin by showing that if $g_1, \ldots, g_K$ are feasible for \eqref{eq:mot_decomposed_dual} then $\max \{ g_1, 0 \}, \ldots, \max\{ g_K, 0 \}$ are also feasible. 
Fix $A\in S_K$ and $(x_1,\ldots, x_K)\in \mathcal{X}^K$.  Let $A'=\{i\in A: g_i(x_i)>0\}$.  We then see that 
\[
\sum_{i\in A} \max \{ g_i(x_i), 0 \}=\sum_{i\in A'} g_i(x_i)\leq 1+c_{A'}(x_1,\ldots, x_K)\leq 1+c_A(x_1,\ldots, x_K)
\]
where the final inequality follows from the definition of $c_A$ and the fact that $A'\subseteq A$.  Now we are ready to verify the feasibility of the $f_i'$.  Clearly $f_i'(x)\geq 0$ since $c(x,x)=0$ for all $x \in \mathcal{X}$.
 Given $x \in \mathcal{X}$, fix $r>0$ and for each $i\in [K]$, choose $x_{i,r}\in X$ such that 
\[
(\max \{ g_i, 0 \})^{\bar{c}}(x)\leq \max(g_i(x_{i,r}),0)-c(x_{i,r},x)+r.
\]
We then have 
\begin{align*}
    \sum_{i \in [K]} \max \{ g_i, 0 \}^{\bar{c}}(x) &\leq \sum_{i \in [K]} \max \{ g_i(x_{i,r}),0 \}-c(x_{i,r},x)+r\\
    &\leq 1+r+c_{[K]}(x_{1,r},\ldots, x_{k,r})-\sum_{i \in [K]}c(x_{i,r},x),
\end{align*}
where the final inequality follows from the feasibility of $\max \{ g_i, 0 \}$. Now from the definition of $c_{[K]}$, the last line is bounded above by $1+r$. Sending $r\to 0$ we are done. 

Notice that the above arguments prove that whenever $g_1,\ldots, g_K$ are feasible for \eqref{eq:mot_decomposed_dual}, then $\max\{ g_1, 0 \}^{\bar{c}c}, \ldots, \max\{ g_K, 0 \}^{\bar{c}c}$ are also feasible for \eqref{eq:mot_decomposed_dual}. Since $u\leq u^{\bar{c}c}$ for any function $u:\mathcal{X}\to\RR$, it follows that 
\begin{equation*}
    \sum_{i \in [K]}\int_{\mathcal{X}} g_i(x)d\mu_i(x)\leq \sum_{i \in [K]}\int_{\mathcal{X}} \max\{ g_i, 0\}^{\bar{c}c}(x) d\mu_i(x).
\end{equation*}
Since we showed that $\max \{ g_i, 0 \}^{\bar{c}}$ was feasible for (\ref{eq:barycenter_dual}), it follows that (\ref{eq:mot_decomposed_dual}) cannot attain a larger value than (\ref{eq:barycenter_dual}).  Hence, we have shown that (\ref{eq:mot_decomposed_dual}) and (\ref{eq:barycenter_dual}) have the same optimal value.
\end{proof}

We now want to show that the dual problems attain the same values as the original primal problems. We begin with a minimax lemma for the following partial optimal transport problem.

\begin{lemma}\label{lem:partial_ot_minimax}
Suppose that $c$ is a bounded Lipschitz cost that satisfies the hypotheses of \textbf{Proposition} \ref{prop:existence}. If $\mathcal{B}\subset \mathcal{M}(\mathcal{X})$ is a weakly compact and convex set, then given measures $\mu_1,\ldots, \mu_K, \in\mathcal{M}(\mathcal{X})$, let  we have the following minimax formula
\begin{align*}
&\min_{\rho, \nu_i\in\mathcal{B}, \nu_i\leq \rho} \sum_{i \in [K]}C(\mu_i, \nu_i)\\
= &\max_{\vp_i, \psi_i\in C_b(\mathcal{X})} \min_{\rho\in \mathcal{B}} \sum_{i \in [K]}\int_{\mathcal{X}} \vp_i(x)d\mu_i(x)-\psi_i(x')d\rho(x')\\
&\quad \textup{s.t. } \vp_i(x)-\psi_i(x')\leq c(x,x'),  \psi_i(x')\geq 0.
\end{align*}
\end{lemma}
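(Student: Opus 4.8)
The plan is to prove the identity in two moves. First, with the dominating measure $\rho$ held fixed, establish a Kantorovich-type strong duality for the partial transport problem $\min_{\nu_i\leq\rho}\sum_{i}C(\mu_i,\nu_i)$; second, interchange $\min_{\rho\in\mathcal{B}}$ with the supremum over the potentials by a minimax theorem, using the weak compactness of $\mathcal{B}$. The inequality ``$\geq$'' is immediate: if $\vp_i,\psi_i\in C_b(\mathcal{X})$ satisfy $\vp_i(x)-\psi_i(x')\leq c(x,x')$ and $\psi_i\geq 0$, and $\pi_i\in\Gamma(\mu_i,\nu_i)$ with $\nu_i\leq\rho\in\mathcal{B}$, then $\int c\,d\pi_i\geq\int(\vp_i(x)-\psi_i(x'))\,d\pi_i=\int\vp_i\,d\mu_i-\int\psi_i\,d\nu_i\geq\int\vp_i\,d\mu_i-\int\psi_i\,d\rho$, the last step using $\psi_i\geq 0$ and $\nu_i\leq\rho$; summing over $i$ and optimizing gives ``$\geq$'', so the content is the reverse inequality together with attainment of the outer supremum. (We may assume the infimum on the left is finite, i.e.\ that some feasible $\rho$ exists; otherwise one makes the dual $+\infty$ with suitable constant potentials.)

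For the first move, fix $\rho\in\mathcal{B}$; since the $\psi_i$ for distinct $i$ decouple, it suffices to show, for a single pair, that $\min_{\nu\leq\rho}C(\mu,\nu)=\sup\{\int\vp\,d\mu-\int\psi\,d\rho:\vp(x)-\psi(x')\leq c(x,x'),\ \psi\geq 0,\ \vp,\psi\in C_b(\mathcal{X})\}$. The plan is to adjoin an isolated cemetery point $\partial$ to $\mathcal{X}$, set $\bar\mu:=\mu+(\rho(\mathcal{X})-\mu(\mathcal{X}))_+\delta_\partial$, and extend $c$ by $\bar c(\partial,\cdot)\equiv 0$, so that $\min_{\nu\leq\rho}C(\mu,\nu)$ coincides with the ordinary optimal transport cost between $\bar\mu$ and $\rho$ for the bounded, lower semicontinuous cost $\bar c$ on the Polish space $\mathcal{X}\cup\{\partial\}$. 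Classical Kantorovich duality applies and realizes the dual as a supremum over $C_b$ pairs $(\bar\vp,\bar\eta)$ with $\bar\vp(x)+\bar\eta(x')\leq\bar c(x,x')$; the constraint at $\partial$ reads $\bar\vp(\partial)\leq-\sup\bar\eta$, and optimizing in $\bar\vp(\partial)$ and substituting $\vp:=\bar\vp+\sup\bar\eta$, $\psi:=\sup\bar\eta-\bar\eta\geq 0$ rewrites the dual exactly in the claimed form.

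For the second move, set $h(\rho;\vp,\psi):=\sum_{i\in[K]}\big(\int\vp_i\,d\mu_i-\int\psi_i\,d\rho\big)$ on $\mathcal{B}\times\Phi$, where $\Phi$ is the convex set of admissible tuples $(\vp_1,\psi_1,\dots,\vp_K,\psi_K)$. For fixed $(\vp,\psi)$ the map $\rho\mapsto h(\rho;\vp,\psi)$ is affine and weakly continuous on $\mathcal{B}$ (each $\psi_i$ lies in $C_b$), and for fixed $\rho$ the map $(\vp,\psi)\mapsto h(\rho;\vp,\psi)$ is affine; since $\mathcal{B}$ is convex and weakly compact, Sion's minimax theorem yields $\min_{\rho\in\mathcal{B}}\sup_{(\vp,\psi)\in\Phi}h=\sup_{(\vp,\psi)\in\Phi}\min_{\rho\in\mathcal{B}}h$. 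By the first move the left side equals $\min_{\rho\in\mathcal{B}}\min_{\nu_i\leq\rho}\sum_i C(\mu_i,\nu_i)$, i.e.\ the left side of the lemma, while the inner $\min_{\rho\in\mathcal{B}}$ on the right is attained by weak compactness and continuity; this is the asserted equality once the outer $\sup$ is upgraded to a $\max$.

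To obtain that the outer supremum is attained, I would restrict (without changing the value) to $c$-conjugate potentials: replace $\vp_i$ by $x\mapsto\inf_{x'}\{\psi_i(x')+c(x,x')\}$ and then $\psi_i$ by $x'\mapsto\max\{0,\sup_x(\vp_i(x)-c(x,x'))\}$. Boundedness and Lipschitzness of $c$ make these equi-Lipschitz, and, combining the coercivity hypothesis on $c$ from \textbf{Proposition}~\ref{prop:existence} with the tightness of $\mathcal{B}$ (weakly compact families in $\mathcal{M}(\mathcal{X})$ are tight) and of the fixed $\mu_i$, a maximizing sequence of such potentials can be taken uniformly bounded on a compact set carrying all but an arbitrarily small fraction of the relevant mass; Arzel\`a--Ascoli then extracts a subsequence converging locally uniformly to an optimal pair, with negligible tails. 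I expect this last step---passing to the limit in the dual on a non-compact space, where the coercivity of $c$ and the weak compactness of $\mathcal{B}$ must be used together---to be the main obstacle; the reduction to ordinary Kantorovich duality and the Sion minimax argument are otherwise routine.
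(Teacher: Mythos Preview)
Your proposal is correct and follows the same overall architecture as the paper---Sion's minimax theorem to swap the measure-minimization with the potential-maximization, followed by passage to $c$-conjugate potentials (equi-Lipschitz, uniformly bounded since $c$ is) to obtain attainment. The main difference lies in \emph{how} the auxiliary variables $\nu_i$ are eliminated. The paper applies Sion directly to the joint variable $(\rho,\nu_1,\dots,\nu_K)$, on the compact convex set $\{\nu_i\leq\rho,\ \rho,\nu_i\in\mathcal{B}\}$, against potentials $(\vp_i,\psi_i)\in\Phi_c$ \emph{without} the constraint $\psi_i\geq 0$; only after the swap does it compute $\min_{\nu_i\leq\rho}\big(-\int\psi_i\,d\nu_i\big)=-\int\max(\psi_i,0)\,d\rho$, and it is this computation that produces the nonnegativity. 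You instead eliminate each $\nu_i$ \emph{before} invoking Sion, via the cemetery-point reduction of the partial transport $\min_{\nu\leq\rho}C(\mu,\nu)$ to an ordinary Kantorovich problem on $\mathcal{X}\cup\{\partial\}$; this makes the constraint $\psi_i\geq 0$ appear directly in the dual, and Sion is then needed only on the single compact convex set $\mathcal{B}$. Your route is marginally more elementary in that it delegates the partial-OT duality to the textbook Kantorovich theorem rather than checking the minimax hypotheses on the joint $(\rho,\nu)$ set, at the price of the extra bookkeeping with $\partial$. On the attainment step you and the paper operate at the same level of detail: the paper simply asserts that restricting to $c$-conjugate potentials lands in a compact subset of $\Phi_c$, without spelling out the Arzel\`a--Ascoli and tightness considerations you sketch; your honest flagging of this as the delicate point is appropriate.
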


\begin{proof}
Using the dual formulation of optimal transport, we can write
\[
 C(\mu_i,\nu_i)= \sup_{\vp_i, \psi_i\in \Phi_c} J_i(\nu_i, \vp_i, \psi_i) \quad \textup{s.t.} \;\; \vp_i(x)-\psi_i(x')\leq c(x,x').
\]
where 
\[
J_i(\nu_i,\vp_i, \psi_i)=\int_{\mathcal{X}} \vp_i(x)d\mu_i(x)-\psi_i(x)d\nu_i(x),
\]
and $\Phi_c=\{(\vp_i, \psi_i)\in C_b(\mathcal{X})\times C_b(\mathcal{X}): \vp_i(x)-\psi_i(x')\leq c(x,x') \;\; \textup{for all} \;\; x,x'\in \mathcal{X}\}$. 
For each $\vp_i, \psi_i\in C_b(\mathcal{X})$ fixed, the mapping $(\rho, \nu_i)\mapsto J_i(\nu_i, \vp_i, \psi_i)$ is linear and lower semicontinuous with respect to the weak convergence of measures. For any $\rho, \nu_i$ fixed, the mapping $(\vp_i, \psi_i)\mapsto J_i(\nu_i, \vp_i, \psi_i)$ is linear and upper semicontinuous with respect to strong convergence in $C_b(\mathcal{X})$. Since the constraint sets $\nu_i\leq \rho$ and $\Phi_c$ are convex, we are in a situation where Sion's minimax theorem applies. Therefore,
\[
\min_{\rho, \nu_i\in \mathcal{B}, \nu_i\leq \rho} \sup_{\vp_i, \psi_i\in\Phi_c} \sum_{i \in [K]} J_i(\nu_i, \vp_i, \psi_i)= \sup_{\vp_i, \psi_i\in\Phi_c} \min_{\rho, \nu_i\in \mathcal{B}, \nu_i\leq \rho} \sum_{i \in [K]} J_i(\nu_i, \vp_i, \psi_i)
\]
Since
\[
\min_{\nu_i\leq \rho} \sum_{i \in [K]} J_i(\nu_i, \vp_i, \psi_i)=\sum_{i \in [K]} \int_{\mathcal{X}} \vp_i(x)d\mu_i(x)-\max(\psi_i(x'),0)d\rho(x'),
\]
we have 
\[
\min_{\rho, \nu_i\in \mathcal{B}, \nu_i\leq \rho} \sum_{i \in [K]} C(\mu_i, \nu_i)=\sup_{\vp_i, \psi_i\in\Phi_c} \min_{\rho\in \mathcal{B}} \sum_{i \in [K]} \int_{\mathcal{X}} \vp_i(x)d\mu_i(x)-\max(\psi_i(x'),0)d\rho(x').
\]
If we replace $\vp_i$ by $\psi_i^c$ and $\psi_i$ by $\max(\psi_i,0)^{c\bar{c}}$ then the value of the problem can only improve.  Since we assume that $c$ is bounded and Lipschitz, it follows that $\psi_i^{c}$ and $\psi_i^{\bar{c}c}$ are bounded and Lipschitz.  Thus, we can restrict the supremum to a compact subset of $\Phi_c$ where $\psi_i\geq 0$.  Thus, the supremum is actually attained by some pair $(\vp_i^*, \psi_i^*)\in \Phi_c$ with $\psi_i^*\geq 0$, $\vp_i^*=(\psi_i^*)^c$ and $(\psi_i^*)^{c\bar{c}}=\psi_i^*.$   
\end{proof}

Using \textbf{Lemma} \ref{lem:partial_ot_minimax} we can prove that there is no duality gap for bounded and Lipschitz costs. We will then show that there is no duality gap for general costs by approximation.

\begin{proposition}\label{prop:barycenter_lip_duality}
Given measures $\mu_1,\ldots, \mu_K$ and a bounded Lipschitz cost $c$ satisfying the assumptions in \textbf{Proposition} \ref{prop:existence}, suppose that $\lambda, \tilde{\mu}_1,\dots, \tilde{\mu}_K$ are optimal solutions to \eqref{eq:generalized_barycenter}. If $\vp_i^*, \psi_i^*\in C_b(\mathcal{X})$ are the optimal Kantorovich potentials for the partial transport of $\mu_i$ to $\lambda$ (c.f \textbf{Lemma} \ref{lem:partial_ot_minimax}), then $\vp_1^*,\ldots, \vp_K^*$ are optimal solutions to problem \eqref{eq:mot_decomposed_dual}, $\psi_1^*,\ldots, \psi_K^*$ are optimal solutions to \eqref{eq:barycenter_dual}, and the values of \eqref{eq:barycenter_dual}-\eqref{eq:mot_dual} are equal to \eqref{eq:generalized_barycenter}. In other words, there is no duality gap.
\end{proposition}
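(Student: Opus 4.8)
The plan is to prove weak duality (which is automatic) and then, starting from an optimal generalized barycenter, construct a feasible point for \eqref{eq:barycenter_dual} whose value equals $B_\mu^*$; the optimal potentials for \eqref{eq:mot_decomposed_dual} and \eqref{eq:mot_dual} then come for free from the $c$-transform dictionary already established in \textbf{Proposition} \ref{prop:Duals}. First I would fix an optimal triple $\lambda, \tilde{\mu}_1,\ldots, \tilde{\mu}_K$ for \eqref{eq:generalized_barycenter} (existing by \textbf{Proposition} \ref{prop:existence}) and note that, $\lambda$ being fixed, the $\tilde{\mu}_i$ minimize $\sum_{i\in[K]} C(\mu_i,\nu_i)$ over $\{\nu_i:\nu_i\leq\lambda\}$. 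I would then apply \textbf{Lemma} \ref{lem:partial_ot_minimax} with $\mathcal{B}:=\{\rho\in\mathcal{M}(\mathcal{X}):0\leq\rho\leq\lambda\}$, which is convex and weakly compact because it is dominated by the fixed finite measure $\lambda$ (hence tight, bounded, and weakly closed). Since enlarging $\rho$ can only lower the partial-transport cost, the minimum on the left of \textbf{Lemma} \ref{lem:partial_ot_minimax} is attained at $\rho=\lambda$ and equals $\sum_{i\in[K]} C(\mu_i,\tilde{\mu}_i)$; on the right, the maximum is attained by potentials $\varphi_1,\ldots,\varphi_K;\psi_1,\ldots,\psi_K\in C_b(\mathcal{X})$ with $\psi_i\geq0$, $\varphi_i=\psi_i^c$, $\psi_i=\psi_i^{\bar c c}$ (all bounded and Lipschitz since $c$ is), and because $\sum_{i\in[K]}\psi_i\geq0$ the inner minimization over $\rho\leq\lambda$ is again attained at $\rho=\lambda$. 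This produces the identity
\[
\sum_{i\in[K]} C(\mu_i,\tilde{\mu}_i)=\sum_{i\in[K]}\int_{\mathcal{X}}\varphi_i\, d\mu_i-\int_{\mathcal{X}}\Big(\sum_{i\in[K]}\psi_i\Big)\,d\lambda .
\]

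Comparing this with the weak Kantorovich inequality $\sum_{i\in[K]}\big(\int\varphi_i\,d\mu_i-\int\psi_i\,d\tilde{\mu}_i\big)\leq\sum_{i\in[K]} C(\mu_i,\tilde{\mu}_i)$ and using $\tilde{\mu}_i\leq\lambda$, $\psi_i\geq0$, I would deduce $\sum_{i\in[K]}\int\psi_i\,d\tilde{\mu}_i=\sum_{i\in[K]}\int\psi_i\,d\lambda$, so that $(\varphi_i,\psi_i)$ is in fact an optimal Kantorovich pair for transporting $\mu_i$ onto $\tilde{\mu}_i$, matching the statement of the proposition. The next, and main, step is to show that $(\psi_1,\ldots,\psi_K)$ is \emph{feasible} for \eqref{eq:barycenter_dual}, i.e. that $\sum_{i\in[K]}\psi_i\leq1$ on $\mathcal{X}$, together with $\int_{\mathcal{X}}(1-\sum_{i\in[K]}\psi_i)\,d\lambda=0$. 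Both are first-order optimality statements for $\lambda$ in \eqref{eq:generalized_barycenter}: if $\sum_{i\in[K]}\psi_i(x_0)>1$ at some $x_0$, then placing an infinitesimal amount of extra mass in $\lambda$ near $x_0$ and rerouting transport into it lowers $\sum_{i\in[K]} C(\mu_i,\tilde{\mu}_i)$ at rate $\sum_{i\in[K]}\psi_i(x_0)$ (the Kantorovich reading of the marginal value of room at $x_0$) while raising $\lambda(\mathcal{X})$ at rate $1$, contradicting optimality; the explicit rerouting is available because $c$ is bounded. Symmetrically, if $\sum_{i\in[K]}\psi_i<1$ on a set of positive $\lambda$-mass, deleting that mass from $\lambda$ and from the $\tilde{\mu}_i$ lowers $\lambda(\mathcal{X})$ by more than it raises the transport cost.

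I expect this second step to be the main obstacle, because it requires turning the subgradient/complementary-slackness heuristic into honest perturbation estimates on partial optimal transport; an alternative route is to rerun Sion's theorem directly on \eqref{eq:generalized_barycenter} with $\lambda$ restricted to a single fixed weakly compact convex $\mathcal{B}$ (containing $0$ and all near-optimal $\lambda$'s, as produced by the tightness argument inside \textbf{Proposition} \ref{prop:existence}), arriving at the dual $\sup_{(\varphi,\psi)}\big\{\sum_{i\in[K]}\int\varphi_i\,d\mu_i+\inf_{\rho\in\mathcal{B}}\int(1-\sum_{i\in[K]}\psi_i)\,d\rho\big\}$, and then arguing (via a water-filling truncation of the $\psi_i$, combined with the monotonicity $f_i\leq\psi_i\Rightarrow f_i^c\leq\psi_i^c$ and boundedness of $c$) that imposing $\sum_{i\in[K]}\psi_i\leq1$ does not change this supremum; this still reduces to the same first-order information. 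Either way, once feasibility is in hand the reductions in the proof of \textbf{Proposition} \ref{prop:Duals} let me pass freely among the three duals.

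Granting the feasibility step, I would finish as follows. Setting $f_i:=\psi_i$ yields a feasible point of \eqref{eq:barycenter_dual} with objective $\sum_{i\in[K]}\int f_i^c\,d\mu_i=\sum_{i\in[K]}\int\varphi_i\,d\mu_i$, and from the displayed identity together with $\int(1-\sum_{i\in[K]}\psi_i)\,d\lambda=0$,
\[
B_\mu^*=\lambda(\mathcal{X})+\sum_{i\in[K]} C(\mu_i,\tilde{\mu}_i)=\int_{\mathcal{X}}\Big(1-\sum_{i\in[K]}\psi_i\Big)d\lambda+\sum_{i\in[K]}\int_{\mathcal{X}}\varphi_i\,d\mu_i=\sum_{i\in[K]}\int_{\mathcal{X}}\varphi_i\,d\mu_i .
\]
Hence the value of \eqref{eq:barycenter_dual} is at least $B_\mu^*$, and with weak duality it equals $B_\mu^*$, so $\psi_1,\ldots,\psi_K$ are optimal for \eqref{eq:barycenter_dual}; by the identities $g_i'=f_i^c$ and $f_i'=\max\{g_i,0\}^{\bar c}$ of \textbf{Proposition} \ref{prop:Duals}, the potentials $\varphi_i=\psi_i^c$ are optimal for \eqref{eq:mot_decomposed_dual} and \eqref{eq:mot_dual}. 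This proves the no-gap statement in the bounded Lipschitz case; the general lower semicontinuous $c$ is then reached by approximating $c$ from below by bounded Lipschitz costs and passing to the limit, which is the content of the subsequent result.
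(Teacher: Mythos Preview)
Your overall strategy is sound and your ``alternative route'' is in fact essentially what the paper does; the primary route you take makes the argument harder than necessary and leaves the very gap you flag.

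The paper's proof differs from your primary route in one structural choice that buys a lot: instead of taking $\mathcal{B}=\{\rho:0\leq\rho\leq\lambda\}$ and applying \textbf{Lemma}~\ref{lem:partial_ot_minimax} to the partial transport alone, the paper fixes a \emph{large} weakly compact convex $\mathcal{B}$ containing both $0$ and $\lambda$, and runs the minimax on the full functional $\rho(\mathcal{X})+\sum_i C(\mu_i,\nu_i)$ (the extra linear term $\rho(\mathcal{X})$ does not affect the Sion argument). Optimality of $\lambda$ then gives
\[
\lambda(\mathcal{X})+\sum_{i\in[K]}C(\mu_i,\tilde\mu_i)=\min_{\rho\in\mathcal{B}}\ \rho(\mathcal{X})+\sum_{i\in[K]}\int\varphi_i^*\,d\mu_i-\int\psi_i^*\,d\rho,
\]
and both of your ``main obstacles'' disappear at once. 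Feasibility $\sum_i\psi_i^*\leq 1$ is forced because otherwise choosing $\rho$ to be a large multiple of a Dirac at a violating point drives the right-hand side below the left; and the no-gap inequality is obtained simply by plugging in $\rho=0$, yielding $\lambda(\mathcal{X})+\sum_i C(\mu_i,\tilde\mu_i)\leq\sum_i\int\varphi_i^*\,d\mu_i$ directly, with no need for the complementary-slackness identity $\int(1-\sum_i\psi_i)\,d\lambda=0$ you were aiming for.

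By contrast, with your small $\mathcal{B}=\{\rho\leq\lambda\}$ the minimax only pins down the partial-transport value at $\rho=\lambda$, so you are left to prove both $\sum_i\psi_i\leq1$ and $\int(1-\sum_i\psi_i)\,d\lambda=0$ by perturbation of $\lambda$; this is exactly the first-order analysis you correctly identify as delicate (the potentials depend on $\lambda$ through the constraint set, so the ``marginal value'' reading of $\psi_i$ needs justification). The paper's approach sidesteps this entirely. Since your alternative route already describes the large-$\mathcal{B}$ minimax, you should commit to it: it is shorter, avoids the perturbation step, and yields the optimal $(\varphi_i^*,\psi_i^*)$ with the properties claimed in the statement.
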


\begin{proof}
If we fix some convex weakly compact subset $\mathcal{B}\subset \mathcal{M}(\mathcal{X})$ containing $\lambda$, then it follows from \textbf{Lemma} \ref{lem:partial_ot_minimax} and the optimality of $\lambda$ that there exists $\vp_i^*, \psi_i^*$ such that
\begin{equation}\label{eq:minimax_optimization}
\lambda(\mathcal{X})+\sum_{i \in [K]} C(\mu_i, \tilde{\mu}_i)=\min_{\rho\in \mathcal{B}} \rho(\mathcal{X})+\sum_{i \in [K]} \int_{\mathcal{X}} \vp_i^*(x)d\mu_i(x)-\psi_i^*(x')d\rho(x'),
\end{equation}
$\psi_i^*(x')\geq 0$, and $(\vp_i^*)^{\bar{c}}(x')=\psi_i^*(x'), (\psi_i^*)^{c}(x)=\vp_i^*(x)$ for all $1\leq i\leq K$ and $x,x'\in \mathcal{X}$.
If there exists $x'\in \mathcal{X}$ such that $\sum_{i \in [K]} \psi_i^*(x')>1$, then we can make the right hand side of \eqref{eq:minimax_optimization} smaller than the left hand side by choosing $\rho=M\delta_{x'}$ for some sufficiently large value of $M$. Hence, it follows that $\sum_{i \in [K]} \psi_i^*(x)\leq 1$ everywhere. Thus, the $\psi_i^*$ are feasible solutions to problem \eqref{eq:barycenter_dual} and, by \textbf{Proposition} \ref{prop:Duals}  $(\psi_i^*)^c=\vp_i^*$ are feasible solutions to \eqref{eq:mot_decomposed_dual}. Finally, if we choose $\rho=0$, it follows that 
\[
\eqref{eq:generalized_barycenter}=\lambda(\mathcal{X})+\sum_{i \in [K]} C(\mu_i, \tilde{\mu}_i)\leq \sum_{i \in [K]} \vp_i^*(x)d\mu_i(x)\leq \eqref{eq:mot_decomposed_dual}=\eqref{eq:barycenter_dual}\leq \eqref{eq:generalized_barycenter}
\]
where the second last equality follows from \textbf{Proposition} \ref{prop:Duals} and the last inequality holds trivially by duality. Therefore, we can infer that there is no duality gap.
\end{proof}

\begin{proposition} \label{prop:barycenter_duality}
Given measures $\mu_1, \ldots, \mu_K$, if $c$ is a cost that satisfies \textbf{Assumption} \ref{assump:CostStructure}, then problems \eqref{eq:barycenter_dual}-\eqref{eq:mot_dual} all have the same value as \eqref{eq:generalized_barycenter}.
\end{proposition}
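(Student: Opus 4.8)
The plan is to prove equality by sandwiching. Weak duality always gives $\eqref{eq:barycenter_dual}\le\eqref{eq:generalized_barycenter}$, and via \textbf{Proposition}~\ref{prop:FirstMOT} and the fact that $\eqref{eq:mot_decomposed_dual}$ and $\eqref{eq:mot_dual}$ are duals of $\eqref{eq:multimarginal_decomposed}$ and $\eqref{eq:full_multimarginal}$, all three dual values lie below $\eqref{eq:generalized_barycenter}$; by \textbf{Proposition}~\ref{prop:Duals} the three dual values coincide, so it suffices to prove $\eqref{eq:barycenter_dual}\ge\eqref{eq:generalized_barycenter}$. For this I would approximate $c$ from below by bounded Lipschitz costs, for instance
\[
c_n(x,x'):=\min\Big\{\,n,\ \inf_{a,b\in\mathcal{X}}\big(c(a,b)+n\,d(x,a)+n\,d(x',b)\big)\Big\}.
\]
Each $c_n$ is $n$-Lipschitz in each argument, bounded, satisfies $c_n(x,x)=0$, and $c_n\uparrow c$ pointwise: the double inf-convolution increases to $c$ by lower semicontinuity and nonnegativity, and the truncation becomes inactive in the limit. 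One also needs that each $c_n$ still satisfies the hypothesis of \textbf{Proposition}~\ref{prop:existence}; here the coercivity in \textbf{Assumption}~\ref{assump:CostStructure} enters, because it forces $c_n(x,x')$ to equal its truncated value $n$ once $x$ leaves a fixed large compact set associated to any compact set of $x'$'s. Granting this, \textbf{Proposition}~\ref{prop:barycenter_lip_duality} applies to each $c_n$.

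Write $D_n,P_n$ for the optimal values of $\eqref{eq:barycenter_dual}$ and $\eqref{eq:generalized_barycenter}$ with $c$ replaced by $c_n$, and $D,P$ for the values with $c$ itself; by \textbf{Proposition}~\ref{prop:barycenter_lip_duality}, $D_n=P_n$ for every $n$. Since $c_n\le c$ we have $f^{c_n}\le f^{c}$ pointwise, hence $D_n\le D$, and $\int c_n\,d\pi\le\int c\,d\pi$ for every coupling $\pi$, hence $P_n\le P$; moreover $(D_n)$ and $(P_n)$ are nondecreasing. Thus $P_n=D_n\le D\le P$, and everything reduces to proving $\lim_{n\to\infty}P_n=P$ (the inequality $\lim_n P_n\le P$ being already in hand).

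For this convergence I would work with the transport-plan formulation of $\eqref{eq:generalized_barycenter}$ used in the proof of \textbf{Proposition}~\ref{prop:existence}. Pick minimizers $(\lambda^n,\pi^n_1,\dots,\pi^n_K)$ of $P_n$, normalized so that $\lambda^n(\mathcal{X})\le\sum_i\mu_i(\mathcal{X})$ and $\sum_i(\text{second marginal of }\pi^n_i)\ge\lambda^n$. The tightness argument of \textbf{Proposition}~\ref{prop:existence} — which relocates excess mass of $\lambda^n$ onto source points and uses only $c_n(x,x)=0$ together with the compactness property of $c_n$ — gives, for each $\epsilon$, a compact set valid for all $n$ outside which every $\lambda^n$ carries mass at most $\epsilon$; hence $\{\lambda^n\}$ is tight, and so is $\{\pi^n_i\}$ since the first marginals are the fixed $\mu_i$. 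Extract a weakly convergent subsequence $(\lambda^n,\pi^n_i)\rightharpoonup(\lambda^\infty,\pi^\infty_i)$; the marginal constraints and the domination constraints pass to the weak limit, so $(\lambda^\infty,\pi^\infty_i)$ is feasible for $\eqref{eq:generalized_barycenter}$. For each fixed $m$, using $c_m\le c_n$ for $n\ge m$ and lower semicontinuity of $c_m$,
\[
\lambda^\infty(\mathcal{X})+\sum_i\int c_m\,d\pi^\infty_i\ \le\ \liminf_{n}\Big(\lambda^n(\mathcal{X})+\sum_i\int c_n\,d\pi^n_i\Big)=\lim_{n}P_n ,
\]
and letting $m\to\infty$ with monotone convergence yields $P\le\lambda^\infty(\mathcal{X})+\sum_i\int c\,d\pi^\infty_i\le\lim_n P_n$. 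Combined with $\lim_n P_n\le P$ this gives $\lim_nP_n=P$, hence $P=\lim_nP_n=\lim_nD_n\le D\le P$, so $D=P$; \textbf{Proposition}~\ref{prop:Duals} then transfers the equality to $\eqref{eq:mot_decomposed_dual}$ and $\eqref{eq:mot_dual}$.

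The main obstacle is the uniform-in-$n$ compactness in the previous paragraph: because each $c_n$ is everywhere $\le c$, its intrinsic coercivity is weak, and one must confirm that the relocation argument of \textbf{Proposition}~\ref{prop:existence} can be run with a compact set $F_\epsilon$ independent of $n$. A clean way to secure this is to first reduce to compactly supported $\mu_i$ (restrict each $\mu_i$ to large compacts, and pass to the limit in the marginals as well, using that $\eqref{eq:generalized_barycenter}$ is stable under total-variation convergence of the $\mu_i$), and then invoke \textbf{Corollary}~\ref{cor:barycenter_geometry}: for $n$ large, every optimal $\lambda^n$ can be taken supported on the region swept out by the barycenter maps $T^{c_n}_A$ applied to the (now compact) supports, and the coercivity of $c$ in its first argument keeps this region bounded uniformly in $n$. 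Verifying that the $c_n$ inherit the \textbf{Proposition}~\ref{prop:existence} hypothesis is a secondary technical point handled along the same lines. The rest — derivation of the duals, weak-limit stability of the constraints, and the bookkeeping of the inequality chain — is routine.
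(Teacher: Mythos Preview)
Your proposal is essentially the paper's own proof. Both arguments approximate $c$ from below by bounded Lipschitz costs obtained by inf-convolution, invoke \textbf{Proposition}~\ref{prop:barycenter_lip_duality} at each level, use monotonicity in the approximation parameter to compare dual values, and then extract a weakly convergent subsequence of primal minimizers to push the approximate primal values up to $P$. The only cosmetic differences are that the paper truncates via a smooth cap $j\eta(\tilde{c}_j/j)$ rather than a hard $\min\{n,\cdot\}$, and writes the dual-side inequality as ``feasible for $c_{j_0}$ implies feasible for $c$'' rather than $f^{c_n}\le f^{c}$; these are equivalent bookkeeping choices.

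One point worth noting: you flag the uniform-in-$n$ tightness of the minimizers $(\lambda^n,\pi^n_i)$ as the main obstacle and propose a reduction to compactly supported $\mu_i$ plus \textbf{Corollary}~\ref{cor:barycenter_geometry}. The paper handles this with a single sentence, ``Arguing as in \textbf{Proposition}~\ref{prop:existence}, it follows that $\lambda^j$ and $\pi_i^j$ are tight with respect to $j$,'' without spelling out why the compact set $F_\epsilon$ can be chosen independently of $j$. Your explicit concern is well placed; your proposed reduction is a reasonable way to make this step airtight, though one can also argue it directly by observing that the relocation step in \textbf{Proposition}~\ref{prop:existence} only requires $c_j(x,x)=0$ (so mass outside $F_\epsilon$ can always be moved to its source point at zero cost), which holds for all $j$ simultaneously.
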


\begin{remark}
Note that we do not claim that the supremums in \eqref{eq:barycenter_dual}-\eqref{eq:mot_dual} are attained.
\end{remark}

\begin{proof}
Let $\eta:[0,\infty)\to[0,\infty)$ be a smooth strictly increasing function such that $\eta(x)=x$ for $x\leq 1$ and $\eta(x)\leq 2$ for all $x\in [0,\infty)$.
For each $j\in \ZZ_+$, define
\[
\tilde{c}_j(x,x'):=\inf_{(x_1,x_1')\in \mathcal{X}\times \mathcal{X}} c(x_1,x_1')+jd(x,x_1)+j d(x', x_1),
\]
and $c_j(x,x'):=j\eta(\frac{\tilde{c}_j(x,x')}{j})$. It then follows that $c_j$ is a bounded Lipschitz cost that satisfies the assumptions of \textbf{Proposition} \ref{prop:existence}.  Since $c$ is lower semicontinuous it is straightforward to check that $c_j$ converges to $c$ pointwise everywhere.

Let $\alpha_j$ and $\beta_j$ denote the optimal values of Problems \eqref{eq:generalized_barycenter} and \eqref{eq:mot_decomposed_dual} respectively with cost $c_j$. From \textbf{Proposition} \ref{prop:barycenter_lip_duality} we know that $\alpha_j=\beta_j$. Let $\alpha, \beta$ denote the optimal values of Problems \eqref{eq:generalized_barycenter} and \eqref{eq:mot_decomposed_dual} respectively with the original cost $c$. Since we already know that $\beta\leq \alpha$, our goal is to show that $\alpha
\leq \beta$.

Exploiting the fact that $c_j$ is increasing with respect to $j$, if $g_1^{j_0},\ldots, g_K^{j_0}$ is a feasible solution to \eqref{eq:mot_decomposed_dual} for the cost $c_{j_0}$, then it is also a feasible solution to \eqref{eq:mot_decomposed_dual} for $c$. Therefore, 
$\lim_{j\to\infty}\beta_j\leq \beta$.

On the other hand, let $\lambda^j$ and $\tilde{\mu}_1^j,\ldots, \tilde{\mu}_K^j$ be optimal solutions to \eqref{eq:generalized_barycenter} with the cost $c_j$. Let $\pi_i^j$ be the optimal transport plan between $\mu_i$ and $\tilde{\mu}_i^j$. Arguing as in \textbf{Proposition} \ref{prop:existence}, it follows that $\lambda^j$ and $\pi_i^j$ are tight with respect to $j$. Thus, there exists a subsequence (that we do not relabel) such that $\lambda^j$ converges weakly to some $\lambda$ and $\pi_i^j$ converges weakly to some $\pi_i$. Fix some $j_0$ and note that for all $j\geq j_0$
\[
\alpha_j=\lambda^j(\mathcal{X})+\sum_{i \in [K]} \int_{\mathcal{X}}c_j(x,x')d\pi_i^j(x,x')\geq \lambda^j(\mathcal{X})+\sum_{i \in [K]} \int_{\mathcal{X}}c_{j_0}(x,x')d\pi_i^j(x,x').
\]
Therefore, 
\[
\liminf_{j\to\infty} \alpha_j\geq \lambda(\mathcal{X})+\sum_{i \in [K]} \int_{\mathcal{X}}c_{j_0}(x,x')d\pi_i(x,x').
\]
Taking a supremum over $j_0$, it follows that 
\[
\liminf_{j\to\infty} \alpha_j\geq  \lambda(\mathcal{X})+\sum_{i \in [K]} \int_{\mathcal{X}}c(x,x')d\pi_i(x,x')\geq \alpha.
\]
Thus, $\alpha\leq \liminf_{j\to\infty} \alpha_j= \liminf_{j\to\infty} \beta_j=\beta$.  Thanks to \textbf{Proposition} \ref{prop:Duals}, it follows that \eqref{eq:generalized_barycenter} and \eqref{eq:barycenter_dual}-\eqref{eq:mot_dual}, all have the same optimal value. 
\end{proof}

\section{Proof of Theorem \ref{thm:Main}}
\label{sec:ProofMainTheorem} 
 
In this section, we prove \textbf{Theorem} \ref{thm:Main} and return to the adversarial problem \eqref{Robust problem:Intro}.

\subsection{Theorem \ref{thm:Main}: upper bound}
\label{sec:UpperBound}

First we show that
\[
\frac{1}{2\mu(\Z)}B^*_{\mu}  \leq \inf_{\pi \in \Pi_K(\mu)} \int_{\mathcal{Z}^K_*} \c (z_1, \dots, z_K) d\pi (z_1, \dots, z_K).
\]
To see this, recall that $B^*_{\mu}$
is, according to \textbf{Proposition} \ref{prop:FirstMOT}, equal to
\begin{equation*}
    \inf_{\gamma \in \Upsilon_\mu } \int_{(\mathcal{X} \times [0,1])^K} \widetilde{c} (\vec{x}, \vec{r}) d\gamma (\vec{x}, \vec{r}) \quad \textup{s.t. }  \tilde{\mathcal{P}}_{i\,\#}(r_i\gamma)=\mu_i  \textup{ for all }  i\in [K].
\end{equation*}
Here and in what follows we use $\Upsilon_{\mu}$ to denote the set of positive measures satisfying $\tilde{\mathcal{P}}_{i\,\#}(r_i\gamma)=\mu_i \; \textup{for all} \; i\in \{1,\ldots, K \}$.

Let $\pi \in \Pi_K(\mu)$, and for given $\vec{z}=(z_1, \dots, z_K) \in \Z_*^K$, let $\gamma_{\vec{z}} \in \Upsilon_{\widehat{\mu}_{\vec z}}$ be a solution for problem \eqref{eq:full_multimarginal} (when $\mu = \widehat{\mu}_{\vec z}$). We define a measure $\gamma$ as follows: 
\[ \int_{(\X \times [0,1])^K} h(\vec x , \vec r) d \gamma(\vec x , \vec r) := \int_{\Z_*^K } \left( \int_{(\X \times [0,1])^K} h(\vec x , \vec r) d \gamma_{\vec z}(\vec x , \vec r) \right) d \pi(z_1, \dots, z_K)    \]
for every test function $h: (\X \times [0,1])^K \rightarrow \R$.

We check that $\gamma \in \Upsilon_{\frac{1}{2 \mu(\Z)}\mu}$. Indeed, for any test function $g : \X \rightarrow \R$ we have:
\begin{align*}
    \int_{(\mathcal{X} \times [0,1])^K} r_i g(x_i) d\gamma(\vec x , \vec r) &= \int_{\Z_*^K } \left( \int_{(\mathcal{X} \times [0,1])^K} r_i g(x_i) d \gamma_{\vec z}(\vec x , \vec r) \right) d \pi(z_1, \dots, z_K)\\
    &= \frac{1}{K} \int_{\Z_*^K } \left(  \sum_{j: z_j \not = \mathghost } g(x_j) \mathds{1}_{i_j=i} \right) d \pi(z_1, \dots, z_K)\\
    &= \frac{1}{2 \mu(\Z)} \int_{\mathcal{X}} g(x) d\mu_i(x).
\end{align*}
Let us now compute the cost associated to this $\gamma$:
\begin{align*}
    \int_{(\mathcal{X} \times [0,1])^K} \widetilde{c} (\vec{x}, \vec{r}) d\gamma(\vec x , \vec r) &= \int_{\Z_*^K } \left( \int_{(\mathcal{X} \times [0,1])^K} \widetilde{c}(\vec x , \vec r) d \gamma_{\vec z}(\vec x , \vec r) \right) d \pi(z_1, \dots, z_K) \\
    &= \int_{\Z_*^K } B^*_{\widehat{\mu}_{\vec z}} d \pi(z_1, \dots, z_K) \\
    &= \int_{\mathcal{Z}^K_*} \c(z_1, \dots, z_K) d\pi(z_1, \dots, z_K).
\end{align*}

Combining the above with \textit{Remark} \ref{rem:Homogeneity}, we conclude that
\[ 
\frac{1}{2 \mu(\Z)}B^*_{\mu} = B^*_{\frac{1}{2\mu(\Z)}\mu} =  \inf_{\gamma \in \Upsilon_{\frac{1}{2\mu(\Z) } \mu}} \int_{(\mathcal{X} \times [0,1])^K} \widetilde{c} (\vec x , \vec r) d\gamma (\vec{x}, \vec{r})  \leq \inf_{\pi \in \Pi_K (\mu)} \int_{\mathcal{Z}^K_*} \c (\vec{z}) d\pi (\vec{z}).  
\]

\subsection{Theorem \ref{thm:Main}: lower bound}
Now, it is sufficient to show
\[  
\inf_{\pi \in \Pi_K(\mu)} \int \c (z_1, \dots, z_K) d\pi (z_1, \dots, z_K) \leq \frac{1}{2\mu(\Z)}B^*_{\mu}.
\]

First, observe that for any $\phi \in \Phi$ we have:
\begin{align*}
    &\sum_{j=1}^K \int_{\mathcal{X} \times [K]} \phi_j(z_j) \frac{1}{2 \mu(\mathcal{Z})}d\mu(z_j) +  \frac{1}{2} \sum_{j=1}^K \phi_j(\mathghost) \\
    = &\sum_{i \in [K]} \int_{\mathcal{X}} \Big( \sum_{j=1}^K \phi_j(x_i,i) + \sum_{j=1}^K \phi_j(\mathghost) \Big) \frac{1}{2 \mu(\mathcal{Z})}d\mu_i(x_i).
\end{align*}
For each $i \in [K]$, define
\begin{equation*}
    \psi_i(x_i):= \sum_{j=1}^K \phi_j(x_i,i) + \sum_{j=1}^K \phi_j(\mathghost).
\end{equation*}
It is thus clear from the above computation and definition that 
\begin{equation}
    \sum_{j=1}^K \int_{\mathcal{X} \times [K]} \phi_j(z_j) \frac{1}{2 \mu(\mathcal{Z})}d\mu(z_j) + \frac{1}{2}\sum_{j=1}^K \phi_j(\mathghost)  =  \sum_{i \in [K]} \int_{\mathcal{X}} \psi_i(x_i) \frac{1}{2 \mu(\mathcal{Z})}d\mu_i(x_i).
    \label{eq:EqualCostDual}
\end{equation}
Our goal is now to show that $\{ \psi_i : i \in [K]\}$ is feasible for problem  \eqref{eq:mot_decomposed_dual} (working with the normalized measure $\frac{1}{2 \mu(\Z)}\mu$). We start with a preliminary lemma and an example illustrating the strategy behind the proof of this fact. The precise statement appears in \textbf{Proposition} \ref{Prop:LowerBound} below.

\begin{lemma}\label{lem : bound_dual_potential}
Given $(z_1, \dots, z_K) \in \mathcal{Z}_*^K$, let $A = \{ j \in [K] : z_j \neq \mathghost \}$. Suppose that for each $j \in A$ $z_j = (x_j, j)$. Then, for each $\phi \in \Phi$,
\begin{equation}\label{eq : bound_dual_potential}
    \sum_{j=1}^K \phi_j(z_j) \leq \frac{1}{K} + \frac{1}{K} c_A.
\end{equation}
\end{lemma}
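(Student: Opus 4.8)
The plan is to reduce the claim to a direct estimate on the MOT cost $\c(\vec z) = B^*_{\widehat{\mu}_{\vec z}}$ for the particular tuple described in the statement: I would first rewrite this quantity through the localization identity of \textbf{Lemma} \ref{lemma:Localization}, and then bound the resulting value of $\widetilde c$ by plugging an explicit admissible competitor into its definition \eqref{eq:cost_full_multimarginal}.

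First I would record that, for the tuple $\vec z$ with $z_j=(x_j,j)$ when $j\in A$ and $z_j=\mathghost$ when $j\notin A$, the associated measure is $\widehat{\mu}_{\vec z}=\frac1K\sum_{l\in A}\delta_{(x_l,l)}$. The degenerate case $A=\emptyset$ is immediate, since then $\widehat{\mu}_{\vec z}=0$, so $B^*_{\widehat{\mu}_{\vec z}}=0\le \frac1K$ by \textbf{Remark} \ref{rem:Homogeneity} while $c_\emptyset=0$. Assuming $A\neq\emptyset$, so that $A\in S_K$, I would set $\tilde r_i:=\frac1K$ for $i\in A$ and $\tilde r_i:=0$ for $i\notin A$, and $\tilde x_i:=x_i$ for $i\in A$, choosing $\tilde x_i$ arbitrarily for $i\notin A$ (which is harmless since $\tilde r_i=0$ there). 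Then $\sum_{i\in[K]}\tilde r_i\delta_{(\tilde x_i,i)}=\widehat{\mu}_{\vec z}$, and \textbf{Lemma} \ref{lemma:Localization} gives $\c(\vec z)=B^*_{\widehat{\mu}_{\vec z}}=\widetilde c\big((\tilde x_1,\tilde r_1),\dots,(\tilde x_K,\tilde r_K)\big)$.

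Next I would bound the right-hand side using \eqref{eq:cost_full_multimarginal} with the competitor $m$ defined by $m_A:=\frac1K$ and $m_B:=0$ for every $B\in S_K$ with $B\neq A$. This $m$ is admissible: for $i\in A$ we have $A\in S_K(i)$, hence $\sum_{B\in S_K(i)}m_B=m_A=\frac1K=\tilde r_i$, while for $i\notin A$ we have $A\notin S_K(i)$, hence $\sum_{B\in S_K(i)}m_B=0=\tilde r_i$. The objective value of this competitor is $\frac1K\big(c_A(\tilde x_1,\dots,\tilde x_K)+1\big)$, and since $c_A$ depends only on the coordinates indexed by $A$, on which $\tilde x_i=x_i$, this equals $\frac1K+\frac1K c_A$. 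Therefore $\c(\vec z)\le \frac1K+\frac1K c_A$, and since $\phi\in\Phi$ satisfies $\sum_{j=1}^K\phi_j(z_j)\le \c(\vec z)$ by \eqref{eq:constraint_set}, the inequality \eqref{eq : bound_dual_potential} follows.

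There is no genuinely hard step here; the only points requiring care are the bookkeeping with the index sets $S_K(i)$ — checking that the chosen $m$ satisfies precisely the constraints appearing in \eqref{eq:cost_full_multimarginal} — and the trivial-but-necessary treatment of the empty-label case $A=\emptyset$ via the $1$-homogeneity of $B^*$.
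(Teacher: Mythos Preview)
Your proposal is correct and follows essentially the same route as the paper: reduce to bounding $B^*_{\widehat\mu_{\vec z}}$ via the localization \textbf{Lemma}~\ref{lemma:Localization}, then plug the competitor $m_A=\tfrac1K$, $m_B=0$ for $B\neq A$ into \eqref{eq:cost_full_multimarginal}. You are in fact slightly more careful than the paper, handling the degenerate case $A=\emptyset$ (where $\emptyset\notin S_K$) explicitly and verifying the $S_K(i)$-constraints for $i\notin A$; these are the only points where one could slip up, and you treat them cleanly.
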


\begin{proof}
Since $\phi \in \Phi$, it suffices to show that
\begin{equation*}
    B^*_{\widehat{\mu}_{\vec{z}}} \leq \frac{1}{K} + \frac{1}{K} c_A,
\end{equation*}
where
\begin{equation*}
   \widehat{\mu}_{\vec{z}} = \sum_{l \text{ s.t. } z_l \not = \mathghost }^K \frac{1}{K} \delta_{z_l} = \sum_{j \in A} \frac{1}{K} \delta_{z_j} = \sum_{j \in A} \frac{1}{K} \delta_{(x_j,j)}.
\end{equation*}
For simplicity, assume that $A = \{1, \dots, S\}$. By \textbf{Lemma} \ref{lemma:Localization},
\begin{equation*}
    B^*_{\widehat{\mu}_{\vec{z}}} = \widetilde{c} ( (x_1, \frac{1}{K}), \dots, (x_S, \frac{1}{K}), (x_{S+1}, 0), \dots, (x_K, 0)),
\end{equation*}
where we can pick $x_{S+1}, \dots, x_K$ arbitrarily. Let $m_A = \frac{1}{K}$ and $m_{A'}=0$ for $A' \neq A$. It is easy to check that such $m$ is feasible for \eqref{eq:cost_full_multimarginal} since $r_s = \frac{1}{K}$ for $1 \leq s \leq S$ and $r_j = 0$ for $j \notin A$. So, \eqref{eq:cost_full_multimarginal} implies
\begin{equation*}
    \widetilde{c} ( (x_1, \frac{1}{K}), \dots, (x_S, \frac{1}{K}), (x_{S+1}, 0), \dots, (x_K, 0)) \leq \frac{1}{K} + \frac{1}{K} c_A.
\end{equation*}
The conclusion follows.
\end{proof}

We now present specific examples which illustrate why $\{ \psi_i : i \in [K]\}$ is feasible for \eqref{eq:mot_decomposed_dual}, that is, we need to show that for any $(x_1, \dots, x_K) \in \X^K$ and for any $A \in S_K$ we have \begin{equation*}
    \sum_{i \in A} \psi_i(x_i) \leq 1 + c_A.
\end{equation*}

Let $K=4$ and suppose that $A=\{1, 2, 3\}$. 
Expanding the $\psi_i$'s we get:
\begin{equation*}
    \psi_1(x_1) + \psi_2(x_2) + \psi_3(x_3) = \sum_{i \in [3]} \sum_{j=1}^4 \phi_j(x_i, i) + 3\sum_{j=1}^4 \phi_j(\mathghost),
\end{equation*}
or, after a rearrangement of the summands:
\begin{align*}
    &\phi_1(x_1, 1) + \phi_2(x_2, 2) + \phi_3(x_3, 3) + \phi_4(\mathghost)\\
    + &\phi_2(x_1, 1) + \phi_3(x_2, 2) + \phi_4(x_3, 3) + \phi_1(\mathghost)\\
    + &\phi_3(x_1, 1) + \phi_4(x_2, 2) + \phi_1(x_3, 3) + \phi_2(\mathghost)\\
    + &\phi_4(x_1, 1) + \phi_1(x_2, 2) + \phi_2(x_3, 3) + \phi_3(\mathghost)\\
    + &2\sum_{j=1}^4 \phi_j(\mathghost).
\end{align*}
We can bound the first line above using \eqref{eq : bound_dual_potential}:
\begin{equation*}
    \phi_1(x_1, 1) + \phi_2(x_2, 2) + \phi_3(x_3, 3) + \phi_4(\mathghost) \leq \frac{1}{4} + \frac{1}{4}c_A.
\end{equation*}
The same argument holds for the second, third and fourth lines. For the last line, notice that $\c(\mathghost, \dots, \mathghost)=0$. Hence, the last line is bounded above by $0$ and we can now deduce that
\begin{equation*}
    \psi_1(x_1) + \psi_2(x_2) + \psi_3(x_3) \leq 1 + c_A.
\end{equation*}

The above situation becomes less trivial if $|A|$ is much smaller than $K$. To illustrate, let $K=9$ and suppose that $A = \{1,2\}$. Rearranging the $\phi_j$'s as above we will not be able to obtain the desired upper bound since the total number of $\phi_j(\mathghost)$'s available is in this case $K |A| = 18$ while the required number of $\phi_j(\mathghost)$'s in the analogous arrangement as above would be at least $K(K-|A|) = 63$. To overcome this problem, we need to rearrange the $\phi_j$'s further in order to reduce the required number of $\phi_j(\mathghost)$'s and deduce from this refined analysis the desired upper bound.

First of all, construct a $9 \times 9$ arrangement in the following way: for the $k$-th row in the arrangement, let the $k$-th and the $(k+1)$-th elements be $\phi_k(x_1, 1)$ and $\phi_{k+1}(x_2,2)$, respectively, and let the remaining elements be ``empty". Note that here $k$ and $k+1$ are considered modulo $9$; for example, $10 \equiv 1 \mod 9$, and an empty element means literally no element. We merge rows in the following way: merge together the $1$-st, the $3$-rd, the $5$-th and the $7$-th rows, i.e. replace empty elements for none-empty ones coming from other rows; likewise, merge together the $2$-nd, the $4$-th, the $6$-th and the $8$-th rows; finally, keep the $9$-th row as is. By the above construction, the $1$-st, the $3$-rd, the $5$-th and the $7$-th rows share no common $\phi_j$. Let $\emptyset_j$ denote an empty element at the $j$-th coordinate. The resulting arrangement can be written as:
\begin{align*}
   & \phi_1(x_1, 1), \phi_2(x_2, 2), \phi_3(x_1, 1), \phi_4(x_2, 2), \phi_5(x_1,1), \phi_6(x_2,2), \phi_7(x_1, 1), \phi_8(x_2, 2), \emptyset_9,\\
   & \emptyset_1, \phi_2(x_1, 1), \phi_3(x_2, 2), \phi_4(x_1, 1),\phi_5(x_2, 2), \phi_6(x_1,1), \phi_7(x_2,2), \phi_8(x_1, 1), \phi_9(x_2, 2),\\
   &\phi_1(x_2, 2), \emptyset_2, \emptyset_3, \emptyset_4, \emptyset_5, \emptyset_6, \emptyset_7, \emptyset_8, \phi_9(x_1, 1),
\end{align*}
with the first row representing the merge of rows 1-3-5-7, the second row representing the merge of rows 2-4-6-8, and the last row representing row 9.

Notice that the above arrangement contains all $\phi_j(x_s, s)$'s. Furthermore, the number of $\emptyset_j$ for each $1 \leq j \leq 9$ is exactly $1$. Filling $\emptyset_j$'s with $\phi_j(\mathghost)$'s, and using the fact that the number of $\phi_j(\mathghost)$'s for each $1 \leq j \leq 9$ is $2$, it follows that
\begin{align*}
    \psi_1(x_1) + \psi_2(x_2) &= \sum_{j=1}^4 \big( \phi_{2j-1}(x_1,1) + \phi_{2j}(x_2,2)\big) + \phi_9(\mathghost)\\
    &\quad+  \phi_1(\mathghost) + \sum_{j=1}^4 \big( \phi_{2j}(x_1,1) + \phi_{2j+1}(x_2,2) \big)\\
    &\quad+ \phi_1(x_2,2) + \sum_{j=2}^8 \phi_j(\mathghost) + \phi_9(x_1,1)\\
    &\quad+ \sum_{j=1}^9 \phi_j(\mathghost).
\end{align*}
Observe that for $(z_1, \dots, z_K) = ((x_1, 1), (x_2,2), \dots, (x_1, 1), (x_2,2), \mathghost)$, $\widehat{\mu}_{\vec{z}} = \frac{4}{9} \delta_{(x_1,1)} + \frac{4}{9} \delta_{(x_2,2)}$. Factoring out the $4$ (see \textit{Remark} \ref{rem:Homogeneity}) and applying \eqref{eq : bound_dual_potential}, what we obtain is
\begin{equation*}
    \sum_{j=1}^4 \big( \phi_{2j-1}(x_1,1) + \phi_{2j}(x_2,2)\big) + \phi_9(\mathghost) \leq B^*_{\widehat{\mu}_{\vec{z}}} \leq \frac{4}{9} + \frac{4}{9}c_A.
\end{equation*}
Similarly, the second and third lines can be bounded by $\frac{4}{9} + \frac{4}{9}c_A$ and $\frac{1}{9} + \frac{1}{9}c_A$, respectively. Since $\sum_{j=1}^9 \phi_j(\mathghost) \leq 0$, it follows that
\begin{equation*}
    \psi_1(x_1) + \psi_2(x_2) \leq 1 + c_A.
\end{equation*}

The above two situations help us illustrate the general strategy for proving that the resulting $\psi_i$ are feasible: the idea is to arrange summands appropriately so that we can utilize \textbf{Lemma} \ref{lem : bound_dual_potential} in the most effective way possible. In the following proposition we state precisely our aim and prove it by such strategy.

\begin{proposition}
\label{Prop:LowerBound}
Let $(\phi_1, \dots, \phi_K) \in \Phi$ be a feasible dual potential. For each $i \in [K]$, define
\begin{equation*}
    \psi_i(x_i):= \sum_{j=1}^K \phi_j(x_i,i) + \sum_{j=1}^K \phi_j(\mathghost), \quad x_i \in \X.
\end{equation*}
Then $\{ \psi_i : i \in [K]\}$ is feasible for \eqref{eq:mot_decomposed_dual}. 
\end{proposition}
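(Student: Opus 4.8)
The plan is to prove feasibility of $\{\psi_i\}_{i\in[K]}$ directly: fix an arbitrary $A\in S_K$ and an arbitrary tuple $(x_i)_{i\in A}$, and show $\sum_{i\in A}\psi_i(x_i)\le 1+c_A$, where $c_A = c_A(x_1,\dots,x_K)$ with the coordinates outside $A$ playing no role. Expanding the definition, $\sum_{i\in A}\psi_i(x_i) = \sum_{i\in A}\sum_{j=1}^K \phi_j(x_i,i) + |A|\sum_{j=1}^K\phi_j(\mathghost)$. The left-hand sum involves $|A|\cdot K$ terms of the form $\phi_j(x_i,i)$, one for each pair $(i,j)\in A\times[K]$. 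The heart of the argument is a combinatorial \emph{grouping lemma}: I want to partition these $|A|K$ terms, together with a supply of $\phi_j(\mathghost)$ terms, into blocks such that each block has the form $\sum_{l=1}^K\phi_{\sigma(l)}(w_l)$ for a permutation $\sigma$ of $[K]$, where each $w_l$ is either some $(x_i,i)$ with $i\in A$ or $\mathghost$; then Lemma~\ref{lem : bound_dual_potential} bounds each block by $\frac{|A'|}{K}(1+c_{A'})\le \frac{|A'|}{K}(1+c_A)$, where $A'\subseteq A$ is the set of active labels in that block and we use $c_{A'}\le c_A$ (monotonicity from the definition \eqref{eq:cost_A}). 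Summing, if the nonempty blocks cover the $(i,j)$ pairs with total ``label multiplicity'' exactly $|A|$ per label... more precisely, if across all blocks each pair $(x_i,i)$ appears exactly once, the block sizes $|A'|$ add up to $|A|K/K = |A|$ — wait, each block contributes its $|A'|$ active entries and there are $|A|K$ pairs total distributed one per block-slot, so $\sum_{\text{blocks}}|A'| $ counts pairs $= |A|$ only if... let me recount: there are $|A|$ distinct values $i$, each appearing in $K$ pairs $(i,1),\dots,(i,K)$; a block of ``type $A'$'' contains exactly one pair $(x_i,i)$ for each $i\in A'$. So the number of blocks of type containing label $i$ must be exactly $K$ (to use up all $K$ pairs with that first coordinate $i$), and $\sum_{\text{blocks}}|A'| = \sum_{i\in A} \#\{\text{blocks containing }i\} = K|A|$, hence $\sum \frac{|A'|}{K} = |A|$. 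Good — then $\sum_{i\in A}\psi_i(x_i) \le |A|(1+c_A) - (\text{defect})$, and I need the $\phi_j(\mathghost)$ accounting to absorb the difference down to $1+c_A$, using $\sum_{j=1}^K\phi_j(\mathghost)\le \c(\mathghost,\dots,\mathghost)=B^*_0 = 0$.

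So the key steps, in order: (1) record monotonicity $c_{A'}\le c_A$ for $A'\subseteq A$ and the identity $\sum_j\phi_j(\mathghost)\le 0$; (2) state and prove the combinatorial grouping lemma — the construction generalizing the $K=9$, $|A|=2$ example: build a $K\times K$ array whose $k$-th row holds $\phi_{k}(x_{i_1},i_1),\phi_{k+1}(x_{i_2},i_2),\dots,\phi_{k+|A|-1}(x_{i_{|A|}},i_{|A|})$ (indices mod $K$, $A=\{i_1,\dots,i_{|A|}\}$) in columns $k,\dots,k+|A|-1$ and is empty elsewhere, then merge rows into $\lceil K/\gcd\rceil$-type groups so that merged rows share no common column-index, fill leftover empty slots with $\phi_j(\mathghost)$, and verify each merged row is a genuine permutation-indexed block and that the total number of $\phi_j(\mathghost)$ used for each fixed $j$ is at most $K$ (the available supply, since $\sum_{i\in A}$ contributes $|A|\le K$ copies of $\sum_j\phi_j(\mathghost)$ — actually I have $|A|$ copies of the full sum $\sum_{j=1}^K\phi_j(\mathghost)$, i.e. $|A|$ copies of each $\phi_j(\mathghost)$, and the array has $K^2 - |A|K = K(K-|A|)$ empty slots, distributed $K-|A|$ per column $j$, so I need $K-|A|\le |A|$?? no — this is exactly the obstruction flagged in the $K=9,|A|=2$ discussion, and the resolution there was the row-merging, which reduces the empty-slot count per column from $K-|A|$ to something $\le |A|$); (3) apply Lemma~\ref{lem : bound_dual_potential} to each block, sum, and conclude.

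The main obstacle is step (2): making the row-merging construction precise for all $K$ and all $A$ and proving that after merging, the number of empty slots in each column is at most $|A|$ (so that the finite supply of $\phi_j(\mathghost)$ suffices), and that the merge is consistent (merged rows use disjoint column-sets, so the result is well-defined and each merged row, once its empties are filled by ghosts, is a complete transversal $\sum_{l=1}^K\phi_{\sigma(l)}(\cdot)$). I expect the clean way to organize this is: the $k$-th row occupies columns $\{k,k+1,\dots,k+|A|-1\}\pmod K$; rows $k$ and $k'$ are column-disjoint iff $k'-k\not\in\{-(|A|-1),\dots,|A|-1\}\pmod K$, i.e. their starting points are $\ge |A|$ apart cyclically; so one greedily partitions $\{0,1,\dots,K-1\}$ (the row indices) into residue-progressions with step $|A|$ — giving $|A|$ groups if $|A|\mid K$, and in general $\lceil K/|A|\rceil$ or so groups — each group being a set of pairwise-column-disjoint rows whose union covers a block of $\ge K-|A|+1$ columns... and the careful bookkeeping shows each column is left empty in at most $|A|$ of the merged rows. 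Everything else (monotonicity of $c_A$, the ghost inequality, the final summation $\sum\frac{|A'|}{K}=|A|$) is routine once the grouping is in hand.
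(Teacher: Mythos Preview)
Your overall strategy --- the $K\times K$ array, the cyclic placement of $\phi_j(x_i,i)$, row-merging in step $|A|$ to make the ghost supply suffice, then applying the block bound from \textbf{Lemma}~\ref{lem : bound_dual_potential} --- is exactly the paper's approach. The ghost-supply obstruction and its resolution by merging are correctly identified, and the per-column target you state (at most $|A|$ empty slots per column after merging) is the right one; the paper shows the exact count is $K - |A|\lfloor K/|A|\rfloor$.

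There is, however, a genuine gap in the per-block bound and the resulting summation. You write that \textbf{Lemma}~\ref{lem : bound_dual_potential} bounds a block by $\tfrac{|A'|}{K}(1+c_{A'})$, but the lemma gives $\tfrac{1}{K}(1+c_{A'})$ for a block in which each active label appears once. After merging, a block has every label $i\in A$ with the \emph{same} multiplicity $m_b$ (this uniformity is essential and is guaranteed by the step-$|A|$ merging), and the correct bound --- obtained from $\sum_j\phi_j(z_j)\le B^*_{\widehat\mu_{\vec z}}$ together with the $1$-homogeneity of $B^*$ (\textit{Remark}~\ref{rem:Homogeneity}) and the argument of \textbf{Lemma}~\ref{lem : bound_dual_potential} --- is $\tfrac{m_b}{K}(1+c_A)$, not $\tfrac{|A|}{K}(1+c_A)$. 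Since $\sum_b m_b=K$ (each of the $K$ copies of label $i$ lands in exactly one block), the block bounds sum to $1+c_A$ directly; the residual full ghost sums $\sum_j\phi_j(\mathghost)\le 0$ are then simply discarded. Your computation $\sum_b \tfrac{|A'_b|}{K}=|A|$ produces $|A|(1+c_A)$, and the proposed mechanism of ``absorbing the difference via the $\phi_j(\mathghost)$ accounting'' cannot work: the ghost terms are already spent filling empty slots inside the blocks, and whatever full sums remain only contribute something $\le 0$ --- they cannot cancel a positive discrepancy of size $(|A|-1)(1+c_A)$. Fix the block bound to $\tfrac{m_b}{K}(1+c_A)$ and drop the $c_{A'}\le c_A$ detour (the paper's blocks always carry the full label set $A$); the rest of your plan then goes through and matches the paper's proof.
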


\begin{proof}
Fix $K$ and $A \in S_K$. Without loss of generality, assume that $A=\{1, \dots, S\}$. We need to show that
\begin{equation}\label{ineq : lower_bound of MOT}
    \sum_{i \in A} \psi_i(x_i) \leq 1 + c_A.
\end{equation}

First, suppose $K$ is divisible by $S$. For each $1 \leq s \leq S$ and $1 \leq j \leq K$, let 
\[ u(s,j):= \begin{cases} (s+j-1 \mod S) & \text{ if } s+j-1  \not = 0 \mod S\\ S & \text{ if }  s+j-1  =0 \mod S. \end{cases}\] Rearranging the sum of the $\psi$'s, it follows that
\begin{align*}
    \sum_{i \in A} \psi_i(x_i) &= \sum_{j=1}^K \sum_{s=1}^S \phi_j(x_s, s) + S \sum_{j=1}^K \phi_j(\mathghost)\\
    &= \sum_{s=1}^S \sum_{j=1}^K \phi_j(x_{u(s,j)}, u(s,j)) + S \sum_{j=1}^K \phi_j(\mathghost).
\end{align*}
Note that for each $1 \leq s \leq S$, $|\{ u(s,j) : 1 \leq j \leq K\} |=\frac{K}{S}$, and hence
\begin{equation*}
    \widehat{\mu}_{\vec{z}} = \sum_{u(s,j)=1}^S \frac{\frac{K}{S}}{K} \delta_{(x_{u(s,j)}, u(s,j))}.
\end{equation*}
Factoring out $\frac{K}{S}$ and applying \eqref{eq : bound_dual_potential}, 
\begin{equation*}
    \sum_{j=1}^K \phi_j(x_{u(s,j)}, u(s,j)) \leq  \frac{K}{S} \big(\frac{1}{K} + \frac{1}{K} c_A \big) = \frac{1}{S} + \frac{1}{S}c_A.
\end{equation*}
Since $\sum_{j=1}^K \phi_j(\mathghost) \leq 0$, it is deduced that 
\begin{align*}
    \sum_{i \in A} \psi_i(x_i) &= \sum_{s=1}^S \sum_{j=1}^K \phi_j(x_{u(s,j)}, u(s,j)) + S \sum_{j=1}^K \phi_j(\mathghost)\\
    & \leq \sum_{s=1}^S \big( \frac{1}{S} + \frac{1}{S}c_A \big)\\
    &=1 + c_A,
\end{align*}
proving the desired inequality in the first case.

Now suppose that $K$ is not divisible by $S$. For each $1 \leq s \leq S$ and each $1 \leq k \leq K$, let
\[ v(s,k):= \begin{cases} (s+k-1 \mod K) & \text{ if } s+k-1  \not =0 \mod K\\ K & \text{ if }  s+k-1  =0 \mod K.\end{cases}\]
Construct a $K \times K$ arrangement in the following way: for each $1 \leq s \leq S$ we set the $v(s,k)$-th element to be $\phi_{v(s,k)}(x_s, s)$, and we set the remaining elements to be empty. We use $\emptyset_j$ to denote an empty element at the $j$-th coordinate. Note that the $k$-th row has $\phi_{v(1,k)}(x_1,1), \dots, \phi_{v(S,k)}(x_S,S)$ as non-empty elements, which are placed from the $v(1,k)$-th coordinate to the $v(S,k)$-th coordinate, while it has $(K-S)$ many empty elements. For example, the $3$-rd row is
\begin{equation*}
    \emptyset_1, \emptyset_2, \phi_3(x_1, 1), \dots, \phi_{S+2}(x_S,S), \emptyset_{S+3}, \dots, \emptyset_K.
\end{equation*}
We split this case into two further subcases.

First, assume that $\lfloor \frac{K}{S} \rfloor=1$. In this case, we have $K(K-S) \leq KS$. For each $1 \leq k \leq K$, collect all the $\phi_j(\mathghost)$'s such that $j \notin A_k:=\{v(1,k), \dots, v(S,k)\}$. Notice that for fixed $j$, the number of $k$'s such that $j \notin A_k$ is exactly $K-S$ since all $\phi_j(x_s, s)$'s are contained in this arrangement and $\lfloor \frac{K}{S} \rfloor=1$. In other words, the total number of $\emptyset_j$ is smaller than the total number of $\phi_j(\mathghost)$. From the above and  an application of \eqref{eq : bound_dual_potential}, we deduce that 
\begin{align*}
    \sum_{i \in A} \psi_i(x_i) &= \sum_{k=1}^K \Big( \sum_{s=1}^{S} \phi_{v(s,k)}(x_s, s) + \sum_{j \notin A_k} \phi_j(\mathghost) \Big) + (2S - K)\sum_{j=1}^K \phi_j(\mathghost)\\
    &\leq \sum_{k=1}^K \big( \frac{1}{K} + \frac{1}{K}c_A \big) \\
    &= 1 + c_A,
\end{align*}
proving the desired inequality in this case.

Finally, assume that $\lfloor \frac{K}{S} \rfloor >1$. Here the idea is to merge $\lfloor \frac{K}{S} \rfloor$-many rows to a single row. We do this in the following way: for each $1\leq s\leq S$, we merge together the $s$-th row, the $(S+s)$-th row, $\dots$, and the $((\lfloor \frac{K}{S} \rfloor -1)S + s)$-th row, to obtain a single row which will be re-indexed by $s$. In the original arrangement, since the $((m-1)S +s)$-th row has $\phi_{v(s, (m-1)S + 1)}(x_1, 1), \dots, \phi_{v(s, mS)}(x_S, S)$ as non-empty elements, the rows that get merged share no common $\phi_j$. We keep the last $(K - \lfloor \frac{K}{S} \rfloor S)$-many rows in the original arrangement the same, and for convenience we let the indices of these rows be unchanged. After this procedure, we obtain $S$-many merged rows and $(K - \lfloor \frac{K}{S} \rfloor S)$-many remaining original rows. Now, it is necessary to count, for every fixed $j$, the total number of empty elements $\emptyset_j$ in this new arrangement. If the number of $\emptyset_j$'s was smaller than or equal to $S$ for all $1 \leq j \leq K$, we would be done since the number of $\phi_j(\mathghost)$ is $S$ for each $j$, whence it would be possible to replace the $\emptyset_j$'s with $\phi_j(\mathghost)$'s. We show that this is indeed the case.

For each merged row, its non-empty elements are
\begin{equation*}
    \phi_{v(s, 1)}(x_1,1), \dots, \phi_{v(s, S)}(x_S,S), \dots, \phi_{v(s, (\lfloor \frac{K}{S} \rfloor-1)S + 1)}(x_1, 1), \dots, \phi_{v(s, \lfloor \frac{K}{S} \rfloor S) }(x_S, S).
\end{equation*}
Observe that for each merged row, the index $j$ of $\emptyset_j$ varies from $v(s, \lfloor \frac{K}{S} \rfloor S + 1)$ to $v(s, K)$. The definition of $v(s, k)$ yields that
\begin{align}
    v(s, \lfloor \frac{K}{S} \rfloor S + 1) &= \lfloor \frac{K}{S} \rfloor S + s \text{ if $1 \leq s \leq K - \lfloor \frac{K}{S} \rfloor S$,} \label{eq : v_starting1}\\
    v(s, \lfloor \frac{K}{S} \rfloor S + 1) &= \lfloor \frac{K}{S} \rfloor S + s - K \text{ if $K - \lfloor \frac{K}{S} \rfloor S + 1 \leq s \leq S$}\label{eq : v_starting2}
\end{align}
and
\begin{align}
    v(s, K) &=K \text{ if $s=1$,} \label{eq : v_ending1}\\
    v(s, K) &= s - 1  \text{ if $2 \leq s \leq S$} \label{eq : v_ending2}.
\end{align} 
To count the total number of $\emptyset_j$'s in the merged rows, let's consider the following sub-cases.
\begin{enumerate}
    \item[(i)] $\lfloor \frac{K}{S} \rfloor S + 1 \leq j \leq K$ : By \eqref{eq : v_starting1}, if $1 \leq s \leq K - \lfloor \frac{K}{S} \rfloor S$, then the $s$-th row has $\emptyset_j$ for $\lfloor \frac{K}{S} \rfloor S + s \leq j \leq K$. Also, by \eqref{eq : v_starting2} and \eqref{eq : v_ending2}, if $K - \lfloor \frac{K}{S} \rfloor S +1 \leq s \leq S$, then no merged row has such $\emptyset_j$. Hence, the number of $\emptyset_j$ is $j - \lfloor \frac{K}{S} \rfloor S$. 
    \item[(ii)] $S \leq j \leq \lfloor \frac{K}{S} \rfloor S$ : It follows from \eqref{eq : v_starting1} and \eqref{eq : v_starting2} that either $v(s, \lfloor \frac{K}{S} \rfloor S + 1) > \lfloor \frac{K}{S} \rfloor S$ or $v(s, \lfloor \frac{K}{S} \rfloor S + 1) < S$. Similarly, it follows from \eqref{eq : v_ending1} and \eqref{eq : v_ending2} that either $v(s, K) > \lfloor \frac{K}{S} \rfloor S$ or $v(s, K) < S$. Since the index $j$ of $\emptyset_j$ of the $s$-th merged row varies from $v(s, \lfloor \frac{K}{S} \rfloor S + 1)$ to $v(s,K)$, the number of $\emptyset_j$ is $0$.
    \item[(iii)] $S - (K - \lfloor \frac{K}{S} \rfloor S) + 1 \leq j \leq S-1$ : By \eqref{eq : v_starting2} and \eqref{eq : v_ending2}, if $S - (K - \lfloor \frac{K}{S} \rfloor S) + 1 \leq j \leq S-1$, then $\emptyset_j$ appears from the $(j+1)$-st merged row to the $S$-th merged row. Hence, the number of $\emptyset_j$ is $S - j$.
    \item[(iv)] $1 \leq j \leq S - (K - \lfloor \frac{K}{S} \rfloor S)$ : Similar to (iii), if $1 \leq j \leq S - (K - \lfloor \frac{K}{S} \rfloor S)$, then $\emptyset_j$ appears from the $(j+1)$-st merged row to the $S$-th merged row. Hence, the number of $\emptyset_j$ is $K - \lfloor \frac{K}{S} \rfloor S$.
\end{enumerate}
To summarize, in the merged rows
\begin{equation}\label{eq : number_merged}
    \text{ the number of } \emptyset_j =\left\{ \begin{array}{ll}
j - \lfloor \frac{K}{S} \rfloor S &\text{ for $\lfloor \frac{K}{S} \rfloor S + 1 \leq j \leq K$},\\
0 &\text{ for $ S \leq j \leq \lfloor \frac{K}{S} \rfloor S$},\\
S -j &\text{ for $S - (K - \lfloor \frac{K}{S} \rfloor S) + 1 \leq j \leq S-1$},\\
K - \lfloor \frac{K}{S} \rfloor S &\text{ for $1 \leq j \leq S - (K - \lfloor \frac{K}{S} \rfloor S)$}.
\end{array} \right.
\end{equation}

Now, it remains to count the total number of $\emptyset_j$ in the last $(K - \lfloor \frac{K}{S} \rfloor S)$-many remaining original rows. In this part, each row has only $S$-many non-empty elements. Recall that we still use the same index $k$ for these remaining rows. Precisely, for $\lfloor \frac{K}{S} \rfloor S + 1 \leq k \leq K$, the $k$-th row has
\begin{equation*}
    \phi_{v(1,k)}(x_1,1), \phi_{v(2, k)}(x_2,2), \dots, \phi_{v(S, k)}(x_S,S).
\end{equation*}
Recall that $A_k := \{v(1,k), \dots, v(S, k)\}$. To count the total number of $\emptyset_j$'s in the original rows, let's consider the following sub-cases.
\begin{enumerate}
    \item[(i)] $\lfloor \frac{K}{S} \rfloor S + 1 \leq j \leq K$. : If $1 \leq j +1 - k \leq S$, by the definition of $v(s,k)$, then $j \in A_k$. In other words, each $k$-th row has $\emptyset_j$ for $k > j$. Hence, the number of $\emptyset_j$ is $K - j$.
    \item[(ii)] $S \leq j \leq \lfloor \frac{K}{S} \rfloor S$ : From the definition of $v(s,k)$ and the range of $k$, we deduce that if $\lfloor \frac{K}{S} \rfloor S + 1 \leq k \leq K$, then $v(1,k) > \lfloor \frac{K}{S} \rfloor S$ and $v(S,k) < S$. In other words, $\emptyset_j$ for $S \leq j \leq \lfloor \frac{K}{S} \rfloor S$ appears in every row. Hence, the number of $\emptyset_j$ is $K - \lfloor \frac{K}{S} \rfloor S$.
    \item[(iii)] $S - (K - \lfloor \frac{K}{S} \rfloor S) + 1 \leq j \leq S-1$ : Since $\lfloor \frac{K}{S} \rfloor S + 1 \leq k \leq K$, if $v(S,k)=S+k - K < j$, then $j \notin A_k$. This yields that if $\lfloor \frac{K}{S} \rfloor S + 1 \leq k \leq K - S + j$, then the $k$-th row has $\emptyset_j$. Hence, the number of $\emptyset_j$ is $K - \lfloor \frac{K}{S} \rfloor S - S + j$.
    \item[(iv)] $1 \leq j \leq S - (K - \lfloor \frac{K}{S} \rfloor S)$ : Since $v(S, \lfloor \frac{K}{S} \rfloor S + 1) = S - (K - \lfloor \frac{K}{S} \rfloor S)$, if $1 \leq j \leq S - (K - \lfloor \frac{K}{S} \rfloor S)$ and $\lfloor \frac{K}{S} \rfloor S + 1 \leq k \leq K$, then  $j \in A_k$. Hence, the number of $\emptyset_j$ is $0$.
\end{enumerate}
To summarize, in the remaining original rows
\begin{equation}\label{eq : number_remaining}
    \text{ the number of } \emptyset_j =\left\{ \begin{array}{ll}
K -j &\text{ for $\lfloor \frac{K}{S} \rfloor S + 1 \leq j \leq K$},\\
K - \lfloor \frac{K}{S} \rfloor S  &\text{ for $ S \leq j \leq \lfloor \frac{K}{S} \rfloor S$},\\
K - \lfloor \frac{K}{S} \rfloor S - S + j &\text{ for $S - (K - \lfloor \frac{K}{S} \rfloor S) + 1 \leq j \leq S-1$},\\
0 &\text{ for $1 \leq j \leq S - (K - \lfloor \frac{K}{S} \rfloor S)$}.
\end{array} \right.
\end{equation}
Combining \eqref{eq : number_merged} with \eqref{eq : number_remaining}, the total number of $\emptyset_j$ is always exactly equal to $K - \lfloor \frac{K}{S} \rfloor S$ which is always less than $S$. This allows us to replace every $\emptyset_j$ with a $\phi_j(\mathghost)$. Accordingly, using $\sum \phi_j(\mathghost) \leq 0$, we deduce that
\begin{align*}
    \sum_{i \in A} \psi_i(x_i) &\leq \sum_{\text{merged rows}} \left( \sum_{v(s,j)} \phi_{v(s,j)}(x_s, s) + \sum_{l \neq v(s,j)}\phi_l(\mathghost) \right)\\
    &\quad+ \sum_{\text{remaining rows}} \left( \sum_{v(s,j)} \phi_{v(s,j)}(x_s, s) + \sum_{l \neq v(s,j)}\phi_l(\mathghost) \right).
\end{align*}
Let's focus on the first summation over merged rows. Notice that there are $\lfloor \frac{K}{S} \rfloor S$ many non-empty elements and the set of arguments of such non-empty elements is $\{ (x_1,1), \dots, (x_S,S) \}$. Thus,
\begin{equation*}
    \widehat{\mu}_{\vec{z}} = \sum_{s=1}^S \frac{\lfloor \frac{K}{S} \rfloor}{K} \delta_{(x_s, s)}.
\end{equation*}
Factoring out $\lfloor \frac{K}{S} \rfloor$ and applying \eqref{eq : bound_dual_potential}, we obtain
\begin{equation*}
   \sum_{v(s,j)} \phi_{v(s,j)}(x_s, s) + \sum_{l \neq v(s,j)}\phi_l(\mathghost) \leq \frac{\lfloor \frac{K}{S} \rfloor}{K} + \frac{\lfloor \frac{K}{S} \rfloor}{K} c_A.
\end{equation*}
On the other hand, for the second summation over remaining rows, there are $S$ many non-empty elements. Thus,
\begin{equation*}
    \widehat{\mu}_{\vec{z}} = \sum_{s=1}^S \frac{1}{K} \delta_{(x_s, s)}.
\end{equation*}
\eqref{eq : bound_dual_potential} immediately implies
\begin{equation*}
    \sum_{v(s,j)} \phi_{v(s,j)}(x_s, s) + \sum_{l \neq v(s,j)}\phi_l(\mathghost) \leq \frac{1}{K} + \frac{1}{K}c_A.
\end{equation*}
Note that the number of merged rows is $S$ and the number of remaining original rows is $K - \lfloor \frac{K}{S} \rfloor S$, respectively. Combining all arguments, we can infer that
\begin{align*}
    \sum_{i \in A} \psi_i(x_i) &\leq \frac{\lfloor \frac{K}{S} \rfloor S}{K} + \frac{\lfloor \frac{K}{S} \rfloor S}{K} c_A + \frac{K - \lfloor \frac{K}{S} \rfloor S}{K} + \frac{K - \lfloor \frac{K}{S} \rfloor S}{K}c_A\\
    &= 1 + c_A,
\end{align*}
obtaining the desired inequality in the last remaining case. 
\end{proof}

In summary, we have proved that for a given $\phi=(\phi_1, \dots, \phi_K) \in \Phi$, its associated $(\psi_1, \dots, \psi_K)$ (which satisfies \eqref{eq:EqualCostDual}) is feasible for \eqref{eq:mot_decomposed_dual}. Consequently, this leads to \begin{equation} \label{eq:DualsIneqs}
    \eqref{eqn:DualMOT} \leq \frac{1}{2 \mu(\Z)} \eqref{eq:mot_decomposed_dual} 
\end{equation}
In turn, by the equivalence between \eqref{eq:mot_decomposed_dual} and \eqref{eq:generalized_barycenter} by \textbf{Proposition} \ref{prop:barycenter_duality}, this automatically implies that
\begin{equation*}
    \eqref{eqn:DualMOT} \leq \frac{1}{2\mu(\Z)} B^*_\mu.
\end{equation*}
Finally, combining with \textbf{Corollary} \ref{prop:MOTNoGap} below (which establishes that under \textbf{Assumption} \ref{assump:CostStructure} there is no duality gap for the MOT problem \eqref{Robust problem:Alternative_form}) we obtain the desired inequality relating the minimum value for the MOT problem and $B_\mu^*$.

\subsection{Returning to the adversarial problem \eqref{Robust problem:Intro}}
\label{sec:MOTProofs}

We begin by establishing that, under \textbf{Assumption} \ref{assump:CostStructure}, the cost $\c$ is lower semi-continuous with respect to a suitable notion of convergence.

\begin{proposition}\label{prop: class power lsc_weakly}
Let $\mathcal{Z}_*=\mathcal{Z} \cup \{\mathghost\}$ on which $\mathghost$ is considered as an isolated point. Let  $\widehat{d}$ be defined according to:
\begin{equation*}
    \widehat{d}(z, z'):= \left\{ \begin{array}{ll}
d(x,x') & \textrm{if $i=i'$,}\\
\infty & \textrm{if $i \neq i'$ or $z=\mathghost$ and $z' \in \mathcal{Z}$(vice-versa),}\\
0 & \textrm{if $z=z'=\mathghost$.}
\end{array} \right.
\end{equation*}
Define $\widehat{d}_K$ on $\mathcal{Z}_*^K$ by
\begin{equation*}
    \widehat{d}_K ( (z_1, \dots, z_K), (z'_1, \dots, z'_K)):= \max_{i \in [K]} \widehat{d}(z_i, z'_i).
\end{equation*}
Recall
\begin{equation*}
    \c(z_1, \dots, z_K):= B^*_{ \widehat{\mu}_{\vec z}}
\end{equation*}
where $\widehat{\mu}_{\vec z}$ is defined as
\begin{equation*}
    \widehat{\mu}_{\vec z} :=  \frac{1}{K} \sum_{l \text{ s.t. } z_l \not = \mathghost }^K \delta_{z_l}.
\end{equation*}
Under \textbf{Assumption} \ref{assump:CostStructure}, $\c$ is lower semi-continuous on $(\mathcal{Z}_*^K, \widehat{d}_K)$.
\end{proposition}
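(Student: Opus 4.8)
The plan is to deduce the lower semicontinuity of $\c$ from the dual characterization of the generalized barycenter value in \textbf{Proposition} \ref{prop:barycenter_duality}, which expresses $B^*_\mu$ as a supremum of functionals that are \emph{continuous} in $\mu$ with respect to weak convergence of measures and whose feasible set does \emph{not} depend on $\mu$. Since $(\Z_*^K,\widehat d_K)$ is metrizable, it suffices to prove sequential lower semicontinuity: given $\vec z^{\,n}\to\vec z$ with $\vec z^{\,n}=(z_1^n,\dots,z_K^n)$ and $\vec z=(z_1,\dots,z_K)$, one must show $\liminf_n \c(\vec z^{\,n})\ge\c(\vec z)$.

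First I would unpack what $\widehat d_K$-convergence means. Because $\mathghost$ is an isolated point lying at infinite $\widehat d$-distance from every point of $\Z$, for each coordinate $i$ the following holds for all large $n$: if $z_i=\mathghost$ then $z_i^n=\mathghost$; and if $z_i=(x_i,\ell_i)\in\Z$ then $z_i^n=(x_i^n,\ell_i)\in\Z$ with the \emph{same} label $\ell_i$ and $d(x_i^n,x_i)\to 0$. In particular, setting $A:=\{l\in[K]:z_l\ne\mathghost\}$, the non-ghost index set $\{l:z_l^n\ne\mathghost\}$ equals $A$ for all large $n$. Consequently $\widehat\mu_{\vec z^{\,n}}=\tfrac1K\sum_{l\in A}\delta_{z_l^n}$ and $\widehat\mu_{\vec z}=\tfrac1K\sum_{l\in A}\delta_{z_l}$, so for every $\varphi\in C_b(\Z)$ we have $\int\varphi\,d\widehat\mu_{\vec z^{\,n}}=\tfrac1K\sum_{l\in A}\varphi(z_l^n)\to\tfrac1K\sum_{l\in A}\varphi(z_l)=\int\varphi\,d\widehat\mu_{\vec z}$ by continuity of $\varphi$. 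Hence $\widehat\mu_{\vec z^{\,n}}\to\widehat\mu_{\vec z}$ weakly in $\mathcal M_+(\Z)$.

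Next I would record that $\mu\mapsto B^*_\mu$ is weakly lower semicontinuous on $\mathcal M_+(\Z)$. By \textbf{Proposition} \ref{prop:barycenter_duality} (this is where \textbf{Assumption} \ref{assump:CostStructure} is used), for any finite positive measure $\mu$ on $\Z$,
\[
B^*_\mu=\sup_{(g_1,\dots,g_K)\in\mathcal G}\ \sum_{i\in[K]}\int_{\X} g_i(x_i)\,d\mu_i(x_i),
\]
where $\mathcal G\subset C_b(\X)^K$ is the $\mu$-independent feasible set of \eqref{eq:mot_decomposed_dual}, i.e.\ all $(g_1,\dots,g_K)$ with $\sum_{i\in A}g_i(x_i)\le 1+c_A(x_1,\dots,x_K)$ for all $A\in S_K$ and $(x_1,\dots,x_K)\in\X^K$. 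For each fixed $(g_1,\dots,g_K)\in\mathcal G$ the functional $\mu\mapsto\sum_i\int_{\X}g_i\,d\mu_i$ is weakly continuous on $\mathcal M_+(\Z)$, since $\int_{\X}g_i\,d\mu_i=\int_{\Z}\Phi_i\,d\mu$ for $\Phi_i(x,j):=g_i(x)\mathds{1}_{\{j=i\}}\in C_b(\Z)$ (here we use that $[K]$ carries the discrete topology). A pointwise supremum of weakly continuous functionals is weakly lower semicontinuous, which gives the claim.

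Combining the two observations, $\widehat\mu_{\vec z^{\,n}}\to\widehat\mu_{\vec z}$ weakly yields
\[
\liminf_{n\to\infty}\c(\vec z^{\,n})=\liminf_{n\to\infty}B^*_{\widehat\mu_{\vec z^{\,n}}}\ \ge\ B^*_{\widehat\mu_{\vec z}}=\c(\vec z),
\]
which is the desired lower semicontinuity; the degenerate case $A=\emptyset$ (so $\widehat\mu_{\vec z}=0$ and $\c(\vec z)=B^*_0=0$) is included, since then $\widehat\mu_{\vec z^{\,n}}=0$ for all large $n$. I expect the only genuinely delicate step to be the first one — pinning down $\widehat d_K$-convergence, namely that the ghost coordinates must stabilize and the labels must eventually agree, so that the empirical measures $\widehat\mu_{\vec z^{\,n}}$ truly converge weakly; everything else is immediate from the already-established duality. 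A more hands-on alternative avoids the dual: one could take optimal configurations $(\lambda^n,\pi_1^n,\dots,\pi_K^n)$ for the generalized barycenter problem with data $\widehat\mu_{\vec z^{\,n}}$, obtain tightness uniformly in $n$ via the argument of \textbf{Proposition} \ref{prop:existence} — uniform because the atoms $x_l^n$ remain in a fixed compact set — extract weakly convergent subsequences, and pass to the limit using lower semicontinuity of the transport cost $c$ and of total mass; the dual route is shorter.
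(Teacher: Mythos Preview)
Your argument is correct, and it takes a genuinely different route from the paper's own proof.

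The paper proceeds on the \emph{primal} side. After the same unpacking of $\widehat d_K$-convergence that you do, it invokes \textbf{Lemma}~\ref{lemma:Localization} to write
\[
\c(\vec z^{\,n})=\min_{\{m_A\}}\ \sum_{A\subseteq\{L+1,\dots,K\}} m_A\bigl(c_A(x^n_{L+1},\dots,x^n_K)+1\bigr),
\]
subject to the linear constraints $\sum_{A\in S_K(i)}m_A=\tfrac1K$. It then shows directly, using the compactness and lower semicontinuity clauses of \textbf{Assumption}~\ref{assump:CostStructure}, that each $c_A$ is lower semicontinuous in $(x_{L+1},\dots,x_K)$; finally it picks near-optimal $m_A^n$, extracts a convergent subsequence via Heine--Borel in $\R^{|S_K|}$, and passes to the limit.

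Your approach instead goes through the \emph{dual}: you observe that $\vec z\mapsto\widehat\mu_{\vec z}$ is continuous into $\mathcal M_+(\Z)$ with the weak topology, and then use \textbf{Proposition}~\ref{prop:barycenter_duality} to write $B^*_\mu$ as a supremum of weakly continuous linear functionals indexed by a $\mu$-independent set $\mathcal G$, which immediately gives weak lower semicontinuity of $\mu\mapsto B^*_\mu$. This is shorter and more conceptual; the price is that it relies on the full strength of \textbf{Proposition}~\ref{prop:barycenter_duality} (no duality gap under \textbf{Assumption}~\ref{assump:CostStructure}), whose proof already absorbed an approximation argument and the compactness hypothesis. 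The paper's direct argument is more self-contained and makes explicit exactly where the coercivity of $c$ enters (namely, in the lower semicontinuity of each $c_A$). Your closing remark about the ``hands-on alternative'' via tightness of $(\lambda^n,\pi_i^n)$ is in the same spirit as the paper's method, though the paper works with the finite-dimensional $m_A$ variables rather than the measure-valued ones. There is no circularity: \textbf{Proposition}~\ref{prop:barycenter_duality} is established in Section~\ref{sec:GenBar} without reference to Section~\ref{sec:ProofMainTheorem}.
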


\begin{remark}
Note that $(\mathcal{Z}_*^K, \widehat{d}_K)$ is still a Polish space.
\end{remark}

\begin{proof}
Suppose $\vec{z}^n=(z^n_1, \dots, z^n_K)$ converges to $\vec{z}=(z_1, \dots, z_K)$ in $(\mathcal{Z}_*^K, \widehat{d}_k)$. Without loss of generality, assume that $z_1, \dots, z_L = \mathghost$ for all $1 \leq L \leq K$. If $L=K$, the claim would be trivial and so we can focus on the case $L<K$. By the definition of $\widehat{d}_K$, without loss of generality we can further assume that $z^n_1, \dots, z^n_L = \mathghost$ for all $n$, and likewise, for each $L+1 \leq j \leq K$, we can assume that $i^n_j = i_j$ for all $n$, for otherwise the convergence would not hold due to the definition of $\widehat{d}_K$.

By \textbf{Lemma} \ref{lemma:Localization} we have
\begin{equation}
\label{eq:AuxCostLowerSemi}
\c(z_1^n, , \dots, z_K^n) = B^*_{\widehat{\mu}_{\vec z^n}} =  \inf_{m:S_K\to \RR} \sum_{A\subseteq \{L+1, \dots, K \}} m_A\big(c_A(x_{L+1}^n, \ldots, x^n_K)+1\big),   
\end{equation}
where the min ranges over all $\{m_A\}_{A \subseteq \{ L+1, \dots, K \} }$ such that $ \sum_{A\in S_K(i) \cap \{L+1, \dots, K \}} m_A= \frac{1}{K}, \quad \forall i =L+1, \dots, K$.

We now claim that for every $A \subseteq \{ L+1, \dots, K \}$,
\[ c_A(x_{L+1}, \dots, x_{K}) \leq \liminf_{n \rightarrow \infty} c_A(x^n_{L+1}, \dots, x^n_{K}).   \]
Indeed, if the right hand side is equal to $+\infty$, then there is nothing to prove. If the right hand side is finite, we may then find a sequence $\{ \tilde x^n \}_{n \in \N}$ such that
\[  \liminf_{n \rightarrow \infty}  \sum_{i\in A} c(\tilde x^n,x^n_i) =\liminf_{n \rightarrow \infty} c_A(x^n_{L+1}, \dots, x^n_{K}) <\infty. \]
By the compactness property in \textbf{Assumption} \ref{assump:CostStructure} it follows that up to subsequence (not relabeled) we have that $\{ \tilde x^n \}_{n \in \N}$ converges toward a point $\tilde x\in \X$. Combining with the lower semi-continuity of $c$, we deduce that
\[ c_A(x_{L+1}, \dots, x_K) \leq \sum_{i \in A} c(\tilde x , x_i) \leq \liminf_{n \rightarrow \infty}c_A(x^n_{L+1}, \dots, x^n_{K}), \]
as we wanted to show.

Returning to \eqref{eq:AuxCostLowerSemi}, we can find for each $n \in \N$ a collection of feasible $\{ m^n_{A} \}_{A \subseteq \{ L+1, \dots, K\}}$ such that
\[   \liminf_{n \rightarrow \infty } \sum_{A\subseteq \{L+1, \dots, K \}} m_A^n\big(c_A(x_{L+1}^n, \ldots, x^n_K)+1\big) =  \liminf_{n \rightarrow \infty } \c(z_1^n , \dots , z_K^n).  \]
Using the Heine-Borel theorem in Euclidean space, we can assume without the loss of generality that for every $A$, $m^n_A$ converges to some $m_A$ as $n \rightarrow \infty$. The resulting collection of $m_A$ is feasible for the problem defining $\c(z_1, \dots, z_K)$ and thus, using the lower semicontinuity of $c_A$ established earlier, we deduce:
\[ \c(z_1, \dots, z_K) \leq    \sum_{A\subseteq \{L+1, \dots, K \}} m_A\big(c_A(x_{L+1}^n, \ldots, x^n_K)+1\big) \leq \liminf_{n \rightarrow \infty } \c(z_1^n , \dots , z_K^n). \]
\end{proof}

\begin{corollary}(Duality of MOT)
\label{prop:MOTNoGap}
Under \textbf{Assumption} \ref{assump:CostStructure},
\begin{align*}
    &\inf_{\pi \in \Pi_K(\mu)} \int_{\mathcal{\mathcal{Z}_*}^K} \c(z_1, \dots, z_K) d\pi(z_1, \dots, z_K)\\
    = &\sup_{\phi \in \Phi} \left\{ \sum_{j=1}^K \int_{\mathcal{X} \times [K]} \phi_j(z_j) \frac{1}{2 \mu(\mathcal{Z})}d\mu(z_j) +  \frac{1}{2} \sum_{j=1}^K \phi_j(\mathghost) \right\}.
\end{align*}
Furthermore, a minimizer $\pi^*$ exists, hence the infimum is indeed the minimum.
\end{corollary}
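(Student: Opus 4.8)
The plan is to obtain this corollary from the standard multimarginal Kantorovich duality theorem for bounded lower semi-continuous costs on Polish spaces, together with the direct method of the calculus of variations for the existence claim. Two preliminary remarks supply the needed hypotheses. First, $\c$ is bounded: choosing $\widetilde{\mu}_i = \mu_i$ and $\lambda = \sum_{i \in [K]} \mu_i$ in \eqref{eq:generalized_barycenter} (and recalling $c(x,x)=0$) shows $B^*_\nu \le \nu(\mathcal{Z})$ for every finite positive measure $\nu$ on $\mathcal{Z}$, so $0 \le \c(\vec{z}) = B^*_{\widehat{\mu}_{\vec{z}}} \le \widehat{\mu}_{\vec{z}}(\mathcal{Z}) \le 1$ for all $\vec{z} \in \mathcal{Z}_*^K$; in particular $\c$ is finite everywhere. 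Second, by \textbf{Proposition}~\ref{prop: class power lsc_weakly} the cost $\c$ is lower semi-continuous on the Polish space $(\mathcal{Z}_*^K, \widehat{d}_K)$, while the common marginal $\rho := \frac{1}{2\mu(\mathcal{Z})}\mu(\cdot \cap \mathcal{Z}) + \frac{1}{2}\delta_{\mathghost}$ is a Borel probability measure on the Polish space $(\mathcal{Z}_*, \widehat{d})$.

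For the existence of a minimizer I would argue as follows. The set $\Pi_K(\mu)$ is nonempty, since it contains the product measure $\rho^{\otimes K}$, and it is weakly closed because the marginal constraints pass to weak limits (the coordinate projections being continuous). It is also tight: given $\varepsilon > 0$, Ulam's theorem furnishes a compact $L \subseteq \mathcal{Z}_*$ with $\rho(\mathcal{Z}_* \setminus L) < \varepsilon/K$, and then every $\pi \in \Pi_K(\mu)$ gives mass at most $\varepsilon$ to the complement of the compact set $L^K$. Hence $\Pi_K(\mu)$ is weakly compact by Prokhorov's theorem. Since $\c$ is lower semi-continuous and bounded below it is the pointwise increasing supremum of bounded continuous functions, so $\pi \mapsto \int \c\, d\pi$ is weakly lower semi-continuous, and the direct method yields a minimizer $\pi^*$; thus the infimum is attained.

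For the absence of a duality gap I would either invoke the standard multimarginal Kantorovich duality for bounded lower semi-continuous costs (see the MOT references cited in the introduction), or run the usual approximation argument. Write $\c = \sup_m \c_m$ with $\c_m(\vec{z}) := \inf_{\vec{w} \in \mathcal{Z}_*^K}\{\c(\vec{w}) + m\,\widehat{d}_K(\vec{z}, \vec{w})\}$; the $\c_m$ are $[0,1]$-valued, Lipschitz on $(\mathcal{Z}_*^K, \widehat{d}_K)$, and increase pointwise to $\c$ by lower semi-continuity of $\c$. For each $m$ the cost $\c_m$ is bounded and continuous, so there is no duality gap for it; moreover any tuple $(\phi_1, \dots, \phi_K)$ of continuous bounded potentials feasible for $\c_m$ is feasible for $\c$ as well, since $\sum_j \phi_j(z_j) \le \c_m(\vec{z}) \le \c(\vec{z})$ for every $\vec{z}$, so it lies in $\Phi$. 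Therefore the dual value of \eqref{eqn:DualMOT} is at least $\sup_m$ of the primal values of the MOT problems with cost $\c_m$. Using the weak compactness of $\Pi_K(\mu)$ and monotone convergence one checks that these truncated primal values increase to $\inf_{\pi \in \Pi_K(\mu)} \int \c\, d\pi$: pick minimizers $\pi_m$, extract a weak limit $\pi_\infty \in \Pi_K(\mu)$, estimate $\int \c_j\, d\pi_\infty \le \liminf_m \int \c_m\, d\pi_m$ for each fixed $j$, and let $j \to \infty$. Combined with weak duality (which always gives $\eqref{eqn:DualMOT} \le \inf_{\pi \in \Pi_K(\mu)} \int \c\, d\pi$), this yields equality.

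The work here is mostly bookkeeping; the one substantive ingredient is \textbf{Proposition}~\ref{prop: class power lsc_weakly}, already in hand, and the only place requiring mild care is the approximation step --- keeping the approximants bounded, transferring dual feasibility from $\c_m$ to $\c$, and verifying that the truncated primal values converge. Finally, this corollary combines with the two inequalities from the preceding subsections, namely $\frac{1}{2\mu(\mathcal{Z})}B^*_\mu \le \inf_{\pi \in \Pi_K(\mu)} \int \c\, d\pi$ and $\eqref{eqn:DualMOT} \le \frac{1}{2\mu(\mathcal{Z})}B^*_\mu$, to close the chain and give $\min_{\pi \in \Pi_K(\mu)} \int \c\, d\pi = \frac{1}{2\mu(\mathcal{Z})}B^*_\mu$.
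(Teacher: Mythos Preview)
Your proposal is correct and follows essentially the same route as the paper: both arguments rest on \textbf{Proposition}~\ref{prop: class power lsc_weakly} for the lower semi-continuity of $\c$ on the Polish space $(\mathcal{Z}_*^K,\widehat d_K)$, and then invoke standard Kantorovich duality and compactness of $\Pi_K(\mu)$. The paper simply cites Theorem~1.3 in Villani's \emph{Topics in Optimal Transportation} (generalized to the multimarginal setting) and dispatches existence in one line, whereas you spell out the boundedness of $\c$, the tightness/Prokhorov argument for compactness, and a Moreau--Yosida approximation sketch for the duality; these are exactly the ingredients behind the cited theorem, so the difference is one of detail rather than strategy.
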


\begin{proof}
From \textbf{Proposition} \ref{prop: class power lsc_weakly} it follows that the cost function $\c(z_1, \dots, z_K)$ is lower semi-continuous on ($\Z_{*}^K, \widehat{d}_K)$, which is a Polish space. Applying \textbf{Theorem} $1.3$ in \cite{MR1964483}, which is stated for the usual optimal transport, but that can be generalized to the MOT setting, we obtain the desired duality. The existence of a minimizer $\pi^*$ follows from the lower semi-continuity of $\c(z_1, \dots, z_K)$ and the compactness of $\Pi_K(\mu)$.
\end{proof}

\begin{corollary}\label{cor : robust=dual}
Under \textbf{Assumption} \ref{assump:CostStructure}, \eqref{def:RobustClassifPower}=\eqref{eqn:DualAdversarial}.
\end{corollary}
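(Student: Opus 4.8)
The plan is to prove the two inequalities $\eqref{def:RobustClassifPower}\le\eqref{eqn:DualAdversarial}$ and $\eqref{def:RobustClassifPower}\ge\eqref{eqn:DualAdversarial}$ separately. The first is nothing but weak duality — interchanging the outer $\inf_{\widetilde\mu}$ with the inner $\sup_f$ can only increase the value — and it was already recorded right after \eqref{eqn:DualAdversarial}. Hence the whole content is the reverse inequality $\widetilde{B}^*_\mu\ge B^*_\mu$, which I would obtain by feeding a near-optimal dual potential of the generalized barycenter problem back into \eqref{def:RobustClassifPower} in the role of a classifier.

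Recall that $\eqref{eqn:DualAdversarial}=\eqref{eq:generalized_barycenter}$ has value $B^*_\mu$, and that by \textbf{Proposition} \ref{prop:barycenter_duality} this equals the value of \eqref{eq:barycenter_dual}. Fix $\delta>0$ and choose $f_1,\dots,f_K\in C_b(\mathcal{X})$ with $f_i\ge0$, $\sum_{i\in[K]}f_i\le1$, and $\sum_{i\in[K]}\int_{\mathcal{X}}f_i^c\,d\mu_i>B^*_\mu-\delta$. Replacing $f_1$ by $f_1+\big(1-\sum_j f_j\big)$ keeps all $f_i$ continuous, bounded and nonnegative, turns the constraint into $\sum_{i\in[K]}f_i=1$, and — since $c\ge0$ and $g\mapsto g^c$ is order preserving — does not decrease $\sum_{i\in[K]}\int_{\mathcal{X}}f_i^c\,d\mu_i$. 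The normalized tuple $f=(f_1,\dots,f_K)$ is then an element of $\mathcal F$ (continuous maps into $\Delta_{[K]}$ are Borel). The key, elementary, estimate is that for this $f$ and any $\widetilde\mu\in\mathcal{M}_+(\mathcal{Z})$,
\[
B(f,\widetilde\mu)+C(\mu,\widetilde\mu)\ \ge\ \sum_{i\in[K]}\int_{\mathcal{X}}f_i^c\,d\mu_i;
\]
indeed this decomposes over $i$, is trivial when $\Gamma(\mu_i,\widetilde\mu_i)=\emptyset$ (then $C(\mu_i,\widetilde\mu_i)=\infty$), and otherwise, taking $\pi_i\in\Gamma(\mu_i,\widetilde\mu_i)$ realizing $C(\mu_i,\widetilde\mu_i)$,
\[
\int_{\mathcal{X}}f_i\,d\widetilde\mu_i+C(\mu_i,\widetilde\mu_i)=\int_{\mathcal{X}\times\mathcal{X}}\big(f_i(\tilde x)+c(x,\tilde x)\big)\,d\pi_i(x,\tilde x)\ \ge\ \int_{\mathcal{X}}f_i^c(x)\,d\mu_i(x),
\]
where one uses the defining property of the $c$-transform together with the fact that $\pi_i$ has first marginal $\mu_i$. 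Taking the infimum over $\widetilde\mu$ gives $\widetilde{B}^*_\mu\ge\sum_{i\in[K]}\int_{\mathcal{X}}f_i^c\,d\mu_i>B^*_\mu-\delta$, and letting $\delta\to0$ yields $\widetilde{B}^*_\mu\ge B^*_\mu$. Combined with weak duality, $\eqref{def:RobustClassifPower}=\eqref{eqn:DualAdversarial}$.

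I do not anticipate a genuine obstacle. The only direction of the identity $\inf_{\widetilde\mu_i}\{\int_{\mathcal{X}}f_i\,d\widetilde\mu_i+C(\mu_i,\widetilde\mu_i)\}=\int_{\mathcal{X}}f_i^c\,d\mu_i$ that the argument uses is the trivial ``$\ge$''; the reverse inequality, which would require a measurable selection of near-optimal transport rays, is not needed, being already subsumed in \textbf{Proposition} \ref{prop:barycenter_duality}. The one point requiring a little care is that the supremum in \eqref{eq:barycenter_dual} need not be attained (see the remark after \textbf{Proposition} \ref{prop:barycenter_duality}), which is why one argues with a near-optimal $f$ and lets the optimality gap tend to zero rather than with an exact maximizer. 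An alternative route to the same conclusion is to push a solution pair $(\pi^*,\phi^*)$ of the MOT problem \eqref{Robust problem:Alternative_form} and its dual \eqref{eqn:DualMOT} through the construction in \textbf{Proposition} \ref{Prop:LowerBound}, obtaining a feasible barycenter-dual potential, and then to read that potential as a classifier in \eqref{def:RobustClassifPower}; this is essentially the route that also produces the explicit saddle point for the original adversarial problem \eqref{Robust problem:Intro}, but the direct argument above is shorter given the duality results already in hand.
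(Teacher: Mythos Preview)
Your proof is correct and takes a more direct route than the paper's. The paper argues via the MOT problem: the upper bound of section~\ref{sec:UpperBound}, the inequality \eqref{eq:DualsIneqs} (coming from Proposition~\ref{Prop:LowerBound}), and Corollary~\ref{prop:MOTNoGap} together sandwich the MOT value and yield $\eqref{eq:mot_decomposed_dual}=B^*_\mu$, after which the paper cites Proposition~\ref{prop:Duals} for $\eqref{eq:mot_decomposed_dual}=\eqref{eq:barycenter_dual}=\eqref{def:RobustClassifPower}$. You instead bypass the MOT machinery entirely, relying only on Proposition~\ref{prop:barycenter_duality}: given a near-optimal $C_b$ potential for \eqref{eq:barycenter_dual}, you normalize it to an element of $\mathcal{F}$ and verify directly, via the elementary pointwise inequality $f_i(\tilde x)+c(x,\tilde x)\ge f_i^c(x)$, that its value in \eqref{def:RobustClassifPower} is at least $\sum_i\int f_i^c\,d\mu_i$. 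This supplies precisely the step $\eqref{eq:barycenter_dual}\le\eqref{def:RobustClassifPower}$ that the paper's invocation of Proposition~\ref{prop:Duals} leaves implicit --- that proposition only identifies \eqref{eq:barycenter_dual}, \eqref{eq:mot_decomposed_dual}, \eqref{eq:mot_dual} with one another, not with \eqref{def:RobustClassifPower}. Your route is the cleaner self-contained proof of this corollary; the paper's route is natural in its context because the MOT equivalence is Theorem~\ref{thm:Main} and the sandwich argument there does double duty.
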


\begin{proof}
By the upper bound from section \ref{sec:UpperBound} we have 
\[\frac{1}{2\mu(\Z)}B^*_{\mu}  \leq \min_{\pi \in \Pi_K(\mu)} \int \c (z_1, \dots, z_K) d\pi (z_1, \dots, z_K).\]
On the other hand, from \eqref{eq:DualsIneqs} and \textbf{Corollary} \ref{prop:MOTNoGap} we have 
\[ \min_{\pi \in \Pi_K(\mu)} \int \c (z_1, \dots, z_K) d\pi (z_1, \dots, z_K) = \eqref{eqn:DualMOT}\leq \frac{1}{2 \mu(\Z)} \eqref{eq:mot_decomposed_dual} \leq \frac{1}{2\mu(\Z)} B_{\mu}^*.   \]
Combining these two inequalities we conclude that all the above terms must be equal. In particular, $\eqref{eq:mot_decomposed_dual} = B_\mu^*$. Finally, by \textbf{Proposition} \ref{prop:Duals} we know that $\eqref{eq:mot_decomposed_dual}= \eqref{eq:barycenter_dual}=\eqref{def:RobustClassifPower}$. In particular, $\eqref{eqn:DualAdversarial}= B_\mu^* = \eqref{def:RobustClassifPower}$.
% Given that \eqref{eqn:DualAdversarial} was already known to be bigger than or equal to $\eqref{def:RobustClassifPower}$, the conclusion follows. 
\end{proof}

\begin{corollary}[Correction of Corollary 33 of \cite{trillos2023multimarginal}]
\label{cor:OptimalPair}
Suppose that \textbf{Assumption} \ref{assump:CostStructure} holds and that $(\pi^*, \phi^*)$ is a solution pair for the MOT problem and its dual. Define $f^*$ and $\widetilde{\mu}^*$ according to:
\begin{equation}\label{eq : correct form robust classifier}
    f_i^*(\widetilde{x}):= \max  \left\{ \sup_{x \in \spt(\mu_i)} 
    \left\{ \sum_{j=1}^K \phi^*_j(x,i) + \sum_{j=1}^K \phi^*_j(\mathghost) -c(x, \widetilde{x}) \right\}, 0 \right\}
\end{equation}
and for any test function $h$ on $\mathcal{X}$,
\begin{equation*}
    \int_{\mathcal{X}} h(\widetilde{x}) d\widetilde{\mu}^*_i(\widetilde{x}):= \int_{\mathcal{Z}_*^K} \left\{ \int_{\mathcal{X}} h(\widetilde{x}) d\widetilde{\mu}^*_{\vec{z}, i}(\widetilde{x}) \right\} d\pi^*(\vec{z}),
\end{equation*}
where $\widetilde{\mu}^*_{\vec{z},i}$ is the $i$-th marginal of $\widetilde{\mu}^*_{\vec{z}}$, an optimal adversarial attack which achieves $\c (z_1, \dots, z_K)$ given $\vec{z}=(z_1, \dots, z_K)$. Suppose $f^*$ is measurable. Then $(f^*, \widetilde{\mu}^*)$ is a saddle for problem \eqref{Robust problem:Intro}.
\end{corollary}

\begin{remark}
Here, we do not claim that $f^*$ is in general measurable. However, if either $c$ is continuous or $\mu$ is an empirical measure with a finite support, then $f^*$ can be shown to be measurable. See \textbf{Remark} 5.5 and \textbf{Remark} 5.11 in \cite{MR2459454}.

Notice that the supremum in the definition of $f_i^*$ is only taken over $\spt(\mu_i)$. 
\end{remark}

\begin{remark}
In \cite{trillos2023multimarginal}, we define a robust classifier $f_i$ as
\begin{equation}\label{eq : previous form robust classifier}
    f_i(\widetilde{x}):=\sup_{x \in \spt(\mu_i)} \left\{ \max \left\{ \sum_{j=1}^K \phi^*_j(x,i) + \sum_{j=1}^K \phi^*_j(\mathghost), 0 \right\} -c(x, \widetilde{x}) \right\},
\end{equation}
which is what appears inside the outer max in the definition of $f_i^*$ in \eqref{eq : correct form robust classifier}. This small correction guarantees that $f_i^*$ is non-negative, which was not guaranteed in the original definition. As we see below, taking the max does not affect the fact that $\sum_{i \in [K]} f_i^*(\tilde x) \leq 1$.  

% However, \eqref{eq : previous form robust classifier} contains a mistake so that we correct it in the above way: in fact this attempt fails in general in a sense that $f_i$ can be $-\infty$. For example, if $\spt(\mu_i)$ is bounded and $\widetilde{x}$ is away from $\spt(\mu_i)$ more than a distance $\varepsilon$, then $f_i(\widetilde{x})=-\infty$. As a result, \eqref{eq : previous form robust classifier} cannot provide a feasible $f$ for \eqref{eq:barycenter_dual} generally. \eqref{eq : correct form robust classifier} corrects this issue. At the same time, we emphasize that every result that we obtain in \cite{trillos2023multimarginal} still holds with this new definition.

% In fact, \eqref{eq : previous form robust classifier} can be interpreted naturally: if $f_i(\widetilde{x}) = -\infty$, then if the adversary is rational, it is obvious that $\widetilde{\mu}_i$ has no mass on $\widetilde{x}$. On the other hand, since $\widetilde{\mu}_i$ has no mass on $\widetilde{x}$, it does not matter what value $f_i$ takes at $\widetilde{x}$. In words, if \eqref{eq : previous form robust classifier} outputs $-\infty$ at some point $\widetilde{x}$, it means that point is negligible, or cannot be attacked by the adversary provided given $\mu$ and $c(\cdot, \cdot)$. Thus, if the adversary behaves optimally everytime, \eqref{eq : previous form robust classifier} still makes sense and it can establish a saddle point. 
\end{remark}

\begin{proof}
First, we will verify that $f^*$ is feasible for \eqref{eq:barycenter_dual}. Since $f^*_i \geq 0$ by construction, it is sufficient to show that
\begin{equation}\label{eq: feasbility of f^*}
    \sum_{i \in [K]} f^*_i(\widetilde{x}) \leq 1
\end{equation}
for all $\widetilde{x} \in \mathcal{X}$.

By \textbf{Proposition} \ref{Prop:LowerBound}, $\psi=(\psi_1, \dots, \psi_K)$ is feasible for \eqref{eq:mot_decomposed_dual}, where
\begin{equation*}
    \psi_i(x) = \sum_{j=1}^K \phi^*_j(x,i) + \sum_{j=1}^K \phi^*_j(\mathghost).
\end{equation*}
Fix $\widetilde{x}$ and let $A= \{i \in [K] : f^*_i(\widetilde{x}) > 0 \}$. Then
\begin{align*}
    \sum_{i \in [K]} f^*_i(\widetilde{x}) &= \sum_{i \in A } f^*_i(\widetilde{x}) = \sum_{i \in A }  \sup_{x_i \in \spt(\mu_i)} 
    \left\{ \psi_i(x_i) - c(x_i, \widetilde{x}) \right\}.
\end{align*}
Recalling \eqref{eq:cost_A},
\begin{equation*}
    \sum_{i \in A } c(x_i, \widetilde{x}) \geq c_A(x_i : i \in A).
\end{equation*}
Combined with the fact that $\sum_{i \in A } \psi_i(x_i) \leq 1 + c_A(x_i : i \in A)$, it follows that
\begin{equation*}
    \sum_{i \in [K]} f^*_i(\widetilde{x}) \leq \sup_{x_i \in \spt(\mu_i) : i \in A } \left\{ \sum_{i \in A } \psi_i(x_i) - c_A(x_i : i \in A) \right\} \leq 1,
\end{equation*}
which verifies that $f^*$ is feasible for \eqref{eq:barycenter_dual} under the assumption that it is measurable.

Now, we will show that $(f^*, \widetilde{\mu}^*)$ is a saddle point for problem \eqref{eqn:DualAdversarial}. More explicitly, we show that for any $f \in \mathcal{F}$ and for any $\widetilde{\mu}$,
\begin{equation}\label{eq : saddle_point_classification_power}
   B(f, \widetilde{\mu}^*) + C(\mu, \widetilde{\mu}^*) \leq B(f^*, \widetilde{\mu}^*) + C(\mu, \widetilde{\mu}^*) \leq B(f^*, \widetilde{\mu}) + C(\mu, \widetilde{\mu}).
\end{equation}
% Then, the conclusion follows from \textbf{Corollary} \ref{cor : robust=dual}. 

% For any $\widetilde{\mu}$, it holds that
% \begin{equation*}
%     \int_{\mathcal{X}} h(\widetilde{x}) d\widetilde{\mu}_i(\widetilde{x}) = \sum_{A \in S_K(i)} \int_{\mathcal{X}} h(\widetilde{x}) d\lambda_A(\widetilde{x})=\sum_{A \in S_K(i)} \int_{\mathcal{X}^K} h(T_A(x_1, \dots, x_K)) d\pi_A(x_1, \dots, x_K).
% \end{equation*}

First we compute $  B(f^*, \widetilde{\mu}^*) + C(\mu, \widetilde{\mu}^*)$. Notice that
\begin{align*}
    B(f^*, \widetilde{\mu}^*) + C(\mu, \widetilde{\mu}^*) &= \sum_{i=1}^K \int_{\mathcal{X}} f^*_i(\widetilde{x}_i) d\widetilde{\mu}^*_i(\widetilde{x}_i) + \sum_{i=1}^K C(\mu_i, \widetilde{\mu}^*_i)\\
    &=\sum_{A \in S_K} \sum_{i \in A} \left\{  \int_{\mathcal{X}} f^*_i(\widetilde{x}_i) d\lambda^*_A(\widetilde{x}_i) + C(\mu_{i,A}, \lambda^*_A) \right\}\\
    &= \sum_{A \in S_K} \left\{ \int_{\mathcal{X}^K} \big( \sum_{i \in A}f^*_i(T_A(\vec{x}))  + c_A(\vec{x}) \big) d\pi^*_A(\vec{x}) \right\}\\
    &\leq \sum_{A \in S_K} \left\{ \int_{\mathcal{X}^K} \big( 1  + c_A(\vec{x}) \big) d\pi^*_A(\vec{x}) \right\},
\end{align*}
where $\lambda^*_A$ and $\pi^*_A$ correspond to $\widetilde{\mu}^*$. The last inequality follows from \eqref{eq: feasbility of f^*}. Hence,
\begin{align*}
    B(f^*, \widetilde{\mu}^*) + C(\mu, \widetilde{\mu}^*) &\leq \sum_{A \in S_K}  \int_{\mathcal{X}^K} \big( 1 + c_A(\vec{x}) \big) d\pi^*_A(\vec{x})\\
    &= \int_{\mathcal{Z}_*^K} \c(z_1, \dots, z_K) d\pi^*(z_1, \dots, z_K)\\
    &= B^*_{\mu}.
\end{align*}
On the other hand, the definition of $f^*_i$ implies that for any $x_i$ in the support of $\mu_i$ we have
\begin{equation}\label{eq: lowerbound_f^*}
    f^*_i(\widetilde{x}_i) \geq \sum_{j=1}^K \phi^*_j(x_i,i) + \sum_{j=1}^K \phi^*_j(\mathghost) - c(x_i, \widetilde{x}_i).
\end{equation}
Using $\sum_{A \in S_K(i)} \mu_{i,A} = \mu_i$ and \eqref{eq: lowerbound_f^*}, the optimality of $\phi^*$ implies that
\begin{align*}
    B(f^*, \widetilde{\mu}^*) + C(\mu, \widetilde{\mu}^*) &= \sum_{A \in S_K} \sum_{i \in A} \left\{  \int_{\mathcal{X} \times \mathcal{X}} \big( f^*_i(\widetilde{x}_i) + c( x_i, \widetilde{x}_i) \big) d\pi^*_i(x_i,\widetilde{x}_i) \right\}\\
    &\geq \sum_{A \in S_K} \sum_{i \in A} \left\{  \int_{\mathcal{X} \times \mathcal{X}} \big( \sum_{j=1}^K \phi^*_j(x_i,i) + \sum_{j=1}^K \phi^*_j(\mathghost)  \big) d\pi^*_i( x_i, \widetilde{x}_i) \right\}\\
    &= \sum_{A \in S_K} \sum_{i \in A} \left\{  \int_{\mathcal{X}} \big( \sum_{j=1}^K \phi^*_j(x_i,i) + \sum_{j=1}^K \phi^*_j(\mathghost)  \big) d\mu_{i,A}(x_i) \right\}\\
    &= \sum_{j=1}^K \int_{\mathcal{Z}} \phi_j(z_j) d\mu(z_j) + \mu(\mathcal{Z})\sum_{j=1}^K \phi_j(\mathghost)\\
    &= B^*_{\mu}.
\end{align*}
From the above we infer that
\begin{equation*}
    B(f^*, \widetilde{\mu}^*) + C(\mu, \widetilde{\mu}^*) = B^*_{\mu}.
\end{equation*}

Now we can prove \eqref{eq : saddle_point_classification_power}. The first inequality of \eqref{eq : saddle_point_classification_power} is straightforward, since the definition of $B_{\mu}^*$ in \eqref{eq:generalized_barycenter} and the optimality of $\widetilde{\mu}^*$ imply that
\begin{align*}
    B(f, \widetilde{\mu}^*) + C(\mu, \widetilde{\mu}^*) &\leq \sup_{f \in \mathcal{F}} \left\{ B(f, \widetilde{\mu}^*) + C(\mu, \widetilde{\mu}^*) \right\} = B_{\mu}^* = B(f^*, \widetilde{\mu}^*) + C(\mu, \widetilde{\mu}^*).
\end{align*}

For the second inequality of \eqref{eq : saddle_point_classification_power}, let arbitrary $\widetilde{\mu}$ be fixed and $\pi_i \in \Gamma( \mu_i, \widetilde{\mu}_i)$ be an optimal coupling for each $i \in [K]$. Then,
\begin{align*}
     B(f^*, \widetilde{\mu}) + C(\mu, \widetilde{\mu}) &= \sum_{i \in [K]} \int_{\mathcal{X}} f^*_i(\widetilde{x}) d\widetilde{\mu}_i(\widetilde{x}) + \sum_{i \in [K]} C( \mu_i,\widetilde{\mu}_i)\\
    &= \sum_{i \in [K]} \int_{\mathcal{X} \times \mathcal{X}} \left( f^*_i(\widetilde{x}) + c(x, \widetilde{x}) \right) d\pi_i(x, \widetilde{x}).
\end{align*}
Applying \eqref{eq: lowerbound_f^*} yields that
\begin{align*}
    B(f^*, \widetilde{\mu}) + C(\mu, \widetilde{\mu}) &\geq \sum_{i \in [K]} \int_{\mathcal{X} \times \mathcal{X}} \left( \sum_{j=1}^K \phi^*_j(x,i) + \sum_{j=1}^K \phi^*_j(\mathghost) \right) d\pi_i(x, \widetilde{x})\\
    &= \sum_{i \in [K]} \int_{\mathcal{X} \times \mathcal{X}} \left( \sum_{j=1}^K \phi^*_j(x,i) + \sum_{j=1}^K \phi^*_j(\mathghost) \right) d\mu_i(x)\\
    &= B_{\mu}^*\\
    &= B(f^*, \widetilde{\mu}^*) + C(\mu, \widetilde{\mu}^*).
\end{align*}

Therefore, $(f^*, \widetilde{\mu}^*)$ is a saddle point for \eqref{eqn:DualAdversarial}, hence for \eqref{def:RobustClassifPower} and \eqref{Robust problem:Intro} also. 
\end{proof}

\begin{remark}
Many recent papers have tried to analyze adversarial learning from a game-theoretic perspective \cite{bose2020adversarial, Meunier2021MixedNE, pydi2021the}. This approach is natural: the learner aims at maximizing the classification power $B^*_{\mu}$ while the adversary aims at maximizing the loss $R^*_{\mu}$(hence to minimize $B^*_{\mu}$): this is a standard zero-sum game. Our main results thus provide a way to build Nash equilibria for the adversarial problem using a series of equivalent formulations taking the form of generalized barycenter problems or MOTs.
\end{remark}

\begin{corollary}
\label{cor:Permutations}
Let $\pi^*$ be a solution of the MOT problem \eqref{Robust problem:Alternative_form} and let $F: \Z_*^K \rightarrow \Z_*^K$ be defined according to
\[ F(z_1, \dots, z_K) = (z_{\sigma(1)}, \dots, z_{\sigma(K)} ), \]
for $\sigma : [K] \rightarrow [K] $ a permutation. Then any convex combination of $F_{\sharp} \pi^* $ and $\pi^*$ is also a solution. 
\end{corollary}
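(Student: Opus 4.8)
The plan is to exploit two symmetries — the permutation invariance of the cost $\c$ and the symmetry of the constraint set $\Pi_K(\mu)$ — together with the linearity of the MOT objective and the convexity of $\Pi_K(\mu)$.

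First I would record that $\c$ is invariant under permutations of its arguments, i.e. $\c \circ F = \c$. This is immediate from the definition \eqref{eq:CostFunction}: $\c(z_1,\dots,z_K) = B^*_{\widehat{\mu}_{\vec z}}$, and the measure $\widehat{\mu}_{\vec z} = \frac1K\sum_{l\,:\,z_l \neq \mathghost}\delta_{z_l}$ depends only on the unordered collection of the $z_l$, so $\widehat{\mu}_{F(\vec z)} = \widehat{\mu}_{\vec z}$ and hence $\c(F(\vec z)) = \c(\vec z)$.

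Next I would check that $F_\sharp$ maps $\Pi_K(\mu)$ into itself. The map $F$ is continuous (hence Borel) on $(\Z_*^K,\widehat{d}_K)$, so the pushforward is well defined. If $\pi \in \Pi_K(\mu)$, then for each $i \in [K]$ the $i$-th marginal of $F_\sharp\pi$ is $P_{i\sharp}F_\sharp\pi = (P_i\circ F)_\sharp\pi = P_{\sigma(i)\sharp}\pi = \frac{1}{2\mu(\Z)}\mu(\,\cdot\,\cap\Z) + \frac12\delta_{\mathghost}$, using that all $K$ marginals of $\pi$ coincide (see \eqref{eqn:CouplingsMOT}); thus $F_\sharp\pi \in \Pi_K(\mu)$. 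Moreover, by change of variables and the invariance of $\c$,
\[
\int \c \, d(F_\sharp\pi) \;=\; \int (\c\circ F)\, d\pi \;=\; \int \c \, d\pi .
\]
Applying this with $\pi = \pi^*$ shows that $F_\sharp\pi^*$ is feasible for \eqref{Robust problem:Alternative_form} and attains the same value as $\pi^*$, which is the optimal value; hence $F_\sharp\pi^*$ is itself a solution. (Here I invoke \textbf{Corollary} \ref{prop:MOTNoGap} so that ``solution'' refers to an actual minimizer.)

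Finally, $\Pi_K(\mu)$ is convex, being cut out by affine marginal constraints, and $\pi \mapsto \int \c\, d\pi$ is linear, so the set of minimizers is convex. Since $\pi^*$ and $F_\sharp\pi^*$ are both minimizers, so is $t\pi^* + (1-t)F_\sharp\pi^*$ for every $t\in[0,1]$, which is exactly the claim. I do not expect a genuine obstacle here; the only point that deserves a moment's care is the verification $\c\circ F = \c$, i.e. that the cost sees only the unordered multiset $\{z_1,\dots,z_K\}$ — this is where the specific structure of $\c$ as a local generalized barycenter value is used.
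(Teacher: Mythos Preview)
Your proposal is correct and follows exactly the same approach as the paper, which simply records that the claim ``follows immediately from the fact that the cost function $\c$ is invariant under permutations and the fact that all marginals of $\pi^*$ are the same.'' You have merely fleshed out the details behind that one-line argument.
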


\begin{proof}
This follows immediately from the fact that the cost function $\c$ is invariant under permutations and the fact that all marginals of $\pi^*$ are the same.
\end{proof}

\section{Examples and Numerical experiments}\label{sec:Examples}
Through this section, the cost $c$ is as in \textbf{Example} \ref{ex : infty_OT cost}. This cost has been widely used in adversarial learning literature and distributional robust optimization literature. Examples in this section illuminate how our general framework of generalized barycenter and MOT finds applications in practice.

\subsection{Recovery of the binary case}
Consider the binary case $K=2$. Our goal is to show that our results recover the result in \cite{Trillos2020AdversarialCN}. 

Let $z_1,z_2 \in \Z_*$. If both $z_1$ and $z_2$ are $\mathghost$, then $\c(z_1, z_2)=0$. If only one of them is $\mathghost$, then the cost is $\frac{1}{2}$. Finally, consider the case where $z_1, z_2 \not = \mathghost$. First assume that $i_1 = i_2=1$. In that case,
\begin{equation*}
    \widehat{\mu}_{\vec z} = \frac{1}{2} \delta_{(x_1,1)} + \frac{1}{2} \delta_{(x_2,1)}.
\end{equation*}
Since only class $1$ is represented in this configuration, there is no meaningful adversarial attack in this case, and thus,
\begin{equation*}
    B^*_{ \widehat{\mu}_{\vec z}} = 1.
\end{equation*}
Assume now that $i_1=1$ and $i_2=2$. In that case,
\begin{equation*}
    \widehat{\mu}_{\vec z} = \frac{1}{2}\widehat{\mu}_1 + \frac{1}{2} \widehat{\mu}_2 = \frac{1}{2}\delta_{(x_1,1)} + \frac{1}{2} \delta_{(x_2,2)},
\end{equation*}
and the adversary can attack meaningfully if and only if $d(x_1, x_2) \leq 2\veps$. Thus,
\begin{equation*}
    B^*_{\widehat{\mu}_{\vec z}} =\left\{ \begin{array}{ll}
\frac{1}{2} & \textrm{if $d(x_1, x_2) \leq 2\veps$,}\\
1 & \textrm{if $d(x_1, x_2) > 2\veps$.}
\end{array} \right. 
\end{equation*}
To summarize,
\begin{equation*}
     \c(z_1, z_2)=\left\{ \begin{array}{ll}
\frac{1}{2} & \textrm{if $i_1 \neq i_2$ and $d(x_1, x_2) \leq 2\veps$,}\\
1 & \textrm{if $i_1=i_2$ or $d(x_1, x_2) > 2\veps$,}\\
\frac{1}{2} & \textrm{if exactly one of $z_i$'s is $\mathghost$,}\\
0 & \textrm{if $z_1=z_2=\mathghost$.}
\end{array} \right. 
\end{equation*}

In \cite{Trillos2020AdversarialCN}, it is proved that
\begin{equation*}
    B^*_{\mu} = \inf_{\tilde \pi \in \Gamma(\mu, \mu)} \int_{\mathcal{Z} \times \mathcal{Z}}  \Big(\frac{\text{cost}_{\veps}(z_1, z_2) + 1}{2} \Big)  d \tilde\pi(z_1, z_2),
\end{equation*}
where 
\begin{equation*}
    \text{cost}_{\veps}(z_1, z_2) =\left\{ \begin{array}{ll}
0 & \textrm{if $i_1 \neq i_2$ and $d(x_1, x_2) \leq 2\veps$,}\\
1 & \textrm{if $i_1=i_2$ or $d(x_1, x_2) > 2\veps$.}\\
\end{array} \right. 
\end{equation*}
In other words, in the binary case, it is unnecessary to introduce the element $\mathghost$. To illustrate this point, assume for simplicity that $\mu(\Z)=1$. Notice that every $\tilde \pi \in \Gamma(\mu, \mu)$ induces a $ \pi \in \Pi_2(\mu)$ as follows:
\[ \int_{\Z_* \times \Z_*} \varphi(z_1, z_2) d \pi(z_1,z_2):=  \frac{1}{2}\int_{\Z \times \Z} \varphi(z_1, z_2) d\tilde \pi(z_1, z_2) + \frac{1}{2}\varphi(\mathghost,\mathghost),  \]
where $\varphi : \Z_* \times \Z_* \rightarrow \R $ is an arbitrary test function. The cost associated to the induced $\pi$ is:
\[ 2 \int_{\Z_* \times \Z_*} \c(z_1, z_2 )d\pi(z_1, z_2) =  \int_{\Z \times \Z} \c(z_1, z_2) d\tilde \pi(z_1, z_2)  =  \int_{\Z \times \Z}  \Big(\frac{\text{cost}_{\veps}(z_1, z_2) + 1}{2} \Big) d\tilde \pi(z_1, z_2). \]

On the other hand, let $\pi$ be a solution for the MOT problem \eqref{Robust problem:Alternative_form} (such a solution exists thanks to \textbf{Proposition} \ref{prop:MOTNoGap}). Thanks to \textbf{Corollary} \ref{cor:Permutations}, we can assume without loss of generality that
\[ \pi(A \times A') = \pi(A' \times A),  \]
for all $A,A'$ measurable subsets of $\Z_*$. We now define $\tilde{\pi}$ according to:
\begin{align*}
 \int_{\Z \times \Z} \tilde{\varphi}(z_1, z_2) d \tilde \pi(z_1, z_2) &:= 2\int_{\Z \times \Z} \tilde \varphi(z_1, z_2) d \pi(z_1, z_2)\\
 &\quad\,\,+  \int_{\Z \times \{ \mathghost\}} \tilde \varphi(z_1, z_1) d \pi(z_1, z_2) +  \int_{\{ \mathghost\} \times \Z   } \tilde \varphi(z_2, z_2) d \pi(z_1, z_2),   
 \end{align*}
for test functions $\tilde \varphi : \Z \times \Z \rightarrow \R$. It follows that $\tilde \pi \in \Gamma(\mu, \mu)$. Moreover, from the above formula and the expressions for the cost $\c$ we get
\[ \int_{\Z \times \Z} \Big(\frac{\text{cost}_{\veps}(z_1, z_2) + 1}{2} \Big) d\tilde \pi(z_1, z_2) =  \int_{\Z \times \Z} \c(z_1, z_2) d\tilde \pi(z_1, z_2) = 2 \int_{\Z_* \times \Z_*} \c(z_1, z_2) d\pi(z_1, z_2).  \]

The above computations show that our results indeed recover those from \cite{Trillos2020AdversarialCN} for the binary case.

% One can heuristically guess why $\mathghost$ is redundant in the binary case: suppose there are $z_1 \in \spt(\mu_1)$ and $z_2 \in \spt (\mu_2)$ such that $z_1$ and $z_2$ are paired with $\mathghost$, respectively, by a coupling $\pi$. Then, the cost that the adversary should pay is $c_{\veps}(z_1, \mathghost) +c_{\veps}(\mathghost, z_2) = \frac{1}{2} + \frac{1}{2} = 1$. On the other hand, if $z_1$ and $z_2$ are paired, then the cost is $c_{\veps}(z_1, z_2) \leq 1$. This means that pairing $z_1$ and $z_2$ instead of using $\mathghost$ is a dominant strategy for the adversary.

\subsection{Toy example: three points distribution}\label{ex : toy_example}

Let's assume that $K=3$ and $\mu$ is such that 
\begin{equation*}
    \mu_1 = \omega_1 \delta_{x_1}, \,\,\, \mu_2 = \omega_2 \delta_{x_2}, \,\,\, \mu_3= \omega_3 \delta_{x_3},
\end{equation*}
for three points $x_1,x_2, x_3$ in Euclidean space. Without loss of generality, assume further that $\omega_1 \geq \omega_2 \geq \omega_3 > 0$ and $\sum \omega_i = 1$. Let $\veps > 0$ be given and consider the cost from \textbf{Example} \ref{ex : infty_OT cost} with $d$ as the Euclidean distance (for simplicity). We will explicitly construct an optimal robust classifier and an optimal adversarial attack for this problem. Even in this simple scenario, one can observe non-trivial situations.

Since for every $\widetilde{\mu}_i$ such that $W_{\infty}(\omega_i \delta_{x_i}, \widetilde{\mu}_i) \leq \veps$ we have
\begin{equation*}
    \int_{\mathcal{X}} f_i(x_i) d\widetilde{\mu}_i(x_i) = \int_{\overline{B}(x_i, \veps)} f_i(x_i)  d\widetilde{\mu}_i(x_i),
\end{equation*}
where $\overline{B}(x, r) = \{x' : d(x, x') \leq r\}$, we can assume without loss of generality that $\spt (\widetilde{\mu}_i) \subseteq \overline{B}(x_i, \veps)$. Notice that it is sufficient to consider $f \in \mathcal{F}$ restricted to $\overline{B}(x_1, \veps) \cup \overline{B}(x_2, \veps) \cup \overline{B}(x_3, \veps)$ (in fact, problem \eqref{Robust problem:Intro} can not disambiguate the values of $f$ outside of this set). We consider $4$ non-trivial configurations and one trivial one. Figure \ref{plot: toy_example} below illustrates how the adversary perturbs the original data distribution in each of the non-trivial cases. 

\textbf{Case 1.} $d(x_i, x_j) > 2\veps$ for all $1 \leq i \neq j \leq 3$. This is a trivial case. We claim that for any $\widetilde{\mu}_i$ such that $W_{\infty}(\omega_i\delta_{x_i}, \widetilde{\mu}_i) \leq \veps$, $((\mathds{1}_{\overline{B}(x_1, \veps)}, \mathds{1}_{\overline{B}(x_2, \veps)}, \mathds{1}_{\overline{B}(x_3, \veps)}), (\widetilde{\mu}_1, \widetilde{\mu}_2, \widetilde{\mu}_3) )$ is a saddle point for \eqref{Robust problem:Intro}. This is straightforward, since $ \spt (\widetilde{\mu}_i) \cap \spt (\widetilde{\mu}_j)  = \emptyset$, and thus it can be deduced that $(\mathds{1}_{\overline{B}(x_1, \veps)}, \mathds{1}_{\overline{B}(x_2, \veps)}, \mathds{1}_{\overline{B}(x_3, \veps)})$ is a dominant strategy for the learner. It is easy to check that $B^*_{\mu} = 1$ in this case.

\textbf{Case 2.} There is some $\overline{x}$ such that $d(\overline{x}, x_i) \leq \veps$ for all $1 \leq i \leq 3$. We claim that $( (1,0,0), (\omega_1 \delta_{\overline{x}}, \omega_2 \delta_{\overline{x}}, \omega_3 \delta_{\overline{x}}) )$ is a saddle point. First, $\omega_i\delta_{\overline{x}}$ is feasible for all $1 \leq i \leq 3$, since $\overline{x} \in \overline{B}(x_i, \veps)$ for all $i$. Now, given $(\omega_1 \delta_{\overline{x}}, \omega_2 \delta_{\overline{x}}, \omega_3 \delta_{\overline{x}})$, the best strategy for the learner is to choose class $1$ deterministically for all points, since $\omega_1 \geq \omega_2 \geq \omega_3$. On the other hand, given $(1,0,0)$, any adversarial attack yields the same classification power. From this we conclude that $( (1,0,0), (\omega_1 \delta_{\overline{x}}, \omega_2 \delta_{\overline{x}}, \omega_3 \delta_{\overline{x}}) )$ is indeed a saddle point. Notice that $B^*_{\mu}=\omega_1$ in this case.

\textbf{Case 3.} Two points are close to each other while the other point is far from them. For simplicity, we only consider the case $d(x_1, x_2) \leq 2\veps$, $d(x_1, x_3) > 2\veps$ and $d(x_2, x_3) > 2\veps$. The other cases are treated similarly. Let $\overline{x}_{12} =  \frac{x_1 + x_2}{2}$, and define $\widehat{f} = (\mathds{1}_{\overline{B}(x_1, \veps) \cup \overline{B}(x_2, \veps)}, 0, \mathds{1}_{\overline{B}(x_3, \veps)} )$ and $\widehat{\mu} = (\omega_1\delta_{\overline{x}_{12}}, \omega_2 \delta_{\overline{x}_{12}}, \widetilde{\mu}_3)$ for arbitrary $\widetilde{\mu}_3$ with $W_\infty(\widetilde{\mu}_3, \omega_3\delta_{x_3} ) \leq \veps$. We claim that $(\widehat{f}, \widehat{\mu})$ is a saddle point. For any $(f_1, f_2, f_3) \in \mathcal{F}$ we have
\begin{align*}
    B_{\mu}(f, \widehat{\mu})&= \int_{\mathcal{X}} f_1(x) \omega_1 \delta_{\overline{x}_{12}}(x) + \int_{\mathcal{X}} f_2(x) \omega_2 \delta_{\overline{x}_{12}}(x) + \int_{\mathcal{X}} f_3(x)  d\widetilde{\mu}_3(x)\\
    &= \omega_1 f_1(\overline{x}_{12}) + \omega_2 f_2(\overline{x}_{12}) + \int_{\mathcal{X}} f_3(x)  \widetilde{\mu}_3(x)\\
    &\leq \omega_1 + \omega_3\\
    &= \int_{\mathcal{X}} \mathds{1}_{\overline{B}(x_1, \veps) \cup \overline{B}(x_2, \veps)} \omega_1 \delta_{\overline{x}_{12}}(x) + \int_{\mathcal{X}} 0 \, \omega_2 \delta_{\overline{x}_{12}}(x) + \int_{\mathcal{X}} \mathds{1}_{\overline{B}(x_3, \veps)} d\widetilde{\mu}_3(x).
\end{align*}
On the other hand, given $(\mathds{1}_{\overline{B}(x_1, \veps) \cup \overline{B}(x_2, \veps)}, 0, \mathds{1}_{\overline{B}(x_3, \veps)} )$, for any $(\widetilde{\mu}_1, \widetilde{\mu}_2, \widetilde{\mu}_3)$,
\begin{align*}
    B_{\mu}(\widehat{f}, \widetilde{\mu}) &= \int_{\mathcal{X}} \mathds{1}_{\overline{B}(x_1, \veps) \cup \overline{B}(x_2, \veps)}  d\widetilde{\mu}_1(x) + \int_{\mathcal{X}} 0 \,  d\widetilde{\mu}_2(x) + \int_{\mathcal{X}} \mathds{1}_{\overline{B}(x_3, \veps)}  d\widetilde{\mu}_3(x)\\
    &= \omega_1 + \omega_3\\
    &= B_{\mu}(\widehat{f}, \widehat{\mu})
\end{align*}
where the second equality follows from the assumption on the configuration of points. The above computations imply the claim. In this case $B^*_{\mu} = \omega_1 + \omega_3$.

\textbf{Case 4.} $d(x_i, x_j) \leq 2\veps$ for any $x_i, x_j$ but $\overline{B}(x_1, \veps) \cap \overline{B}(x_2, \veps) \cap \overline{B}(x_3, \veps) = \emptyset$. Note that when $K=2$, $d(x_1, x_2) \leq 2\veps$ if and only if $\overline{B}(x_1, \veps) \cap \overline{B}(x_2, \veps) \neq \emptyset$. However, when $K \geq 3$, these cases are not equivalent anymore. There are two subcases to consider depending on the magnitudes of the weights $(\omega_1, \omega_2, \omega_3)$.

\textbf{Case 4 - (i)} $\omega_1 < \omega_2 + \omega_3$. In this case, we can find some $\alpha_i \in [0, \omega_i]$ for all $1 \leq i \leq 3$ such that
\begin{equation*}
    \alpha_1 = \omega_2 - \alpha_2, \,\,\, \alpha_2 = \omega_3 - \alpha_3 \text{ and } \alpha_3 = \omega_1 - \alpha_1.
\end{equation*}
Precisely,
\begin{equation*}
    \alpha_1  = \frac{\omega_1 + \omega_2 - \omega_3}{2}, \,\,\,  \alpha_2 = \frac{\omega_2 + \omega_3 - \omega_1}{2}, \text{ and } \alpha_3 = \frac{\omega_3 + \omega_1 - \omega_2}{2}.
\end{equation*}
Note that for all $i$, $\alpha_i \geq 0$ since $\omega_1 \leq \omega_2 + \omega_3$. Let $\overline{x}_{12} \in \overline{B}(x_1, \veps) \cap \overline{B}(x_2, \veps)$, $\overline{x}_{13} \in \overline{B}(x_1, \veps) \cap \overline{B}(x_3, \veps)$ and $\overline{x}_{23} \in \overline{B}(x_2, \veps) \cap \overline{B}(x_3, \veps)$. Construct the following measures
\begin{align*}
    \widehat{\mu}_1 &:=  \big(\alpha_1 \delta_{\overline{x}_{12}} + (\omega_1 - \alpha_1) \delta_{\overline{x}_{13}} \big) = \big( (\frac{\omega_1 + \omega_2 - \omega_3}{2}) \delta_{\overline{x}_{12}} + (\frac{\omega_1 - \omega_2 + \omega_3}{2}) \delta_{\overline{x}_{13}} \big),\\
    \widehat{\mu}_2 &:= \big(\alpha_2 \delta_{\overline{x}_{23}} + (\omega_2 - \alpha_2) \delta_{\overline{x}_{12}} \big) =  \big( (\frac{\omega_2 + \omega_3 - \omega_1}{2}) \delta_{\overline{x}_{23}} + (\frac{\omega_2 - \omega_3 + \omega_1 }{2}) \delta_{\overline{x}_{12}} \big)  ,\\
    \widehat{\mu}_3 &:= \big(\alpha_3 \delta_{\overline{x}_{13}} + (\omega_3 - \alpha_3) \delta_{\overline{x}_{23}} \big) = \big( (\frac{\omega_3 + \omega_1 - \omega_2}{2}) \delta_{\overline{x}_{13}} + (\frac{\omega_3 - \omega_1 + \omega_2}{2}) \delta_{\overline{x}_{23}} \big).
\end{align*}
Observe that at each $\overline{x}_{ij}$, $\widehat{\mu}_i$ and $\widehat{\mu}_j$ put the same mass: it is natural since, otherwise, the learner will choose a class which puts more mass at $\overline{x}_{ij}$. So, this gives a hint about what would be the best adversarial attack. The adversary gathers classes as much as possible and distributes them as uniform as possible.

Let $A_{ij}= A_{ji} := \overline{B}(x_i, \veps) \cap \overline{B}(x_j, \veps)$ and $A_i = \overline{B}(x_i, \veps) \setminus (A_{ij} \cup A_{ik})$. One can observe that since $d(x_i, x_j) \leq 2\veps$ for any $x_i, x_j$ but $\overline{B}(x_1, \veps) \cap \overline{B}(x_2, \veps) \cap \overline{B}(x_3, \veps) = \emptyset$, $\overline{B}(x_i, \veps) = A_{ij} \dot{\cup} A_{ik} \dot{\cup} A_i$ for each $i$. Here $\dot{\cup}$ denotes a disjoint union. Also, since $W_{\infty}(\widetilde{\mu}_i, \omega_i \delta_{x_i}) \leq \veps$, it must be the case that $A_{ij} \cap \spt (\widetilde{\mu}_k) = \emptyset$ if $k \neq i,j$. For each $1 \leq i \leq 3$, construct the following weak partition: 
\begin{equation*}
     \widehat{f}_i(x):=\left\{ \begin{array}{ll}
1 & \textrm{if $x\in A_i$,}\\
\frac{1}{2} & \textrm{if $x\in A_{ij}$,}\\
0 & \textrm{if $x \notin \overline{B}(x_i, \veps)$.}
\end{array} \right. 
\end{equation*}
$\widehat{f}$ is a weak partition since $\overline{B}(x_i, \veps) = A_{ij} \dot{\cup} A_{ik} \dot{\cup} A_i$ and $\overline{B}(x_1, \veps) \cap \overline{B}(x_2, \veps) \cap \overline{B}(x_3, \veps) = \emptyset$. We claim that $(\widehat{f}, \widehat{\mu})$ is a saddle point. Note that a straightforward computation yields $B_{\mu}(\widehat{f}, \widehat{\mu}) = \frac{1}{2}$.

Given $(\widehat{\mu}_1, \widehat{\mu}_2, \widehat{\mu}_3)$, for any $(f_1, f_2, f_3) \in \mathcal{F}$,
\begin{align*}
    B_{\mu}(f, \widehat{\mu}) &= \int_{\mathcal{X}} f_1(x)  d\widehat{\mu}_1(x) + \int_{\mathcal{X}} f_2(x)  d\widehat{\mu}_2(x) + \int_{\mathcal{X}} f_3(x)  d\widehat{\mu}_3(x)\\
    &= (\frac{\omega_1 + \omega_2 - \omega_3}{2})f_1(\overline{x}_{12}) + (\frac{\omega_1 + \omega_3 - \omega_2}{2})f_1(\overline{x}_{13}) + (\frac{\omega_2 + \omega_3 -\omega_1 }{2}) f_2(\overline{x}_{23})\\
    &\quad + (\frac{\omega_1 + \omega_2 - \omega_3}{2}) f_2(\overline{x}_{12}) + (\frac{\omega_1 + \omega_3 - \omega_2}{2}) f_3(\overline{x}_{13}) + (\frac{\omega_2 + \omega_3 -\omega_1}{2}) f_3(\overline{x}_{23})\\
    &= (\frac{\omega_1 + \omega_2 - \omega_3}{2}) (f_1(\overline{x}_{12}) + f_2(\overline{x}_{12})) + (\frac{\omega_1 + \omega_3 - \omega_2}{2}) (f_1(\overline{x}_{13}) + f_3(\overline{x}_{13}))\\
    &\quad + (\frac{\omega_2 + \omega_3 -\omega_1 }{2}) (f_2(\overline{x}_{23}) + f_3(\overline{x}_{23}))\\
    &\leq (\frac{\omega_1 + \omega_2 - \omega_3}{2}) + (\frac{\omega_1 + \omega_3 - \omega_2}{2}) + (\frac{\omega_2 + \omega_3 -\omega_1 }{2})\\
    & = \frac{1}{2},
\end{align*}
where the second to last inequality follows from the fact that $\sum f_i(x) \leq 1$ and the last equality follows from the fact that $\sum \omega_i =1$. Given $(\widehat{f}_1, \widehat{f}_2, \widehat{f}_3)$, on the other hand, for any $(\widetilde{\mu}_1, \widetilde{\mu}_2, \widetilde{\mu}_3)$
\begin{align*}
    B_{\mu}(\widehat{f}, \widetilde{\mu}) &= \int_{\mathcal{X}} \widehat{f}_1(x) d\widetilde{\mu}_1(x) + \int_{\mathcal{X}} \widehat{f}_2(x)  d\widetilde{\mu}_2(x) + \int_{\mathcal{X}}\widehat{f}_3(x)  d\widetilde{\mu}_3(x)\\
    &= \frac{ \widetilde{\mu}_1 (A_{12}) +  \widetilde{\mu}_2 (A_{12})}{2} + \frac{ \widetilde{\mu}_1 (A_{13}) +  \widetilde{\mu}_3 (A_{13})}{2} + \frac{ \widetilde{\mu}_2 (A_{23}) +  \widetilde{\mu}_3 (A_{23})}{2}\\
    &\quad+  \widetilde{\mu}_1 (A_1 ) +   \widetilde{\mu}_2 (A_2 ) +  \widetilde{\mu}_3 (A_3).
\end{align*}
Note that since $W_{\infty}(\widetilde{\mu}_i, \omega_i \delta_{x_i}) \leq \veps$, $\spt (\widetilde{\mu}_i) \cap A_j = \emptyset$ for any $\widetilde{\mu}_i$ and for any $i \neq j$. To minimize the above, the adversary should put $\spt (\widetilde{\mu}_i) \subseteq A_{ij} \cup A_{ik}$ for all $i$. Also, at the minimum, it must be the case that $ \widetilde{\mu}_i(A_{ij}) =  \widetilde{\mu}_j(A_{ij})$, otherwise the adversary would be able decrease the classification power further. Combining all arguments, we can deduce
\begin{equation*}
    B_{\mu}(\widetilde{f}, \widetilde{\mu}) \geq \frac{ \widetilde{\mu}_1 (A_{12}) + \widetilde{\mu}_2 (A_{12})}{2} + \frac{\widetilde{\mu}_1 (A_{13}) + \widetilde{\mu}_3 (A_{13})}{2} + \frac{\widetilde{\mu}_2 (A_{23}) + \widetilde{\mu}_3 (A_{23})}{2}= \frac{1}{2},
\end{equation*}
which verifies the claim. In this case, $B^*_{\mu} = \frac{1}{2}$.

In fact, it is unavoidable to introduce weak partitions $f \in \mathcal{F}$. Let $f=(\mathds{1}_{F_1}, \mathds{1}_{F_2}, \mathds{1}_{F_3})$ be any strong partition, i.e. $F_1 \dot{\cup} F_2 \dot{\cup} F_3 = \cup \overline{B}(x_i, \veps)$. We will show that for any $\widetilde{\mu}$, $(f, \widetilde{\mu})$ cannot be a saddle point. Assume that $\overline{B}(x_1, \veps) \subseteq F_1$. Since $d(x_1, x_2)\leq 2\veps$ and $d(x_1, x_3) \leq 2\veps$, it must be the case that $F_1 \cap \overline{B}(x_2, \veps) \neq \emptyset$ and $F_1 \cap \overline{B}(x_3, \veps) \neq \emptyset$. These facts yield that optimal $\widetilde{\mu}_2$ and $\widetilde{\mu}_3$ for the adversary must satisfy $\spt (\widetilde{\mu}_2) \subseteq F_1 \cap \overline{B}(x_2, \veps)$ and $\spt (\widetilde{\mu}_3) \subseteq F_1 \cap \overline{B}(x_3, \veps)$. This configuration gives a classifying power $\omega_1$ since the learner can only detect class $1$ perfectly and always misclassifies others.

However, given any such $(\widetilde{\mu}_1, \widetilde{\mu}_2, \widetilde{\mu}_3)$, the learner has an incentive to modify a classifying rule. Let $F'_1 := F_1 \setminus (\spt (\widetilde{\mu}_2) \cup \spt (\widetilde{\mu}_3))$, $F'_2 := F_2 \cup \spt (\widetilde{\mu}_2)$ and $F'_3 := F_3 \cup \spt( \widetilde{\mu}_3)$. Then, this classifying rule perfectly classifies. Precisely, there exists a deviation for the learner, $f' = (\mathds{1}_{F'_1}, \mathds{1}_{F'_2}, \mathds{1}_{F'_3})$, such that
\begin{equation*}
    1 = B(f', \widetilde{\mu}) > B(f, \widetilde{\mu}) = \omega_1.
\end{equation*}

Assume that $\overline{B}(x_1, \veps) \not \subseteq F_1$. Since $(F_1, F_2, F_3)$ is a partition, it must be the case that either $F_2 \cap \overline{B}(x_1, \veps) \neq \emptyset$ or $F_3 \cap \overline{B}(x_1, \veps) \neq \emptyset$. Without loss of generality, assume the former case only. The other cases are analogous. $F_2 \cap \overline{B}(x_1, \veps) \neq \emptyset$ yields that an optimal $\widetilde{\mu}_1$ for the adversary must satisfy $\spt (\widetilde{\mu}_1) \subseteq F_2$. Then, a corresponding classifying power is at most $\omega_2 + \omega_3$ since the learner always misclassifies class $1$.

However, given any such $(\widetilde{\mu}_1, \widetilde{\mu}_2, \widetilde{\mu}_3)$, the learner has an incentive to modify a classifying rule again. Let $F'_1 := F_1 \cup \spt (\widetilde{\mu}_1)$, $F'_2 := F_2 \setminus \spt (\widetilde{\mu}_1)$ and $F'_3 := F_3$. Similar as above, letting $f' = (\mathds{1}_{F'_1}, \mathds{1}_{F'_2}, \mathds{1}_{F'_3})$, such that
\begin{equation*}
    1 = B(f', \widetilde{\mu}) > \omega_2 + \omega_3 \geq B(f, \widetilde{\mu}).
\end{equation*}
Therefore, any strong partition $f=(\mathds{1}_{F_1}, \mathds{1}_{F_2}, \mathds{1}_{F_3})$ cannot sustain a saddle point in this case. 

We want to emphasize that the same reasoning still holds for other cases. In other words, even this simple discrete measures, it is necessary to extend strong partition to weak partition in order to achieve the minimax value.

\textbf{Case 4 - (ii)} $\omega_1 \geq \omega_2 + \omega_3$. In this case, no matter how the adversary perturbs the distribution, there will always be an excess mass from class $1$ that won't be matched to other classes. Motivated by this observation, let $\kappa = \omega_1 - (\omega_2 + \omega_3) \geq 0$ and consider the following measures $(\widehat{\mu}_1, \widehat{\mu}_2, \widehat{\mu}_3)$:
\begin{align*}
     \widehat{\mu}_1 &= \omega_2 \delta_{\overline{x}_{12}} + \omega_3 \delta_{\overline{x}_{13}} + \kappa \delta_{x_1},\\
     \widehat{\mu}_2 &= \omega_2 \delta_{\overline{x}_{12}},\\
     \widehat{\mu}_3 &= \omega_3 \delta_{\overline{x}_{13}}.
\end{align*}
Consider $(\widehat{f}_1,\widehat{f}_2, \widehat{f}_3)=(1,0,0)$. We claim that $(\widehat{f}, \widehat{\mu}) = ( (\widehat{f}_1, \widehat{f}_2, \widehat{f}_3), (\widehat{\mu}_1, \widehat{\mu}_2, \widehat{\mu}_3) )$ is a saddle point. Note that a straightforward computation yields $B_{\mu}(\widehat{f}, \widehat{\mu}) = \omega_1$.

For any $(f_1, f_2, f_3) \in \mathcal{F}$,
\begin{align*}
    B_{\mu}(f, \widehat{\mu})&= \int_{\mathcal{X}} f_1(x)  d\widehat{\mu}_1(x) + \int_{\mathcal{X}} f_2(x)  d\widehat{\mu}_2(x) + \int_{\mathcal{X}} f_3(x)  d\widehat{\mu}_3(x)\\
    &= \omega_2 f_1(\overline{x}_{12}) + \omega_3 f_1(\overline{x}_{13}) + \kappa f_1(x_1) + \omega_2 f_2(\overline{x}_{12}) + \omega_3 f_3(\overline{x}_{13})\\
    &= \omega_2 (f_1(\overline{x}_{12}) + f_2(\overline{x}_{12})) + \omega_3 (f_1(\overline{x}_{13}) + f_3(\overline{x}_{13})) + \kappa f_1(x_1)\\
    & \leq \omega_2 + \omega_3 + \kappa\\
    & = \omega_1.
\end{align*}
On the other hand, for any feasible $(\widetilde{\mu}_1, \widetilde{\mu}_2, \widetilde{\mu}_3)$,
\begin{equation*}
    B_{\mu}(\widehat{f}, \widetilde{\mu}) = \int_{\mathcal{X}} \widehat{f}_1(x) d\widetilde{\mu}_1(x) + \int_{\mathcal{X}} \widehat{f}_2(x)  d\widetilde{\mu}_2(x) \int_{\mathcal{X}} \widehat{f}_3(x)  d\widetilde{\mu}_3(x) = \omega_1.
\end{equation*}
The claim follows. In this case, $B^*_{\mu} = \omega_1$. Here, $\omega_1 \geq \frac{1}{2}$, since $\omega_1 \geq \omega_2 + \omega_3$ and $\sum \omega_i=1$. In the case that $\omega_1=\omega_2+\omega_3$, both \textbf{Case 4 -(i)} and \textbf{Case 4 -(ii)} provide $B^*_{\mu}=\frac{1}{2}$, which shows the consistency. 

We now show that the adversary has no incentive to use the point $\overline{x}_{23}$, in contrast to what  happens in \textbf{Case 4 -(i)}. Fix a small $\eta > 0$, and suppose that the adversary moves $\eta$ mass from each of $\omega_2 \delta_{x_2}$ and $\omega_3 \delta_{x_3}$ to the point $\overline{x}_{23}$, respectively. Construct corresponding measures:
\begin{align*}
     \widetilde{\mu}_1 &= (\omega_2 - \eta)  \delta_{\overline{x}_{12}} + (\omega_3 - \eta) \delta_{\overline{x}_{13}} + \kappa' \delta_{x_1},\\
     \widetilde{\mu}_2 &= \eta \delta_{\overline{x}_{23}} + (\omega_2 - \eta)  \delta_{\overline{x}_{12}},\\
     \widetilde{\mu}_3 &=  (\omega_3 - \eta) \delta_{\overline{x}_{13}} + \eta \delta_{\overline{x}_{23}}
\end{align*}
where $\kappa' = \omega_1 - (\omega_2 + \omega_3 - 2\eta) = \kappa + 2\eta$. We show that $\widetilde{\mu}$ can not be a solution to the adversarial problem by showing that the learner can select a strategy $\widetilde{f}$ for which
\[ B_\mu(\widetilde{f}, \widetilde{\mu}) > \omega_1.\]
Indeed, we can select $\widetilde{f} := (\mathds{1}_{\overline{B}(x_1,\veps)}, 0, \mathds{1}_{\X \setminus {\overline{B}(x_1,\veps)}  } )$. It follows that
\begin{align*}
   B_\mu(\widetilde{f} , \widetilde{\mu}) & =  \int_{\mathcal{X}} \widetilde{f}_1(x)  d\widetilde{\mu}_1(x) + \int_{\mathcal{X}} \widetilde{f}_2(x)  d\widetilde{\mu}_2(x) + \int_{\mathcal{X}} \widetilde{f}_3(x)  d\widetilde{\mu}_3(x) 
   \\& = (\omega_2 - \eta) + (\omega_3 -\eta) +\kappa'  + \eta = \omega_1 +\ \eta >\omega_1.
\end{align*}
Notice that while the geometry of points $x_1, x_2, x_3$ in \textbf{case 4 -(i)} and \textbf{case 4 -(ii)} is the same, the geometries of the corresponding optimal adversarial attacks are determined by the full distribution $\mu$ and not just by the geometry of its support. In fact, the optimal adversarial attacks $\widetilde{\mu}$ and the optimal barycenter measure $\lambda$ depend on not only the geometry of the support of $\mu$ but also the magnitudes of its marginals, $\mu_i$'s.

\begin{figure}[h]
	\includegraphics[scale=0.35]{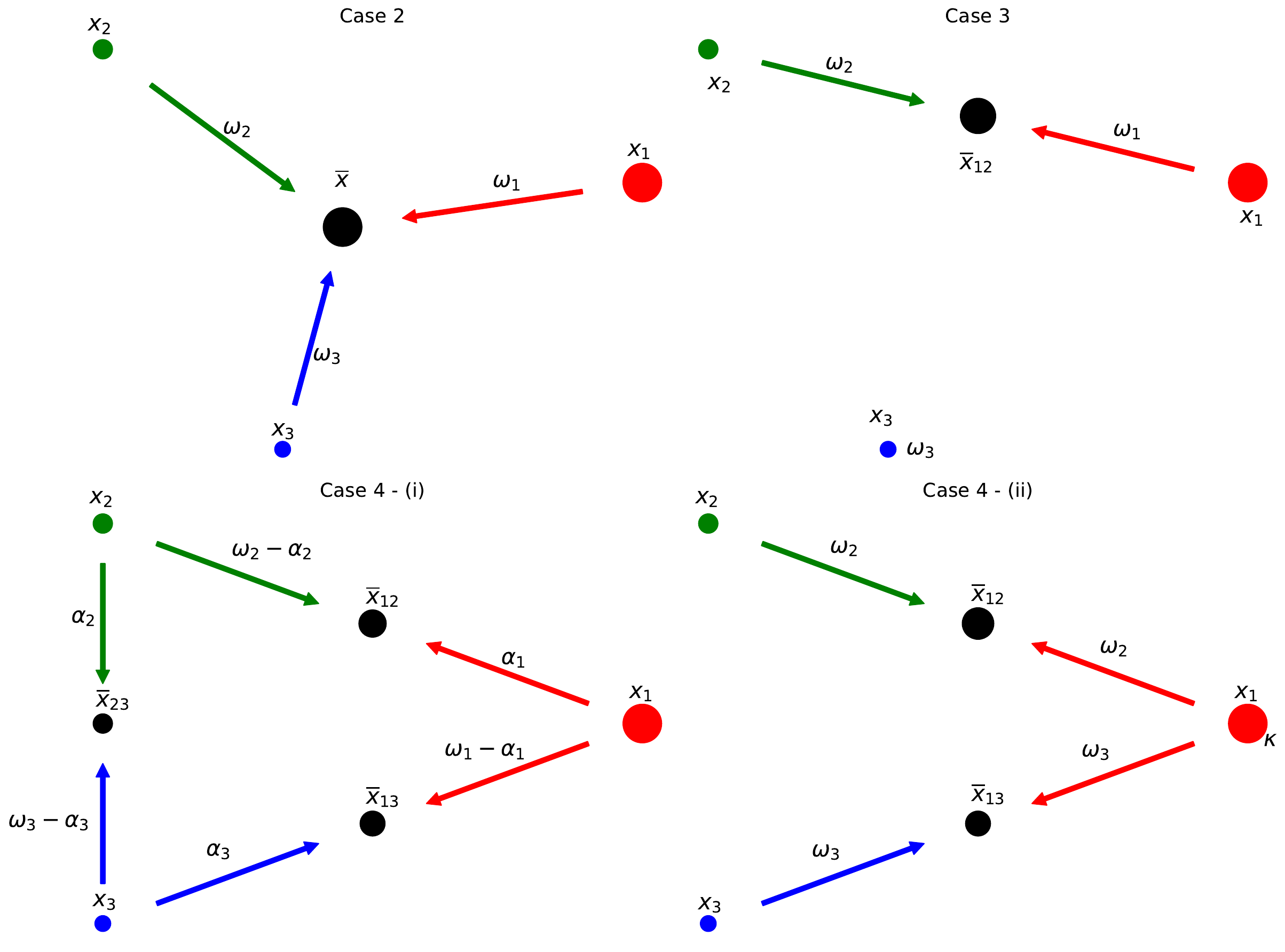}
	\caption{Illustrations of the adversarial attacks in all cases from section \ref{ex : toy_example}. Weights on arrows indicate the amount of mass the adversary moves to a perturbed point. $\overline{x}$'s are the support of $\lambda$ in \eqref{eq:generalized_barycenter}. One observes that the support of $\lambda$ depends on both the geometry of data distributions and their magnitudes.}
	\label{plot: toy_example}
\end{figure}

\subsection{Numerical Experiments}\label{subsec : numerical expeirment}

\begin{figure}[h]
	\includegraphics[scale=0.35]{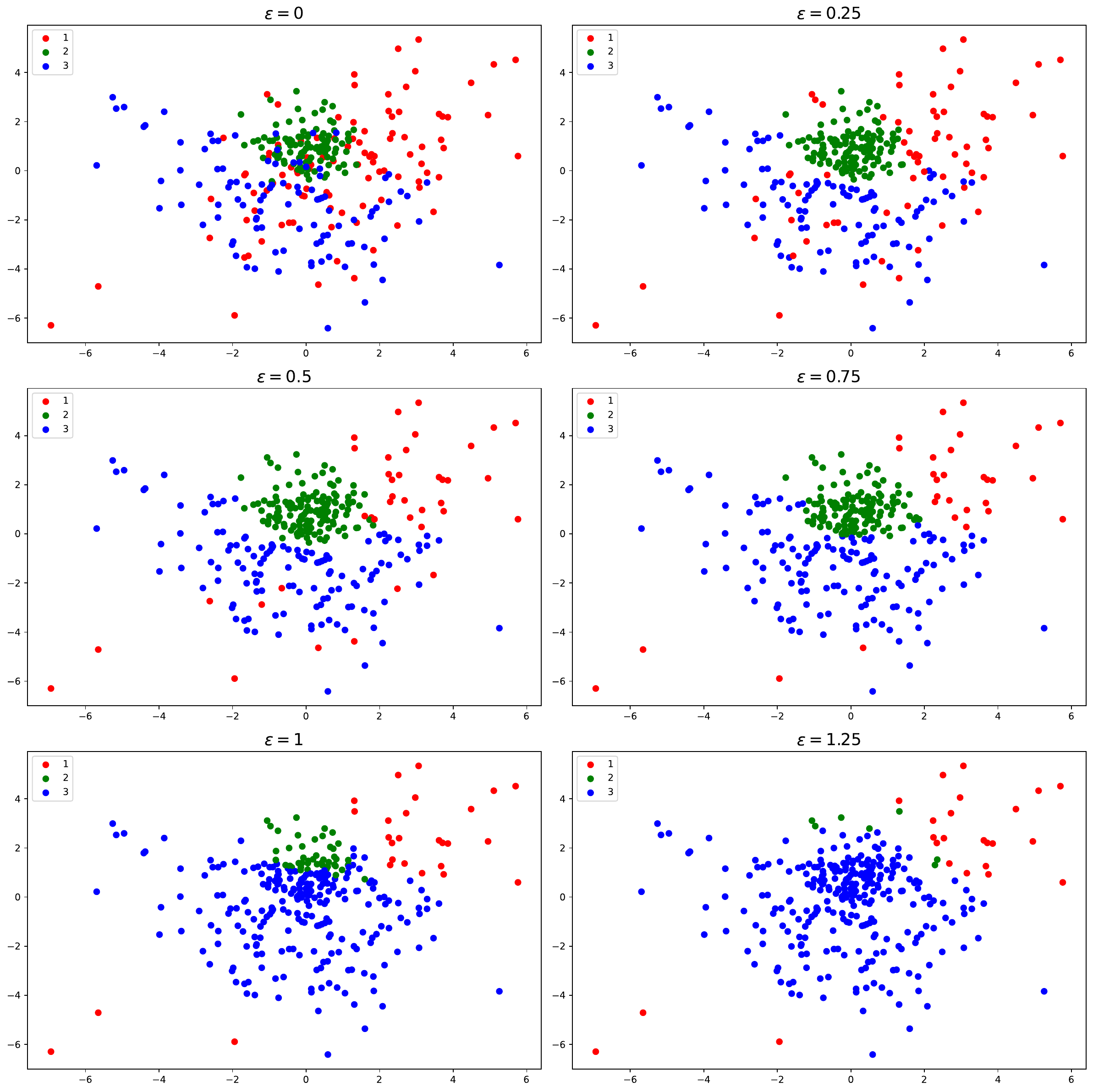}
	\caption{Three Gaussians in $\mathbb{R}^2$. One can observe that as $\veps$ grows the robust classifying rule becomes simpler, as expected.}
	\label{plot: Gaussian_example}
\end{figure}

\begin{figure}[h]
	\includegraphics[scale=0.3]{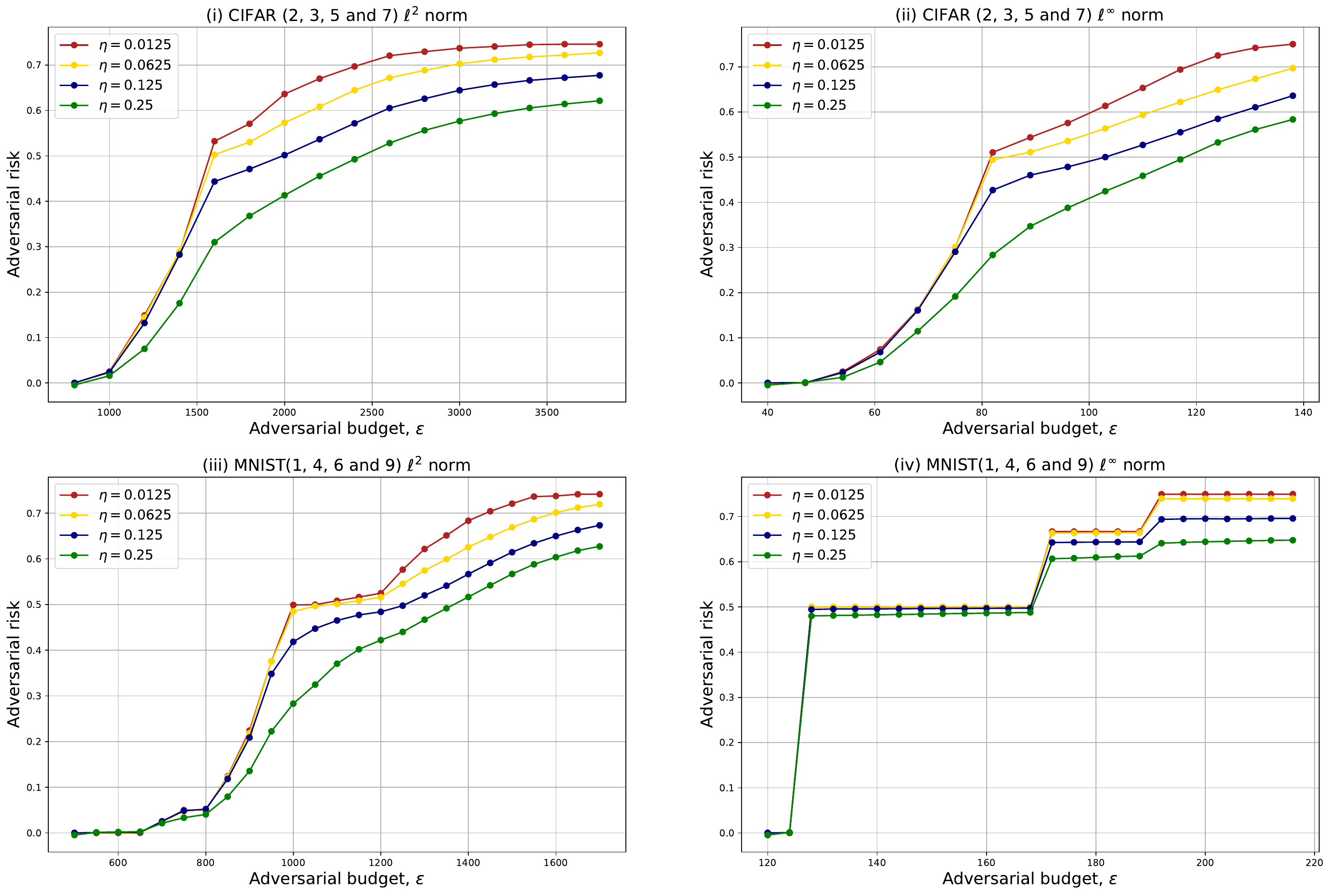}
	\caption{Adversarial risks \eqref{Robust problem:Intro} computed using the multimarginal Sinkhorn algorithm. $\eta$ is the entropic regularization parameter of the Sinkhorn algorithm. The maximum adversarial risk in all cases is $0.75$ because we consider $4$ classes and an equal number of points in each class. Due to the entropic penalty, the multimarginal Sinkhorn algorithm always gives an upper bound for the optimal classification power $B_\mu^*$, hence gives a lower bound for the adversarial risk $R^*_\mu$.}
	\label{plot: cifar_mnist}
\end{figure}

In this section we illustrate our theoretical results numerically. We obtain robust classifiers for synthetic data sets and compute optimal adversarial risks for two popular real data sets: MNIST and CIFAR. 

From the perspective of numeric, our aim is to solve the  MOT problem \eqref{Robust problem:Alternative_form} and its dual for an empirical measure $\mu$ whose support consists of $n$ data points. We use Sinkhorn algorithm for concreteness. Introduced in \cite{cuturi2013sinkhorn}, Sinkhorn algorithm has been one of the central algorithmic tools in computational optimal transport in the past decade. This algorithm, originally introduced in the context of standard ($2$-marginal) optimal transport problems, was extended to MOTs in \cite{benamou2015iterative, benamou2019generalized}. Works that study the computational complexity of generic MOT problems include: \cite{lin2022complexity, tupitsa2020multimarginal, haasler2021multimarginal, carlier2021linear}. In particular, \cite{lin2022complexity} and \cite{tupitsa2020multimarginal} prove the complexity of MOT Sinkhorn algorithm to be $\widetilde{O}(K^3 n^K \epsilon^{ -2})$ and $\widetilde{O}(K^3 n^{K+1} \epsilon^{ -1})$, respectively, where $\epsilon$ is the error tolerance. 

In our first illustration, we consider a data set $(x_1, y_1), \dots(x_n, y_n)$ in $\R^2 \times \{1,2,3\} $ obtained by sampling $y_i$ uniformly from $\{1,2,3 \}$ and then $x_i$ from a certain Gaussian distribution with parameters depending on the outcome of $y_i$. We consider the cost $c=c_\veps$ from \textbf{Example} \ref{ex : infty_OT cost} with $d$ the Euclidean distance in $\R^2$ and different values of $\veps$. In Figure \ref{plot: Gaussian_example} we show the labels assigned to the data by the (approximate) robust classifier, which we computed using \textbf{Corollary} \ref{cor:OptimalPair} for the dual potentials $\phi_j$ generated by the MOT Sinkhorn algorithm.

In our second illustration, we use the mutlimarginal version of Sinkhorn algorithm to compute the adversarial risk $R^*_\mu$ (i.e. the optimal value of \eqref{Robust problem:Intro}) for $\mu$ an empirical measure supported on a subset of either the CIFAR or MNIST data sets. In both cases we consider samples belonging to one of four possible classes in order to decrease the computational complexity of the problem. We use the cost $c$ from \textbf{Example} \ref{ex : infty_OT cost} for different values of $\veps$ and two choices of $d$: the Euclidean distance $\ell^2$ and the $\ell^\infty$ distance. The results are shown in Figure \ref{plot: cifar_mnist}. We can observe that for the CIFAR data set the two distance functions behave similarly: while not the same, the plots exhibit a similar qualitative behavior. For the MNIST data set, on the other hand, the situation is markedly different: in contrast to the plot for the $\ell^2$ distance, the adversarial risk with $\ell^{\infty}$ distance varies dramatically as $\veps$ grows. This observation is consistent with the findings in \cite{Pydi2021AdversarialRV} for the binary case. 
% Notice that this plot is consistent with certain geometric structure of the MNIST data set. Indeed,roughly speaking, each class in the MNISt data set is concentrated at a corner of a hypercube. 
% The required adversarial budget for merging all $4$ classes is about $190$ which is close to the $\ell^{\infty}$ distance between any pair of classes in MNIST.

We emphasize that Figure \ref{plot: cifar_mnist} only provides approximations of the true adversarial risk $R^*_\mu$. Indeed, recall that $R^*_{\mu} = 1 - B^*_{\mu}$. Approximating $B^*_{\mu}$ using the MOT Sinkhorn algorithm will always produce an upper bound for $B^*_{\mu}$ since the regularization term effectively restricts the solution space of \eqref{Robust problem:Alternative_form}. Thus, the multimarginal Sinkhorn algorithm always yields a lower bound for the true $R^*_{\mu}$. Of course, one can always compute a tighter lower bound by reducing the regularization parameter $\eta$ at the expense of increasing the computational burden.

As way of conclusion for this section we provide pointers to the literature discussing the computational complexity of the Wasserstein barycenter problem; Wasserstein barycenter problems are specific instances in the MOT family. On the one hand, \cite{altschuler2022wasserstein} prove certain computational hardness of the barycenter problem in the dimension of the feature space (here $\X$). On the other hand, \cite{altschuler2021wasserstein} present an algorithm that can get an approximate solution of the optimal barycenter in polynomial time for a fixed dimension of the feature space. While our MOT is not the standard barycenter problem, it is still a generalized version thereof, and thus, it is reasonable to expect that the structure of our problem can be used in the design of algorithms that perform better than off-the-shelf MOT solvers. We leave this task for future work.

\section{Conclusions and future directions}
\label{sec:Conclusions}

In this paper we have discussed a series of equivalent formulations of adversarial problems in the context of multiclass classification. These formulations take the form of problems in optimal transport, specifically, multimarginal optimal transport and (generalized) Wasserstein barycenter problem. Besides providing a novel connection between apparently unrelated fields, we have also discussed a series of theoretical and computational implications emanating from these equivalences. In what follows we briefly expand this discussion, while at the same time provide a few perspectives on future work.

First, it is of interest to design scalable algorithms for solving the MOT problem \eqref{Robust problem:Alternative_form}. In general, MOT problems are not scalable in the number of marginals of the problem. However, this may not necessarily be an issue for our MOT problem, since it possesses a special structure that, as we discussed throughout section \ref{sec:GenBar}, allows us to interpret the desired MOT problem as a generalized barycenter problem; barycenter problems, at least in their standard version, are known to scale much better than general MOT problems. Tailor-specific algorithms for our MOT problem can take advantage of the favourable geometric structure of a given data set. Indeed, if a data set is such that there is only a small number of classes (much smaller than $K$) that interact with each other at the scale implicitly specified by the cost $c$ (think of \textbf{Example} \ref{ex : infty_OT cost}), then the effective size of problem \eqref{Robust problem:Alternative_form} will be considerably smaller than the size of the worst case setting —see the reformulation \eqref{eq:multimarginal_decomposed}.

Second, it would be of interest to use \eqref{Robust problem:Intro} to help in the training of robust classifiers within specific families of models. Notice that \eqref{Robust problem:Intro} is model free from the perspective of the learner, but in applications practitioners may be interested in solving a problem like:
\begin{align*}
  \inf_{f \in \mathcal{G}} \sup_{\widetilde{\mu} \in \mathcal{P}(\Z) }  \left\{ R(f, \widetilde{\mu}) - C(\mu, \widetilde{\mu}) \right\},
\end{align*}
which differs from \eqref{Robust problem:Intro} in the family of classifiers $\mathcal{G}$, which may be strictly smaller than $\mathcal{F}$; for example, $\mathcal{G}$ could be a family of neural networks, kernel-based classifiers, or other popular (parametric) models. There are two ways in which problem \eqref{Robust problem:Intro} is still meaningful for the above model-specific problem: 1) the optimal $\widetilde{\mu}^*$ computed from the problem \eqref{Robust problem:Intro} can be used as a way to generate adversarial examples that could be used during training of the desired model; 2) the optimal value of \eqref{Robust problem:Intro} can serve as a benchmark for robust training within \textit{any} family of models.

Third, more theoretical understanding of optimal dual potentials and robust classifiers is required. As stated in \textbf{Corollary} \ref{cor:OptimalPair}, an optimal robust classifier can be obtained from solutions to \eqref{eqn:DualMOT}, or, equivalently, from \eqref{eq:mot_decomposed_dual} and \eqref{eq:mot_dual}. However, unless $c$ is continuous, even the existence of Borel measurable dual potentials is not guaranteed, and hence neither is the existence of optimal robust classifiers. At this stage, it is thus necessary to \textit{assume} that the classifier intorduced in \textbf{Corollary} \ref{cor:OptimalPair} is Borel measurable. This measurability issue, i.e., that a robust classifier may not be Borel measurable, has been discussed not only in the adversarial training community \cite{pydi2021the, awasthi2021existence, awasthi2021extended, frank2022existence, frank2022consistency}, but also more generally in the \textit{distributional robust optimization} community, e.g., \cite{MR3959085}. In general, at this point only the existence of \textit{universally} measurable robust classifier $f^*$ can be guaranteed. Whether there exist Borel measurable robust classifiers for discontinuous costs (like the one in Example \eqref{ex : infty_OT cost}) is a question that we hope to explore in future work.

Finally, it is of interest to investigate the geometric content that profiles like the ones presented in Figure \ref{plot: cifar_mnist} carry about a specific data set. As illustrated in Figure \ref{plot: cifar_mnist}, these curves are specific signatures (adversarial signatures) of a given data distribution, and thus, they may be potentially used to capture similarities and discrepancies between different data sets.

The above are just but a few directions currently under investigation that emanate from this work.

\vskip 0.2in
\bibliography{references.bib}

\newcommand{\etalchar}[1]{$^{#1}$}
\providecommand{\bysame}{\leavevmode\hbox to3em{\hrulefill}\thinspace}
\providecommand{\MR}{\relax\ifhmode\unskip\space\fi MR }
% \MRhref is called by the amsart/book/proc definition of \MR.
\providecommand{\MRhref}[2]{%
  \href{http://www.ams.org/mathscinet-getitem?mr=#1}{#2}
}
\providecommand{\href}[2]{#2}
\begin{thebibliography}{CDPDM15}

\bibitem[ABA21]{altschuler2021wasserstein}
Jason~M Altschuler and Enric Boix-Adsera, \emph{Wasserstein barycenters can be
  computed in polynomial time in fixed dimension.}, J. Mach. Learn. Res.
  \textbf{22} (2021), 44--1.

\bibitem[ABA22]{altschuler2022wasserstein}
Jason~M Altschuler and Enric Boix-Adser{\`a}, \emph{Wasserstein barycenters are
  np-hard to compute}, SIAM Journal on Mathematics of Data Science \textbf{4}
  (2022), no.~1, 179--203.

\bibitem[AC11]{MR2801182}
Martial Agueh and Guillaume Carlier, \emph{Barycenters in the {W}asserstein
  space}, SIAM J. Math. Anal. \textbf{43} (2011), no.~2, 904--924. \MR{2801182}

\bibitem[AFM21a]{awasthi2021existence}
Pranjal Awasthi, Natalie Frank, and Mehryar Mohri, \emph{On the existence of
  the adversarial bayes classifier}, Advances in Neural Information Processing
  Systems \textbf{34} (2021), 2978--2990.

\bibitem[AFM21b]{awasthi2021extended}
Pranjal Awasthi, Natalie~S Frank, and Mehryar Mohri, \emph{On the existence of
  the adversarial bayes classifier (extended version)}, arXiv preprint
  arXiv:2112.01694 (2021).

\bibitem[BCC{\etalchar{+}}15]{benamou2015iterative}
Jean-David Benamou, Guillaume Carlier, Marco Cuturi, Luca Nenna, and Gabriel
  Peyr{\'e}, \emph{Iterative bregman projections for regularized transportation
  problems}, SIAM Journal on Scientific Computing \textbf{37} (2015), no.~2,
  A1111--A1138.

\bibitem[BCM19]{Bhagoji2019LowerBO}
Arjun~Nitin Bhagoji, Daniel Cullina, and Prateek Mittal, \emph{Lower bounds on
  adversarial robustness from optimal transport}, Advances in Neural
  Information Processing Systems (H.~Wallach, H.~Larochelle, A.~Beygelzimer,
  F.~Alch\'{e}-Buc, E.~Fox, and R.~Garnett, eds.), vol.~32, Curran Associates,
  Inc., 2019.

\bibitem[BCN19]{benamou2019generalized}
Jean-David Benamou, Guillaume Carlier, and Luca Nenna, \emph{Generalized
  incompressible flows, multi-marginal transport and sinkhorn algorithm},
  Numerische Mathematik \textbf{142} (2019), no.~1, 33--54.

\bibitem[BDPGG12]{buttazzo2012optimal}
Giuseppe Buttazzo, Luigi De~Pascale, and Paola Gori-Giorgi,
  \emph{Optimal-transport formulation of electronic density-functional theory},
  Physical Review A \textbf{85} (2012), no.~6, 062502.

\bibitem[BGB{\etalchar{+}}20]{bose2020adversarial}
Joey Bose, Gauthier Gidel, Hugo Berard, Andre Cianflone, Pascal Vincent, Simon
  Lacoste-Julien, and Will Hamilton, \emph{Adversarial example games}, Advances
  in neural information processing systems \textbf{33} (2020), 8921--8934.

\bibitem[BKM19]{MR4015639}
Jose Blanchet, Yang Kang, and Karthyek Murthy, \emph{Robust {W}asserstein
  profile inference and applications to machine learning}, J. Appl. Probab.
  \textbf{56} (2019), no.~3, 830--857. \MR{4015639}

\bibitem[BM19]{MR3959085}
Jose Blanchet and Karthyek Murthy, \emph{Quantifying distributional model risk
  via optimal transport}, Math. Oper. Res. \textbf{44} (2019), no.~2, 565--600.
  \MR{3959085}

\bibitem[BvLNS21]{beier2021unbalanced}
Florian Beier, Johannes von Lindheim, Sebastian Neumayer, and Gabriele Steidl,
  \emph{Unbalanced multi-marginal optimal transport}, arXiv preprint
  arXiv:2103.10854 (2021).

\bibitem[Car22]{carlier2021linear}
Guillaume Carlier, \emph{On the linear convergence of the multimarginal
  sinkhorn algorithm}, SIAM Journal on Optimization \textbf{32} (2022), no.~2,
  786--794.

\bibitem[CCK{\etalchar{+}}18]{choi2018stargan}
Yunjey Choi, Minje Choi, Munyoung Kim, Jung-Woo Ha, Sunghun Kim, and Jaegul
  Choo, \emph{Stargan: Unified generative adversarial networks for multi-domain
  image-to-image translation}, Proceedings of the IEEE conference on computer
  vision and pattern recognition, 2018, pp.~8789--8797.

\bibitem[CD14]{cuturi2014fast}
Marco Cuturi and Arnaud Doucet, \emph{Fast computation of wasserstein
  barycenters}, International conference on machine learning, PMLR, 2014,
  pp.~685--693.

\bibitem[CDPDM15]{MR3314838}
Maria Colombo, Luigi De~Pascale, and Simone Di~Marino, \emph{Multimarginal
  optimal transport maps for one-dimensional repulsive costs}, Canad. J. Math.
  \textbf{67} (2015), no.~2, 350--368. \MR{3314838}

\bibitem[CE10]{Carlier2010MatchingFT}
G.~Carlier and I.~Ekeland, \emph{Matching for teams}, Economic Theory
  \textbf{42} (2010), 397--418.

\bibitem[CFK13]{MR3020313}
Codina Cotar, Gero Friesecke, and Claudia Kl\"{u}ppelberg, \emph{Density
  functional theory and optimal transportation with {C}oulomb cost}, Comm. Pure
  Appl. Math. \textbf{66} (2013), no.~4, 548--599. \MR{3020313}

\bibitem[CMN10]{MR2564439}
Pierre-Andr\'{e} Chiappori, Robert~J. McCann, and Lars~P. Nesheim,
  \emph{Hedonic price equilibria, stable matching, and optimal transport:
  equivalence, topology, and uniqueness}, Econom. Theory \textbf{42} (2010),
  no.~2, 317--354. \MR{2564439}

\bibitem[CMP17]{MR3716857}
Pierre-Andr\'{e} Chiappori, Robert~J. McCann, and Brendan Pass, \emph{Multi- to
  one-dimensional optimal transport}, Comm. Pure Appl. Math. \textbf{70}
  (2017), no.~12, 2405--2444. \MR{3716857}

\bibitem[CMZ{\etalchar{+}}19]{cao2019multi}
Jiezhang Cao, Langyuan Mo, Yifan Zhang, Kui Jia, Chunhua Shen, and Mingkui Tan,
  \emph{Multi-marginal wasserstein gan}, Advances in Neural Information
  Processing Systems \textbf{32} (2019), 1776--1786.

\bibitem[COO15]{MR3423268}
Guillaume Carlier, Adam Oberman, and Edouard Oudet, \emph{Numerical methods for
  matching for teams and {W}asserstein barycenters}, ESAIM Math. Model. Numer.
  Anal. \textbf{49} (2015), no.~6, 1621--1642. \MR{3423268}

\bibitem[Cut13]{cuturi2013sinkhorn}
Marco Cuturi, \emph{Sinkhorn distances: Lightspeed computation of optimal
  transport}, Advances in Neural Information Processing Systems (C.J. Burges,
  L.~Bottou, M.~Welling, Z.~Ghahramani, and K.Q. Weinberger, eds.), vol.~26,
  Curran Associates, Inc., 2013.

\bibitem[DD20]{delon2020wasserstein}
Julie Delon and Agn{\`e}s Desolneux, \emph{A wasserstein-type distance in the
  space of gaussian mixture models}, SIAM Journal on Imaging Sciences
  \textbf{13} (2020), no.~2, 936--970.

\bibitem[DMG20]{di2020optimal}
Simone Di~Marino and Augusto Gerolin, \emph{An optimal transport approach for
  the schr{\"o}dinger bridge problem and convergence of sinkhorn algorithm},
  Journal of Scientific Computing \textbf{85} (2020), no.~2, 1--28.

\bibitem[Eke05]{MR2110613}
Ivar Ekeland, \emph{An optimal matching problem}, ESAIM Control Optim. Calc.
  Var. \textbf{11} (2005), no.~1, 57--71. \MR{2110613}

\bibitem[FNW22]{frank2022consistency}
Natalie~S Frank and Jonathan Niles-Weed, \emph{The consistency of adversarial
  training for binary classification}, arXiv preprint arXiv:2206.09099 (2022).

\bibitem[Fra22]{frank2022existence}
Natalie~S Frank, \emph{Existence and minimax theorems for adversarial surrogate
  risks in binary classification}, arXiv preprint arXiv:2206.09098 (2022).

\bibitem[GM22]{Trillos2020AdversarialCN}
Nicol{\'a}s {Garc{\'i}a Trillos} and Ryan~W. Murray, \emph{Adversarial
  classification: Necessary conditions and geometric flows}, Journal of Machine
  Learning Research \textbf{23} (2022), no.~187, 1--38.

\bibitem[GS98]{Gangbo1998OptimalMF}
Wilfrid Gangbo and Andrzej \'{S}wi{e}ch, \emph{Optimal maps for the
  multidimensional monge-kantorovich problem}, Communications on Pure and
  Applied Mathematics \textbf{51} (1998), 23--45.

\bibitem[HRCK21]{haasler2021multimarginal}
Isabel Haasler, Axel Ringh, Yongxin Chen, and Johan Karlsson,
  \emph{Multimarginal optimal transport with a tree-structured cost and the
  schr\"{o}dinger bridge problem}, SIAM Journal on Control and Optimization
  \textbf{59} (2021), no.~4, 2428--2453.

\bibitem[KP13]{Kim2013MultimarginalOT}
Y.~Kim and Young-Heon~Brendan Pass, \emph{Multi-marginal optimal transport on
  riemannian manifolds}, American Journal of Mathematics \textbf{137} (2013),
  1045 -- 1060.

\bibitem[KP15]{MR3482267}
Jun Kitagawa and Brendan Pass, \emph{The multi-marginal optimal partial
  transport problem}, Forum Math. Sigma \textbf{3} (2015), Paper No. e17, 28.
  \MR{3482267}

\bibitem[LHCJ22]{lin2022complexity}
Tianyi Lin, Nhat Ho, Marco Cuturi, and Michael~I Jordan, \emph{On the
  complexity of approximating multimarginal optimal transport}, Journal of
  Machine Learning Research \textbf{23} (2022), no.~65, 1--43.

\bibitem[ML13]{mendl2013kantorovich}
Christian~B Mendl and Lin Lin, \emph{Kantorovich dual solution for strictly
  correlated electrons in atoms and molecules}, Physical Review B \textbf{87}
  (2013), no.~12, 125106.

\bibitem[MSP{\etalchar{+}}21]{Meunier2021MixedNE}
Laurent Meunier, Meyer Scetbon, Rafael~B Pinot, Jamal Atif, and Yann
  Chevaleyre, \emph{Mixed nash equilibria in the adversarial examples game},
  Proceedings of the 38th International Conference on Machine Learning (Marina
  Meila and Tong Zhang, eds.), Proceedings of Machine Learning Research, vol.
  139, PMLR, 18--24 Jul 2021, pp.~7677--7687.

\bibitem[Nak19]{Nakkiran2019AdversarialRM}
Preetum Nakkiran, \emph{Adversarial robustness may be at odds with simplicity},
  ArXiv \textbf{abs/1901.00532} (2019).

\bibitem[Pas15]{MR3423275}
Brendan Pass, \emph{Multi-marginal optimal transport: theory and applications},
  ESAIM Math. Model. Numer. Anal. \textbf{49} (2015), no.~6, 1771--1790.
  \MR{3423275}

\bibitem[PJ21a]{Pydi2021AdversarialRV}
Muni~Sreenivas Pydi and Varun Jog, \emph{Adversarial risk via optimal transport
  and optimal couplings}, IEEE Transactions on Information Theory \textbf{67}
  (2021), 6031--6052.

\bibitem[PJ21b]{pydi2021the}
\bysame, \emph{The many faces of adversarial risk}, Thirty-Fifth Conference on
  Neural Information Processing Systems, 2021.

\bibitem[SGGS07]{seidl2007strictly}
Michael Seidl, Paola Gori-Giorgi, and Andreas Savin, \emph{Strictly correlated
  electrons in density-functional theory: A general formulation with
  applications to spherical densities}, Physical Review A \textbf{75} (2007),
  no.~4, 042511.

\bibitem[SLD18]{srivastava2018scalable}
Sanvesh Srivastava, Cheng Li, and David~B Dunson, \emph{Scalable bayes via
  barycenter in wasserstein space}, The Journal of Machine Learning Research
  \textbf{19} (2018), no.~1, 312--346.

\bibitem[TDGU20]{tupitsa2020multimarginal}
Nazarii Tupitsa, Pavel Dvurechensky, Alexander Gasnikov, and C{\'e}sar~A Uribe,
  \emph{Multimarginal optimal transport by accelerated alternating
  minimization}, 2020 59th IEEE Conference on Decision and Control (CDC), IEEE,
  2020, pp.~6132--6137.

\bibitem[TJK23]{trillos2023multimarginal}
Nicol{\'a}s~Garc{\i}a Trillos, Matt Jacobs, and Jakwang Kim, \emph{The
  multimarginal optimal transport formulation of adversarial multiclass
  classification}, Journal of Machine Learning Research \textbf{24} (2023),
  no.~45, 1--56.

\bibitem[Vil03]{MR1964483}
C\'{e}dric Villani, \emph{Topics in optimal transportation}, Graduate Studies
  in Mathematics, vol.~58, American Mathematical Society, Providence, RI, 2003.
  \MR{1964483}

\bibitem[Vil09]{MR2459454}
\bysame, \emph{Optimal transport}, Grundlehren der mathematischen
  Wissenschaften [Fundamental Principles of Mathematical Sciences], vol. 338,
  Springer-Verlag, Berlin, 2009, Old and new. \MR{2459454}

\end{thebibliography}
\bibliographystyle{amsalpha}
\end{document}